\documentclass{article}
\usepackage{fullpage}
\usepackage[dvipsnames]{xcolor}  %

\usepackage{times}
\usepackage{natbib}

\usepackage{microtype}
\usepackage{graphicx}
\usepackage{booktabs} %

\usepackage{caption}
\usepackage{subcaption}
\usepackage[toc,page,header]{appendix}
\usepackage{minitoc}
\usepackage{tikz}
\usepackage{threeparttable}

\usepackage{hyperref}

\hypersetup{
    colorlinks = true,
    citecolor = blue!60!black,
}

\usepackage{amsmath}
\usepackage{amssymb}
\usepackage{mathtools}
\usepackage{amsthm}
\usepackage{nicefrac}
\usepackage{wrapfig}
\usepackage{colortbl}

\usepackage[capitalize,noabbrev]{cleveref}
\usepackage{soul}
\theoremstyle{plain}
\newtheorem{theorem}{Theorem}[section]
\newtheorem{proposition}[theorem]{Proposition}
\newtheorem{lemma}[theorem]{Lemma}
\newtheorem{corollary}[theorem]{Corollary}
\theoremstyle{definition}
\newtheorem{definition}[theorem]{Definition}
\newtheorem{assumption}[theorem]{Assumption}
\theoremstyle{remark}

\usepackage{enumerate}
\usepackage{diagbox}
\usepackage{multirow}
\usepackage{paralist}
\usepackage{comment}
\usepackage{sidecap}

\newcommand{\mytitle}{It's an Alignment, Not a Trade-off: \\Revisiting Bias and Variance in Deep Models}

\usepackage{amsmath}
\usepackage{amssymb}
\usepackage{amsthm}
\usepackage{xcolor}
\usepackage{bm}
\usepackage{enumitem}
\usepackage{algorithmic}  
\usepackage{silence}
\WarningFilter[pdftoc]{hyperref}{Token not allowed in a PDF string}

\renewcommand{\P}{\mathbb{P}}

\newcommand{\E}{\mathbb{E}}
\newcommand{\Var}{\text{\rm Var}}

\newcommand{\eps}{\varepsilon}

\DeclareMathOperator*{\argmax}{arg\,max}

\theoremstyle{definition}

\newtheoremstyle{myremark} %
    {\topsep}                    %
    {\topsep}                    %
    {\rm}                        %
    {}                           %
    {\bf}                        %
    {.}                          %
    {.5em}                       %
    {}  %

\theoremstyle{myremark}

\def\cX{{\mathcal X}}

\def\cM{{\mathcal M}}

\def\reals{{\mathbb R}}

\usepackage[mathscr]{eucal}

\renewcommand{\Pr}{\mathbb{P}}

\renewcommand{\paragraph}[1]{\vspace{0.0em}\noindent\textbf{#1}}

\def\0{\textbf{0}}
\def\1{\textbf{1}}

\def\cX{\mathcal{X}}
\def\cY{\mathcal{Y}}

\renewcommand{\mathbf}{\boldsymbol}

\renewcommand{\P}{\mathbb{P}}

\newcommand{\softmax}{\operatorname{softmax}}
\newcommand{\unif}{\operatorname{Unif}}
\newcommand{\gumbel}{\operatorname{Gumbel}}
\newcommand{\Exp}{\operatorname{Exp}}
\newcommand{\F}{\operatorname{F}}
\newcommand{\Beta}{\operatorname{Beta}}

\DeclareMathOperator{\conf}{conf}
\DeclareMathOperator{\acc}{acc}
\DeclareMathOperator{\pred}{pred}
\DeclareMathOperator{\uncertainty}{Unce}
\DeclarePairedDelimiterX{\kl}[2]{D_{\mathrm{KL}}(}{)}{%
  #1\;\delimsize\|\;#2%
}
\newcommand{\Bias}{\operatorname{Bias}}
\newcommand{\Variance}{\operatorname{Vari}}
\newcommand{\CCE}{\operatorname{CCE}_{\Pr}^{\{\Sigma_i\}_{i\in [K]}}}
\newcommand{\CACE}{\operatorname{CACE}}
\newcommand{\ECE}{\operatorname{ECE}_{\Pr}^{\{\Sigma_i\}_{i\in [K]}}}
\newcommand{\CWCE}{\operatorname{CWCE}_{\Pr}^{\{\Sigma_i\}_{i\in [K]}}}
\DeclareMathOperator{\terr}{TestErr}
\DeclareMathOperator{\dis}{Dis}
\DeclareMathOperator{\ris}{Risk}

\newcommand{\B}{\operatorname{Bias}_{h_{\theta},(X,Y)}^2}
\newcommand{\Vari}{\operatorname{Vari}_{h_{\theta},(X,Y)}}
\newcommand{\BVG}{\operatorname{BVG}_{h_{\theta},(X,Y)}}
\newcommand{\ebias}{\beta_{h_{\theta},(X,Y)}}
\newcommand{\evari}{\varsigma_{h_{\theta},(X,Y)}}

\newcommand\figaddtitle{}
\def\figaddtitle(#1,#2){%
  \begin{tikzpicture}[every node/.style={inner sep=0,outer sep=0}]
    \node (f) at (0, 0) {#1} ;
    \node [font=\scriptsize,yshift=2mm,align=center] (l) at (f.north) {#2};
\end{tikzpicture}%
}
\date{}
\author{Lin Chen\thanks{Google Research. E-mail: \href{mailto:linche@google.com}{linche@google.com}.
}\and
Michal Lukasik\thanks{Google Research. E-mail: \href{mailto:mlukasik@google.com}{mlukasik@google.com}.
}\and 
Wittawat Jitkrittum\thanks{Google Research. E-mail: \href{mailto:wittawat@google.com}{wittawat@google.com}.
}\and
Chong You\thanks{Google Research. E-mail: \href{mailto:cyou@google.com}{cyou@google.com}.
}\and Sanjiv Kumar\thanks{Google Research. E-mail: \href{mailto:sanjivk@google.com}{sanjivk@google.com}.
}
}

\begin{document}

\title{\mytitle}

\maketitle

\begin{abstract}
Classical wisdom in machine learning holds that the generalization error can be decomposed into bias and variance, and these two terms exhibit a \emph{trade-off}. 
However, in this paper, we show that for an ensemble of deep learning based classification models, bias and variance are \emph{aligned} at a sample level, where squared bias is approximately \emph{equal} to variance for correctly classified sample points. 
We present empirical evidence confirming this phenomenon in a variety of deep learning models and datasets. 
Moreover, we study this phenomenon from two theoretical perspectives: calibration and neural collapse. 
We first show theoretically that under the assumption that the models are well calibrated, we can observe the bias-variance alignment. 
Second, starting from the picture provided by the neural collapse theory, we show an approximate correlation between bias and variance.

\end{abstract}

\doparttoc %
\faketableofcontents %

\section{Introduction}\label{sec:intro}

The concepts of \emph{bias} and \emph{variance}, obtained from decomposing the generalization error, are of fundamental importance in machine learning. Classical wisdom suggests that there is a trade-off between bias and variance: models of low capacity have high bias and low variance, while models of high capacity have low bias and high variance. 
This understanding served as an important guiding principle for developing generalizable machine learning models, suggesting that they should be neither too large nor too small \citep{bishop2006}. Recently, a line of research found that deep models defy this classical wisdom \citep{belkin2019reconciling}: their variance curves exhibit a unimodal shape that first increases with model size, then decreases beyond the point that the models can perfectly fit the training data \citep{neal2018modern,yang2020rethinking}. 
While the unimodal variance curve explains why over-parameterized deep models generalize well, there is still a lack of understanding on why it occurs.

\begin{figure*}[bht]
\centering  

\begin{subfigure}{0.5\linewidth}
    \centering
    \includegraphics[width=\linewidth]{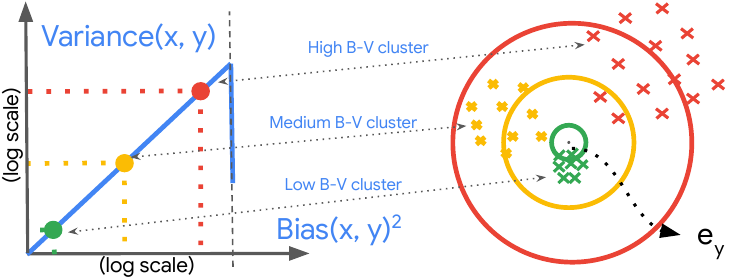}
    \caption{{\small Alignment in the B-V (left) and output (right) spaces.}}
    \label{fig:BV_conceptual_1} %
\end{subfigure}
\hfill
\begin{subfigure}{0.39\linewidth}
    \centering
    \includegraphics[width=0.71\linewidth]{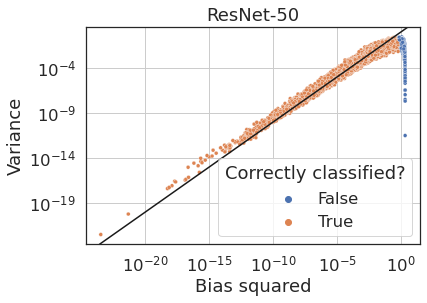}
    \caption{{\small Alignment in the B-V space: ImageNet.}}
    \label{fig:BV_ResNet50_IMNET_log}
\end{subfigure}
\vspace{-0.5em}
\caption{The bias-variance alignment phenomenon. \emph{(a)} Given an input $x$ and its associated label $y$, bias-variance alignment refers to the phenomenon that the bias and variance of a deep model satisfy $\log \Bias^2_{h_\theta,(x, y)} \approx \log \Variance_{h_\theta,(x, y)}$ for correctly classified points, as illustrated as the dashed line (see the left subfigure). In the right subfigure, each cross represents a prediction of the model ensemble $\{h_\theta\}$ on a sample $x$, and the center is the one-hot encoding $e_y$ of the corresponding label $y$. Green, yellow, and red colored clusters of crosses correspond to the three groups also shown in the left subplot, with small, medium, and large bias, respectively. Bias-variance alignment implies that the three groups have small, medium, and large variance, respectively. \emph{(b)} Measuring bias and variance for ResNet-50 trained on ImageNet, where each dot corresponds to a test sample {and colored according to whether the sample is correctly classified by the model or not}. Bias and variance are estimated from 20 independently trained networks with different initial weights and over different bootstrap samples from the train set, following methodology from \cite{neal2018modern}. %
}
\label{fig:BV_Overview}
\end{figure*}

This paper revisits the study of bias and variance to understand their behavior in deep models. We perform a \emph{per-sample} measurement of bias and variance in popular deep classification models. 
Our study reveals a curious phenomenon, which is radically different from the classical tradeoff perspective on bias-variance, while is concordant with more recent works \citep{belkin2019reconciling,hastie2022surprises,mei2022generalization}. 
Given a sample $x$ and its corresponding label $y$  from a dataset of test examples $\{(x_i, y_i)\}_{i\in [n]}$, let $\Bias_{h_\theta,(x, y)}$ and $\Variance_{h_\theta,(x, y)}$ be the bias and variance, respectively, of an ensemble of deep models $\{h_\theta\}$.
{Here, the randomness in calculating the bias and variance comes from $\theta$, which depends on the randomness in parameter initialization, batching and sampling of the training data \citep{neal2018modern,yang2020rethinking}.}

\begin{table*}[t]
    \centering
    \renewcommand{\arraystretch}{1.25}
    
    \resizebox{\linewidth}{!}{%
    \begin{threeparttable}
    \begin{tabular}{l|l|l|l|l|l}
        \toprule
        \toprule
        {\textbf{Type}} & \textbf{Assumptions} & {\textbf{Finding (logarithmic scale)}} & {\textbf{Finding (linear scale)}} & {\textbf{Ref.}} & \textbf{Note}\\
        \midrule
        \multirow{ 2}{*}{Empirical} & Large model size  & \multirow{ 2}{*}{$\begin{aligned}
             \log &\Variance_{h_\theta,(x_i, y_i)}\\
            &\approx{} \log \Bias^2_{h_\theta,(x_i, y_i)} + {E_{h_\theta}} 
        \end{aligned}$} & \multirow{ 2}{*}{$
        \begin{aligned}
        \Variance&_{h_\theta,(x_i, y_i)} \\
             ={}& C_{h_\theta} \Bias^2_{h_\theta,(x_i, y_i)}  
              + ~\xi_i
        \end{aligned}
        $} & \multirow{ 2}{*}{Sec.~\ref{s:empirical_body}} & 
        \\
        & + Correctly classified data & &  & & \\ \hline
       Theoretical &  Perfect calibration &  & $ \Bias^2_{h_\theta,(x_i, y_i)} \approx  \Variance_{h_\theta,(x_i, y_i)}$ & Sec.~\ref{s:calibration_bv_body} & a \\ \hline
       \multirow{ 2}{*}{Theoretical} &  Neural collapse & \multirow{ 2}{*}{$\frac{\log\Bias^2_{h_\theta, (x_i, y_i)}(k)}{\log\Variance_{h_\theta,(x_i, y_i)}(k)}\in (1.114, 4)$} & 
       \multirow{ 2}{*}{ $\frac{\Bias^2_{h_\theta,(x_i, y_i)}}{\Variance_{h_\theta,(x_i, y_i)}} \in \left(\frac{(2s-1)^2}{\exp(2s)}, 3\right)$}  
       &  \multirow{ 2}{*}{Sec.~\ref{s:neural_collapse}} & \multirow{ 2}{*}{b} \\
        & + Binary classification & & & \\
        \bottomrule
    \end{tabular}%
    \begin{tablenotes}
\item[a] The result $ \Bias^2_{h_\theta,(x, y)} \approx  \Variance_{h_\theta,(x, y)}$ corresponds to $C=1$ in our main empirical observation on the linear scale presented in Eq.~\eqref{eq:main-observation-linear} and we bound $\xi_i = \Variance_{h_\theta,(x, y)}-\Bias^2_{h_\theta,(x, y)}$ by the calibration error in \cref{s:calibration_bv_body}.
\item[b] 
Here, $k \in \{1, 2\}$ is class index and $s$ is (roughly speaking) the $\ell_2$ norm of the prelogit of $h_\theta(x)$ (i.e., before softmax).
\end{tablenotes}
    \end{threeparttable}
    }
    \caption{A summary of our findings on the bias-variance alignment. %
    }
    \label{tbl:summary_findings}
    \vspace{-1em}
\end{table*}

Our key observations can be summarized with the following two statements which we call the \emph{Bias-Variance Alignment} and the \emph{Upper Bounded Variance}.
As we explain in the remainder of the paper, and as summarized in Table~\ref{tbl:summary_findings}, these observations encapsulate our empirical observations, and also capture the special cases we prove from the calibration and the neural collapse assumptions.

\paragraph{Bias-Variance Alignment.} First, we find that for correctly classified points $(x,y) \in \{(x_i, y_i)\}_{i\in [n]}$:
\begin{equation}\label{eq:main-observation_approx_equal}
\begin{alignedat}{2}
    \log \Variance_{h_\theta,(x_i, y_i)} \approx{}&   \log  \Bias^2_{h_\theta,(x_i, y_i)} { +  E_{h_\theta} } \,,& & \quad \text{($E_{h_\theta}$ is a constant independent of $i$)}\\
    \textnormal{or}\quad   \log \Variance_{h_\theta,(x_i, y_i)} ={}&   \log \Bias^2_{h_\theta,(x_i, y_i)}  {+  E_{h_\theta} + \eps_i}\,, & &\quad \text{($\eps_i$ is noise s.t.\ $\E_{i\sim \unif([n])}[\eps_i] = 0$)}
\end{alignedat}
\end{equation}
where $\eps_i$ is random noise with mean vanishing across the dataset (i.e., $\E_{i\sim \unif([n])}[\eps_i] = 0$). 
Specifically, our quantitative results show that (1) a simple linear regression of $\log \Variance_{h_\theta,(x_i, y_i)}$ on $\log  \Bias^2_{h_\theta,(x_i, y_i)}$ yields a remarkably high coefficient of determination $R^2$; (2) the residuals of the simple linear regression exhibit an approximate normal distribution (we provide evidence for (1) and (2) in Section~\ref{sec:quantitative-analysis}). %

In linear scale, we can represent Equation~\eqref{eq:main-observation_approx_equal} as
\begin{equation}\label{eq:main-observation-linear}
\begin{alignedat}{2}
    \Variance_{h_\theta,(x_i, y_i)} ={}&  C_{h_\theta} \Bias^2_{h_\theta,(x_i, y_i)} + \xi_i\,,& &\quad \text{($C_{h_\theta} = e^{E_{h_\theta}} \E_{i\sim \unif([n])}[e^{\eps_i}]>0$ is a constant)} \\
    \xi_i ={}& O(\Bias^2_{h_\theta,(x_i, y_i)}) \eta_i \,,& & \quad \text{($\eta_i$ is noise s.t.\ $\E_{i\sim \unif([n])}[\eta_i] = 0$)}
\end{alignedat}
\end{equation}
A formal statement of the above formulations can be found in \cref{prop:linear-vs-log} in \cref{sec:linear-vs-log}. 
Note that because the noise term $\xi_i$ scales with squared bias, Equation~\eqref{eq:main-observation-linear} predicts that the sample-wise bias-variance in linear scale has a cone-shaped distribution (i.e., as bias increases, an increasingly wider range of variance is covered by examples). We discuss this in more detail in \cref{sec:linear-vs-log}.

\paragraph{Upper Bounded Variance.} Second, we find that the following relation approximately holds for all examples (i.e., for both correctly and incorrectly classified examples):
\begin{equation}\label{eq:main-observation}
    \Bias^2_{h_\theta,(x, y)} \geq {C_{h_\theta} \cdot} \Variance_{h_\theta,(x, y)}, \quad\forall (x,y) \in \{(x_i, y_i)\}_{i\in [n]}.
\end{equation}
In Figure~\ref{fig:BV_Overview}, we illustrate these findings on an illustrative example.
Observe that for correctly classified sample points,
 the bias and variance align closely along the line of $\Bias_{h_\theta,(x, y)}^2 = \Variance_{h_\theta,(x, y)}$, i.e., the equality in \eqref{eq:main-observation_approx_equal} holds. 
We refer to this phenomenon as the \emph{bias-variance alignment}. 
For {incorrectly classified samples,} %
we observe $\Bias_{h_\theta,(x, y)}^2 > \Variance_{h_\theta,(x, y)}$, hence the inequality in \eqref{eq:main-observation} approximately holds for all examples. 
It is worth noting that Eq.~\eqref{eq:main-observation} provides an explanation for why deep models have limited variance, and in effect, good generalization, i.e., the variance of a model is always bounded from above by the squared bias at every sample.

The paper provides both empirical and theoretical analyses of the bias-variance alignment phenomenon. We organize the paper as follows. 
We begin with empirical investigations, in which we observe the bias-variance alignment across architectures and datasets (Section~\ref{s:empirical_body}). 
We then move on to theoretical explanations of these phenomena. We start from a statistical perspective, where we connect calibration and the bias-variance alignment (Section~\ref{s:calibration_bv_body}). In the process, we generalize the theory from previous works on calibration implying the generalization-disagreement equality \citep{jiang2022assessing,kirsch2022note} (Section~\ref{sub:unified}). 
Next, we show how starting from a separate perspective of neural collapse \citep{papyan2020prevalence} can lead to the bias-variance approximate equality result (Section~\ref{s:neural_collapse}). 
We conclude with the discussion of wider implications of our findings (Section~\ref{s:conclusion}).

Our main contributions are: 
(1) We conduct experiments to show that the bias-variance alignment holds for a variety of model architectures and on different datasets. 
(2) We provide evidence that the phenomenon does not occur if the model is small. This suggests that the bias-variance alignment is specific to large neural networks and provides more evidence that there could be a sharp difference between small and large models.
(3) Theoretically, we prove the bias-variance alignment under the assumption that the model is well-calibrated (i.e., the output of the softmax layer aligns with the true conditional probability of each class given the data). As a side product, we provide a unified definition for a variety of definitions of calibration introduced in previous works.
(4) We show that the neural collapse theory predicts the approximate bias-variance alignment.

\section{Background and Related Work}

\subsection{Background on bias-variance decomposition}
\label{s:background_bv}

Consider the task of learning a multi-class classification model $h_{\theta}: \cX \to \cM([K]) \subseteq \reals^K$, where $\cX $ is the input domain, $\cM([K])$ is the set of distributions on $[K]$, and $K$ is the number of classes. 
Let $\{h_\theta: \cX \to \cM([K])\}$ be an ensemble of trained models, where $\theta$ is a random variable taking values from $\Theta$. For any input $x \in \cX$, we use $h_\theta(\cdot \mid x)$ to represent the corresponding distribution. That is, $h_\theta(\cdot \mid x) \triangleq (h_\theta(1 \mid x), \ldots, h_\theta(K \mid x))$ is the vector of predictive probabilities from model $h_\theta$.
Given any sample $(X, Y) \in \cX \times [K]$, the bias and variance of $\{h_\theta\}$ with respect to the mean squared error (MSE) loss are defined as follows.

\begin{definition}[Bias and Variance]\label{def:bv}
Let $h(\cdot \mid x) \triangleq \E_\theta h_\theta(\cdot \mid x)$ be the mean function of $\{h_\theta\}$. The bias, variance, and bias-variance gap of the $i$-th entry on $(X, Y)$, for each $i \in [K]$, are defined as
\begin{align}\label{eq:def-entry-bv}
    \Bias_{h_\theta, (X, Y)}(i) &= \ebias(i) \triangleq \left| h(i \mid X) - \1\{Y=i\} \right|\,, \\
    \Vari(i) &= \evari^2(i) \triangleq \E_\theta \left( h_\theta(i \mid X) - h(i \mid X) \right)^2\,,\\
    \BVG(i) &\triangleq \Bias^2_{h_\theta, (X, Y)}(i) - \Vari(i)\,.
\end{align}
Throughout this paper, we use $\Bias_{h_\theta, (X, Y)}(i)$ and $\ebias(i)$ interchangeably as synonyms, and we call $\evari(i)$ the standard deviation of the $i$-th entry. Moreover, the total bias, variance, and bias-variance gap are defined as
\begin{align}\label{eq:def-total-bv}
    \Bias_{h_\theta, (X, Y)} &\triangleq \sqrt{\sum_{i\in[K]}\B(i)} =  \left\| h(\cdot \mid X) - e_Y \right\|_2\,, \\
    \Vari &\triangleq \sum_{i\in [K]}\Vari(i) = \E_\theta \left\| h_\theta(\cdot \mid X) - h(\cdot \mid X) \right\|_2^2\,,\label{eq:vari-def}\\
    \BVG &\triangleq \Bias^2_{h_\theta, (X, Y)} - \Vari\,.
\end{align}
where $e_i \in \mathbb{R}^K$ is a vector whose $i$-th entry is $1$ and all other entries are $0$.
\end{definition}

It is well-known that bias and variance provide a decomposition of the expected risk with respect to the MSE loss. 
That is, 
\begin{equation}\label{eq:bias-variance-decomposition}
    \ris_{h_\theta, (X, Y)} \triangleq \E_\theta\|h_\theta(\cdot \mid X) - e_Y\|_2^2 = \B + \Vari.
\end{equation}
In the rest of the paper we focus on studying bias and variance from decomposing MSE loss. 
We present results for bias and variance from decomposing CE loss in Section~\ref{sec:decompsing-ce-loss}.

We now introduce several notations that will be used throughout the paper.
\begin{definition}\label{def:pred-conf-acc-unc}
The \emph{prediction}, \emph{confidence}, and \emph{accuracy} of $h$ on $x$ are defined by \begin{equation*}
\begin{split}
    \pred_h(x)  = \argmax_{j\in [K]} h(j\mid x)\,,~
    \conf_h(x) = h(\pred_h(x)\mid x)\,,~
    \acc_h(x)  = \P_{Y\mid X}( \pred_h(x)\mid x) \,.
\end{split}
\end{equation*}
The \emph{uncertainty} of the ensemble $\{h_\theta\}$ on $x$ is $\uncertainty_{h_\theta}(x) = 1- \E_\theta \|h_\theta(\cdot\mid x)\|_2^2$.
\end{definition}

\subsection{Related work}

\paragraph{Bias-variance decomposition in deep learning.}
In the classical statistical learning theory of bias-variance tradeoff, increasing the model capacity beyond a certain point leads to overfitting \citep{geman1992neural}. 
However, deep neural networks in practice usually contain a large number of parameters but still generalize well. 
Towards bridging the gap between theory and practice, one of the most famous work is \cite{belkin2019reconciling} which reveals a ``double-descent'' curve to subsume the U-shaped tradeoff curve. 
This surprising observation motivates the work of \cite{neal2018modern} to measure the bias and variance in popular deep models, leading to a discovery of a ``unimodal variance'' phenomenon \citep{yang2020rethinking}. 
Subsequent work include \cite{adlam2020understanding,lin2021causes} that study variance under fine-grained decompositions, and \cite{rocks2022bias,rocks2022memorizing} that analyze variance under simplified regression or random feature models.

\paragraph{Calibration.}
Calibration is a fundamental quantity in machine learning which informally speaking measures the degree to which the output distribution from a model agrees with the Bayes probability of the labels over the data \citep{guo2017calibration}.
Previous works proposed a theory on calibration implying the generalization-disagreement equality \citep{jiang2022assessing,kirsch2022note}.
In Table~\ref{tbl:summary_previous_works} we summarize several related results connecting calibration with other fundamental concepts from previous works.
Our work can be viewed as extending these works as we connect calibration and the bias-variance alignment.
\begin{table*}[!ht]
    \centering
    \renewcommand{\arraystretch}{1.25}
    \vspace{-0.5em}
    \resizebox{\linewidth}{!}{%
    \begin{tabular}{@{}lllll@{}}
        \toprule
        \toprule
        \multirow{2}{*}{\textbf{Premise}} & \multirow{2}{*}{\textbf{Finding}} & \textbf{Pointwise vs.} & \textbf{Empirical vs.} & \multirow{2}{*}{\textbf{References}} \\
        & & \textbf{In aggregate?} & \textbf{Theoretical?} & \\
        \toprule
            Calibration              &  $\mathrm{Generalization~error} = \mathrm{Disagreement}$ & In aggregate  & Both      &  \citep{jiang2022assessing,kirsch2022note} \\
            Generalization           & Calibration                            & In aggregate  & Empirical & \citep{Carrell2022arxiv} \\
            Multi-domain calibration & Out-of-domain generalization           & In aggregate  & Empirical & \citep{wald2021on} \\
            \midrule
            \midrule
            \cellcolor{gray!20}\textbf{Calibration} & \cellcolor{gray!20}$\mathrm{Bias}^2 \approx \mathrm{Variance}$ & \cellcolor{gray!20}\textbf{Pointwise} & \cellcolor{gray!20}\textbf{Both} & \cellcolor{gray!20}\textbf{This work} \\
            \cellcolor{gray!20}\textbf{Neural collapse} & \cellcolor{gray!20}$\mathrm{Bias}^2 \approx \mathrm{Variance}$ & \cellcolor{gray!20}\textbf{Pointwise} & \cellcolor{gray!20}\textbf{Both} & \cellcolor{gray!20}\textbf{This work} \\
        \bottomrule
    \end{tabular}%
    }

    \caption{Summary of findings about calibration, generalization and disagreements.
    }
    \vspace{-0.5em}
    \label{tbl:summary_previous_works}
\end{table*}

\paragraph{Neural collapse.} 
Towards understanding last layer features learned in deep network based classification models, the work of \cite{papyan2020prevalence} reveals the \emph{neural collapse} phenomenon that offers a clear mathematical characterization: Within-class features collapse to their corresponding class means, and between-class separation of the class means is maximized. 
This observation motivates a sequence of theoretical work on justifying its occurrence \citep{fang2021exploring,zhu2021geometric,tirer2022extended,poggio2020explicit,thrampoulidis2022imbalance}, and practical work on leveraging the insights to improve model performance \citep{liang2023inducing,yang2023neural,li2022principled}.

\section{Empirical Analysis of Bias-variance Alignment}
\label{s:empirical_body}

We begin by providing a quantitative measure in \cref{sec:quantitative-analysis} on the alignment of bias and variance illustrated in \cref{fig:BV_ResNet50_IMNET_log}. 
Then, we provide empirical evidence in \cref{sec:prevalence} that the bias-variance alignment phenomenon occurs more prevalently for networks beyond ResNets, and for datasets other than ImageNet.
Finally, in \cref{sec:over-parameterization} we study the effect of network size, showing that bias-variance alignment is a phenomenon for over-parameterized models.

\subsection{Quantitative Regression Analysis of Bias-Variance Alignment}
\label{sec:quantitative-analysis}

\vspace{1em}
\begin{minipage}[thb]{0.32\textwidth}

\centering
    \resizebox{\linewidth}{!}{

  \begin{tabular}{lll}
\toprule
              Model name &       $R^2$ &     Slope \\
\midrule
     ResNet-8 (CIFAR-10) &  0.979 &  0.882 \\
    ResNet-56 (CIFAR-10) &  0.996 &  0.964 \\
  ResNet-110 (CIFAR-100) &  0.986 &  0.901 \\
    ResNet-50 (ImageNet) &  0.977 &  0.897 \\
\bottomrule
\end{tabular}
}
\centering
\captionof{table}{Coefficient of determination ($R^2$) and slope for linear regression of $\log \Variance_{h_\theta,(x_i, y_i)}$ on $\log \Bias_{h_\theta,(x_i, y_i)}$.\label{tbl:R2}}

\end{minipage}\hfill
\begin{minipage}{0.62\textwidth}
\centering
\includegraphics[width=\textwidth]{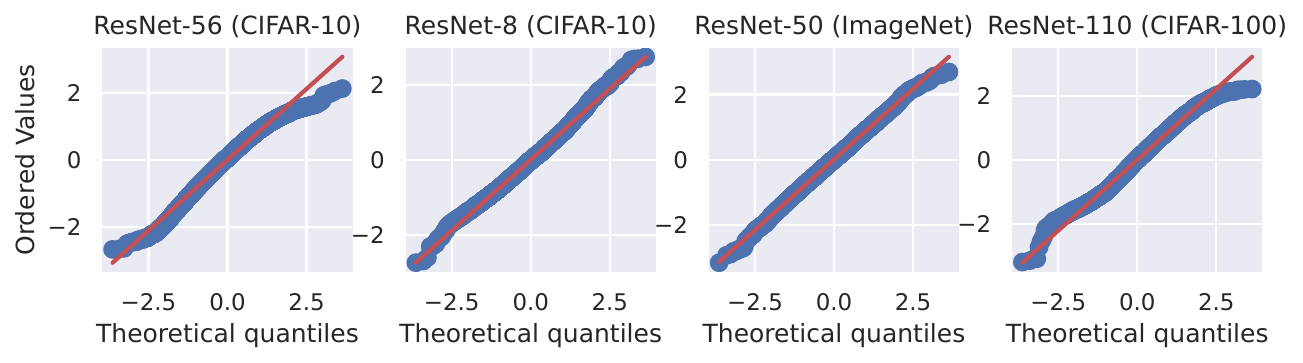}
\captionof{figure}{The Q-Q plots of the residuals of linear regression of $\log \Variance_{h_\theta,(x_i, y_i)}$ on $\log \Bias_{h_\theta,(x_i, y_i)}$ of the four models.\label{fig:qqplot}}
\end{minipage}
\vspace{1em}

We start by conducting a quantitative regression analysis of bias and variance on the logarithmic scale {for verifying the bias-variance alignment phenomenon in Eq.~\eqref{eq:main-observation_approx_equal}.}
First, in Table \ref{tbl:R2}, we present the coefficient of determination $R^2$ and the slope of the linear regression of $\log \Variance_{h_\theta,(x_i, y_i)}$ on $\log \Bias_{h_\theta,(x_i, y_i)}$ for the following models and datasets: ResNet-56 (on CIFAR-10), ResNet-8 (on CIFAR-10), ResNet-50 (on ImageNet), and ResNet-110 (on CIFAR-100). 
Notice how the coefficient of determination for all four settings is extremely close to 1 (at least 0.977) and the slope is also very close to 1, demonstrating a very strong linear alignment of the two quantities.
We next analyze the normality of the residuals of the logarithmic linear regression. The Q-Q plots are shown in Figure~\ref{fig:qqplot}. 
We observe that the residuals of the linear regression on the four models are all approximately normal, especially for the data points whose sample quantile is between $-1.5$ and $1.5$.

\subsection{Prevalence of bias-variance alignment across architectures and datasets}

\label{sec:prevalence}

In Figure~\ref{fig:BV_ResNet50_IMNET_log}, we showed that the bias-variance alignment occurs for ResNet-50 trained on ImageNet. Here, we provide additional evidence in Figure~\ref{fig:BV_prevalance} that the bias-variance alignment occurs for other model architectures and datasets.

In particular, Figure~\ref{fig:BV_EfficientNetB0_IMNET_log} and Figure~\ref{fig:BV_MobileNetV2_IMNET_log} show the sample-wise bias-variance of EfficientNet-B0~\citep{tan2019efficientnet} and MobileNet-V2~\citep{sandler2018mobilenetv2}, respectively, trained on the ImageNet dataset. Figure~\ref{fig:BV_ResNet110_CIFAR10_log} and Figure~\ref{fig:BV_ResNet110_CIFAR100_log} show the sample-wise bias-variance of ResNet-110 trained on CIFAR-10 and CIFAR-100, respectively. In all cases, we observe a similar pattern, demonstrating the prevalence of the bias-variance alignment with respect to network architectures and choice of dataset.

\begin{figure}[ht]
\centering  
\begin{subfigure}[b]{0.24\linewidth}
    \centering
    \figaddtitle(%
    \includegraphics[width=\textwidth]{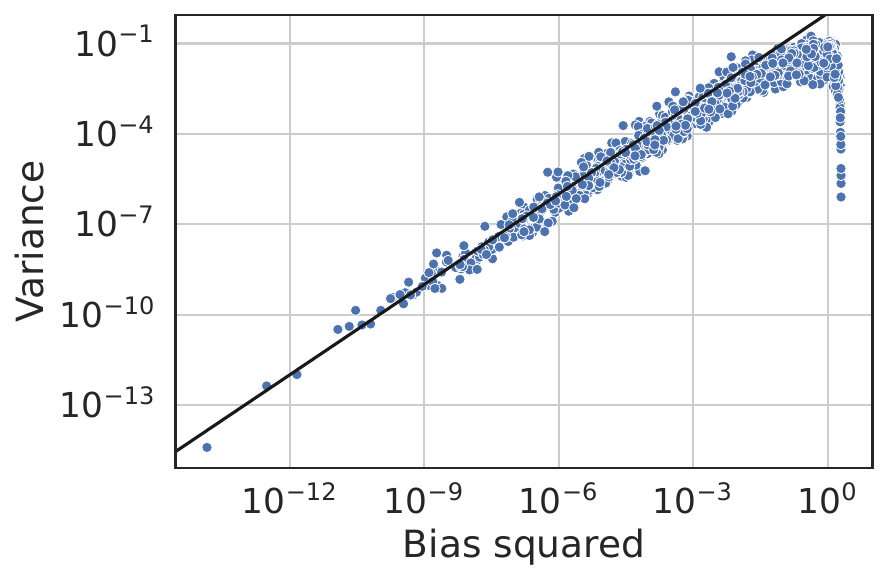},
    EfficientNet-B0)
    \caption{ImageNet}
    \label{fig:BV_EfficientNetB0_IMNET_log}
\end{subfigure}
\hfill
\begin{subfigure}[b]{0.24\linewidth}
    \centering
    \figaddtitle(%
    \includegraphics[width=\textwidth]{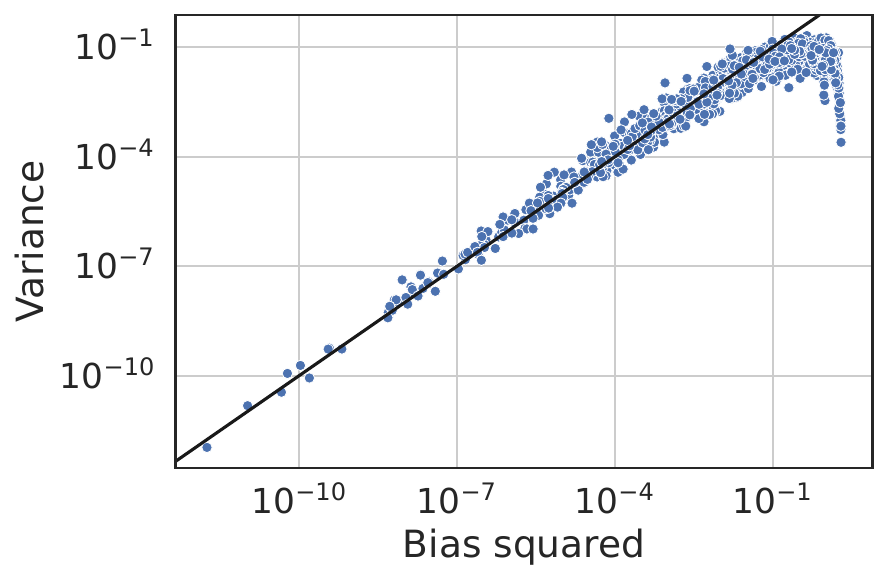},
    MobileNet-V2)
    \caption{ImageNet}
    \label{fig:BV_MobileNetV2_IMNET_log}
\end{subfigure}
\hfill
\begin{subfigure}[b]{0.24\linewidth}
    \centering
    \figaddtitle(%
    \includegraphics[width=\textwidth]{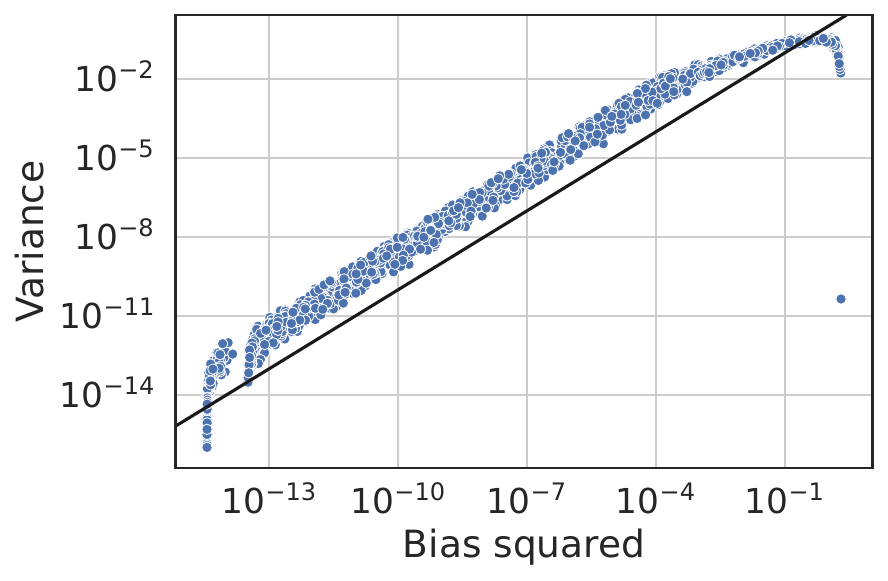},
    CIFAR ResNet 110)
    \caption{CIFAR-10}
    \label{fig:BV_ResNet110_CIFAR10_log}
\end{subfigure}
\hfill
\begin{subfigure}[b]{0.24\linewidth}
    \centering
    \figaddtitle(%
    \includegraphics[width=\textwidth]{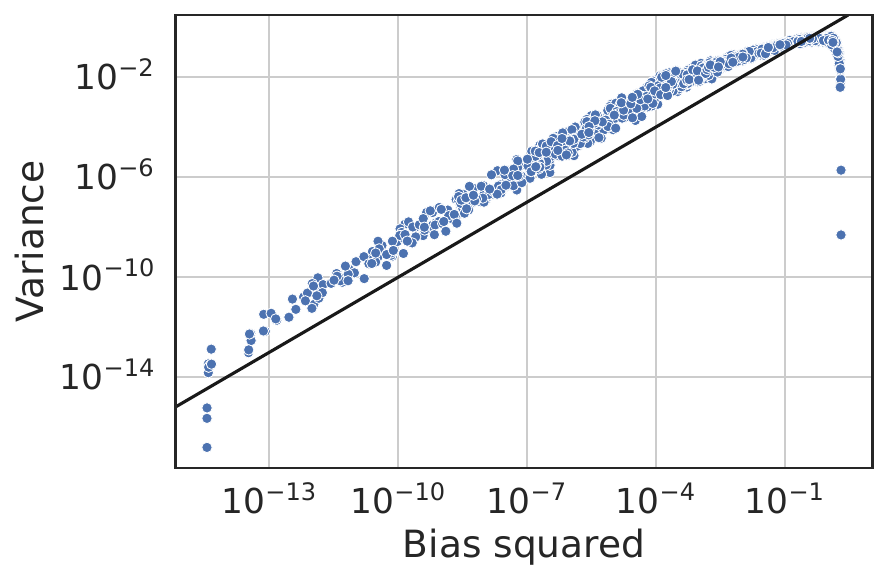},
    CIFAR ResNet 110)
    \caption{CIFAR-100}
    \label{fig:BV_ResNet110_CIFAR100_log}
\end{subfigure}
\vspace{-0.5em}
\caption{Sample-wise bias and variance for \emph{(a, b)}: Varying model architectures trained on the ImageNet dataset, and \emph{(c, d)}: Two additional datasets, namely CIFAR-10 and CIFAR-100. 
For ImageNet, CIFAR-10 and CIFAR-100, bias and variance are estimated from 10, 100 and 100 independently trained models, respectively.
}
\label{fig:BV_prevalance}
\vspace{-1em}
\end{figure}

\subsection{Role of over-parameterization}
\label{sec:over-parameterization}

We investigate the impact of model size on the bias-variance alignment phenomenon, demonstrating that it only occurs for large and over-parameterized models.

To vary the size of the model, we consider ResNets with depths of 8, 20, 56, and 110, which we denote as ResNet-$\{8, 20, 56, 110\}$. To obtain even smaller models, we reduce the width of ResNet-8 from 16 to 8, 4, 2, and 1. Here, the width refers to the number of filters in the convolutional layers of ResNet-8. Specifically, the convolutional layers of ResNet-8 can be divided into three stages with 16, 32, and 64 filters, respectively. Therefore, ResNet-8 with width $k$, denoted as ResNet-8-W$[k]$, refers to a ResNet-8 with $k$, $2k$, and $4k$ filters in the three stages, respectively.

Due to space limit, we choose three models, namely ResNet-8-W1, ResNet-8-W16, and ResNet-110-W16 from the eight model sizes discussed above, and present their sample-wise bias and variance evaluated on CIFAR-10 in Figure~\ref{fig:BV_model_size}(a - c). 
These three models represent a small, medium, and large size model, respectively. 
Results for all the other model sizes are provided in Figure~\ref{fig:BV_model_size_all} (see Appendix). 
We can see that bias-variance alignment becomes increasingly pronounced with larger models.

\begin{figure}[h]
\centering  
\begin{subfigure}[b]{0.24\linewidth}
    \centering
    \includegraphics[width=\textwidth,trim={0 0 0 0.8cm},clip]{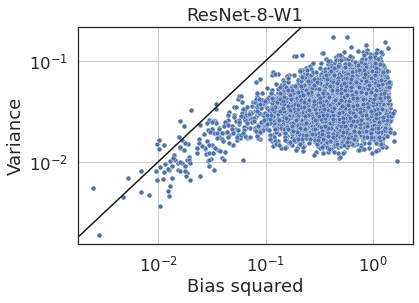}
    \caption{ResNet-8-W1}
    \label{fig:BV_Res8_W1_CF10_log}
\end{subfigure}
\hfill
\begin{subfigure}[b]{0.24\linewidth}
    \centering
    \includegraphics[width=\textwidth,trim={0 0 0 0.8cm},clip]{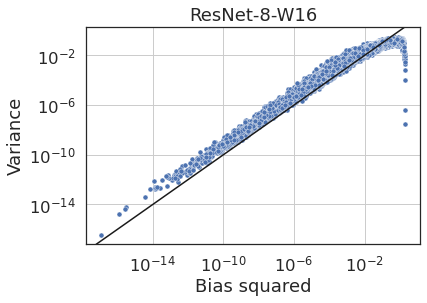}
    \caption{ResNet-8-W16}
    \label{fig:BV_Res8_W16_CF10_log}
\end{subfigure}
\hfill
\begin{subfigure}[b]{0.24\linewidth}
    \centering
    \includegraphics[width=\textwidth,trim={0 0 0 0.8cm},clip]{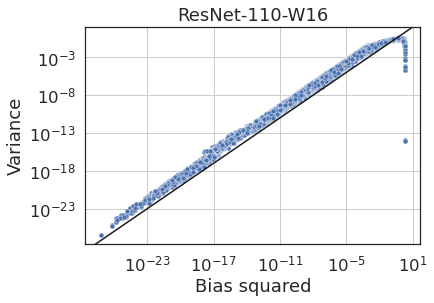}
    \caption{ResNet-110-W16}
    \label{fig:BV_Res56_W16_CF10_log}
\end{subfigure}
\hfill
\begin{subfigure}[b]{0.24\linewidth}
    \centering
    \includegraphics[width=0.9\textwidth,trim={0 0.3cm 0 0.1cm},clip]{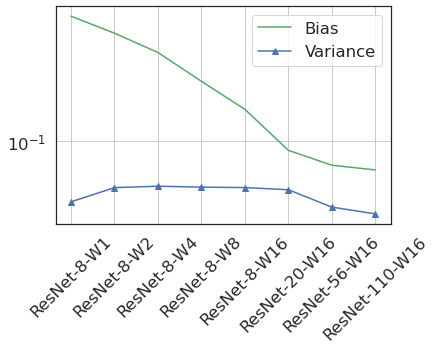}
    \caption{Varying model size}
    \label{fig:aggregated_BV}
\end{subfigure}
\vspace{-0.5em}
\caption{\emph{(a - c)} Sample-wise bias and variance for networks of varying scale trained on CIFAR-10. Here, ResNet-[$p$]-W[$q$] refers to ResNet with $p$ layers and width $q$. The model size monotonically increases from the leftmost figure to the rightmost figure. For all cases, the bias and variance are estimated from 50 independently trained models. \emph{(d)} Averaged bias and variance over all test samples under varying model sizes.}
\label{fig:BV_model_size}
\vspace{-1em}
\end{figure}

The results in Figure~\ref{fig:BV_model_size} suggest that the emergence of bias-variance alignment is associated with \emph{over-parameterization}, which refers to large capacity deep models that can perfectly fit any training data. Classical theory suggests that bias and variance exhibit a trade-off relation, where larger model size reduces bias and increases variance. However, \cite{yang2020rethinking} shows that this trade-off holds only in the regime where the model size is relatively small. For over-parameterized models, the variance does not continue to grow; rather, it exhibits a \emph{unimodal} shape.
To examine bias-variance alignment in association with the unimodal variance phenomenon of \cite{yang2020rethinking}, we compute the averaged bias and variance over all test samples for results reported in Figure~\ref{fig:BV_model_size}(a-c) (and also results of five additional models reported in Figure~\ref{fig:BV_model_size_all}), and plot them as a function of model size in Figure~\ref{fig:aggregated_BV}. 
The result in Figure~\ref{fig:aggregated_BV} aligns with the observation in \cite{yang2020rethinking}, namely, bias is monotonically decreasing and variance curves is unimodal. 
Moreover, the model that exhibits strong bias-variance alignment in Figure~\ref{fig:BV_model_size}, namely ResNet-56-W16, clearly is outside of the classical regime of bias-variance tradeoff shown in Figure~\ref{fig:aggregated_BV}.
Meanwhile, model in the classical trade-off regime, e.g. ResNet-8-W1, does not have bias-variance alignment.

\section{Calibration Implies Bias-Variance Alignment}
\label{s:calibration_bv_body}
In this section we show how model calibration can imply bias-variance alignment. 
We start from unifying the different calibration views from previous work, and then show how each of these assumptions implies a different version of the bias-variance correlation.

\subsection{A unified view of calibration}\label{sub:unified}

In previous work, various definitions of calibration have been introduced. In the following, we present a general definition that encompasses a wide range of these definitions. Suppose that there is a collection of sub-$\sigma$-algebras $\{\Sigma_{i}\}_{i\in [K]}$ of $\sigma(X)$, where $\sigma(X)$ is the $\sigma$-algebra generated by the random variable $X$.
\footnote{In \cref{defn:calibration}, the $\sigma$-algebras $\Sigma_i$ must be sub-$\sigma$-algebras of $\sigma(X)$. This is because we want the random variables and events we are conditioning on to be functions of $X$. Thus, the conditional expectation in the definition of calibration averages over all data examples that have the same value of a function of $X$.
}
In this section, we refer to the $\sigma$-algebra generated by a random variable or event using the notation $\sigma(\cdot)$.

\begin{definition}[Perfect (confidence) calibration]\label{defn:calibration}
A function $h: \cX \to \cM([K])$ has perfect calibration with respect to $\{\Sigma_{i}\}_{i\in [K]}$ if the following equation holds for all $i\in [K]$:
 \begin{equation}\label{eq:delta-average}
        \E[\Delta( i\mid X)\mid \Sigma_{i}] = 0\,,\quad \Delta(i\mid x) \triangleq h(i\mid x) - \P_{Y\mid X}(i\mid x)\,.
    \end{equation}
The function has perfect confidence calibration if the following equation holds:
\begin{equation}\label{eq:perfect-confidence-calibration}
        \E[\Delta( \pred_h(X)\mid X)\mid \Sigma_{\pred_h(X)}] = \E[(\conf_h(X)-\acc_h(X))\mid \Sigma_{\pred_h(X)}] = 0\,.
    \end{equation}

\end{definition}

The sub-$\sigma$-algebras $\{\Sigma_{i}\}_{i\in [K]}$ control the granularity of perfect calibration. In other words, \eqref{eq:delta-average} in \cref{defn:calibration} says that $\Delta(i\mid X)$ vanishes on average, and $\{\Sigma_{i}\}_{i\in [K]}$ specifies the set of samples that we average over. Here are examples of ways of choosing $\{\Sigma_{i}\}_{i\in [K]}$:
\begin{compactitem}
    \item \label{it:sample-cal} Sample-wise perfect calibration $\Sigma_i^{\mathrm{samp}} = \sigma(X)$. In this case, \cref{defn:calibration} is equivalent to $h(i\mid X) = \Pr_{Y\mid X}(i\mid X)$ for every $i\in [K]$. In other words, perfect calibration holds for every sample $X$.
    \item \label{it:preimage-cal} Pre-image perfect calibration $\Sigma_i^{\mathrm{pre}} = \sigma( h(i\mid X))$. This may be the most widely used definition of calibration~\citep{guo2017calibration,kirsch2022note} and characterizes the calibration averaged over all samples that share a common prediction $h(\cdot \mid X)$.
     \item Bin-wise perfect confidence calibration $ \Sigma_i^{\mathrm{bin}} = \sigma\left(\1\left\{\lceil M 
      h(i\mid X)
     \rceil \right\}\right)$ 
     where $M$ is a positive integer~\citep{guo2017calibration}. The map $x\mapsto \lceil M 
      h(i\mid x)
     \rceil$ assigns $x$ to $M$ bins (if $h(i\mid x)>0$ for all $x$): $(0,\frac{1}{M}], (\frac{1}{M},\frac{2}{M}],\dots,(\frac{M-1}{M},1]$ according to the value of $h(i\mid x)$. Then the samples that fall into the same bin are averaged over.
\end{compactitem}

\begin{definition}[Calibration Errors]\label{d:class_calibration_errors} If we define $\Delta(i\mid x)$ as in \eqref{eq:delta-average}, the expected calibration error ($\ECE$) and the class-wise calibration error ($\CWCE$) of a function $h:\cX \to\cM([K])$ with respect to $\{\Sigma_i\}_{i\in [K]}$ on the data distribution $\Pr\in \cM(\cX\times \cY)$ are given by
    \begin{align*}
    \ECE(h) &\triangleq \E\left\lvert\E[\Delta ( \pred_h(X)\mid X)\mid \Sigma_{\pred_h(X)}]\right\rvert \,, \\
        \CWCE(h) &\triangleq  \sum_{i\in [K]} \E\left\lvert \E[ \Delta( i\mid X)\mid \Sigma_i]  \right\rvert \,.
    \end{align*}
\end{definition}

We elucidate how our unified definition subsumes the various definitions in previous work, and we summarize it in \cref{tab:summary-definition}. In \cref{sec:ECE-pre} and \cref{sec:ECE-bin}, we show concretely what $\ECE$ looks like with respect to $\Sigma_i^{\mathrm{pre}}$ and $\Sigma_i^{\mathrm{bin}}$, respectively. 
\begin{table}[t]
    \centering
    \resizebox{\linewidth}{!}{
    \begin{tabular}{l|l|l}
    \toprule\toprule
 & $\ECE$ & $\CWCE$  \\
\toprule
$\Sigma_i^{\mathrm{samp}}$ &  \citep[Expectation of Eq. (6)]{kirsch2022note}  & \textbf{This work (\cref{d:class_calibration_errors})} \\
\hline
$\Sigma_i^{\mathrm{pre}}$ & \citep[Eq.~(2)]{guo2017calibration} (\textbf{see our \cref{sec:ECE-pre}}) &\citep[Eq.~(36)]{kirsch2022note} \\
\hline
$\Sigma_i^{\mathrm{bin}}$ & \citep[Eq.~(3)]{guo2017calibration} \citep{naeini2015obtaining} \citep{nixon2019measuring} (\textbf{see our \cref{sec:ECE-bin}}) & \textbf{This work (\cref{d:class_calibration_errors})} \\
\bottomrule
\end{tabular}
}
\vspace{-1em}
\caption{Summary of the various definitions of $\ECE$ and $\CWCE$ that have been proposed in previous work. Our unified definition subsumes these previous definitions.}
    \label{tab:summary-definition}
    \vspace{-1em}
\end{table}

\subsection{Calibration meets the bias-variance decomposition}
\label{s:calibration_bv}

In this subsection we then show that, under the model calibration assumption, there is a correlation between squared bias and variance. Moreover, in the case of imperfect model calibration, the discrepancy between the squared bias and variance can be bounded by the calibration error.

Recall the definitions of bias and variance in Section~\ref{s:background_bv}. \cref{thm:B-equals-V} characterizes the discrepancy between the squared bias and variance, and provides an upper bound for it by the class calibration error. \cref{cor:perfect-calibration}, which follows the theorem, shows that in the case of perfect calibration, the variance is upper bounded by the squared bias. Moreover, if the model outputs a completely certain prediction (outputs a one-hot vector), it is theoretically guaranteed that the squared bias equals variance, i.e., the bias-variance correlation appears. If the model does not output a completely certain prediction but a highly certain prediction, an approximate bias-variance correlation follows.

\begin{theorem}\label{thm:B-equals-V}
If $\{h_\theta\}$ is an ensemble whose mean function $h(i\mid  X)$ is $\Sigma_i$-measurable, then we have\begin{align*}
 & \E_{Y\mid X}\left[
 \BVG(i)
 \right]
={} 2 h(i\mid X)\Delta(i\mid X)   + \Pr_{Y\mid X}(i\mid X)- \E_\theta h_\theta(i\mid X)^2\,, \\
& \E \left|\E\left[ \E_{Y\mid X}[
\BVG(i)
] -  \Pr_{Y\mid X}(i\mid X) + \E_\theta h_\theta(i\mid X)^2 \mid \Sigma_i \right]\right| \le 2\CCE(i)\,,
\end{align*} 
where $\CCE(i) \triangleq \E\left\lvert \E[ \Delta( i\mid X)\mid \Sigma_i]  \right\rvert $ is the class calibration error for class $i$. If $\Sigma = \bigcap_{i\in [K]} \sigma(\Sigma_i)$, for total squared bias and variance, the following equations holds
\begin{align}
    & \E\left[\E_{Y\mid X}[
    \BVG
    ]\mid\Sigma\right]
    ={}
     \E\left[\uncertainty_{h_\theta}(X) \mid\Sigma\right] 
    + 2\sum_{i\in [K]}\E\left[ \E[\Delta(i\mid X)\mid \Sigma_i] h(i\mid X) \mid \Sigma \right]\,,  \nonumber\\
    & \left|\E\left[\E_{Y\mid X}[
    \BVG
    ]
    - \uncertainty_{h_\theta}(X)
    \mid\Sigma\right]\right|
   \le{} 2\CWCE
   \label{eq:E_le_2CWCE}
\end{align}
\end{theorem}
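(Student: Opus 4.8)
The plan is to compute $\E_{Y\mid X}[\BVG(i)]$ directly from \cref{def:bv} and then take conditional expectations, handling first the per-entry identity, then summing over $i \in [K]$ to obtain the total statement. First I would expand $\E_{Y\mid X}[\B(i)] = \E_{Y\mid X}(h(i\mid X) - \1\{Y=i\})^2 = h(i\mid X)^2 - 2h(i\mid X)\Pr_{Y\mid X}(i\mid X) + \Pr_{Y\mid X}(i\mid X)$, using $\1\{Y=i\}^2 = \1\{Y=i\}$ and $\E_{Y\mid X}\1\{Y=i\} = \Pr_{Y\mid X}(i\mid X)$. The variance term $\Vari(i) = \E_\theta h_\theta(i\mid X)^2 - h(i\mid X)^2$ carries no $Y$-dependence, so subtracting gives
\begin{equation*}
\E_{Y\mid X}[\BVG(i)] = 2h(i\mid X)^2 - 2h(i\mid X)\Pr_{Y\mid X}(i\mid X) + \Pr_{Y\mid X}(i\mid X) - \E_\theta h_\theta(i\mid X)^2\,.
\end{equation*}
Recognizing $h(i\mid X) - \Pr_{Y\mid X}(i\mid X) = \Delta(i\mid X)$, the first two terms combine to $2h(i\mid X)\Delta(i\mid X)$, which is exactly the claimed first identity. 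The second (per-entry) bound then follows by rearranging so that $2h(i\mid X)\Delta(i\mid X)$ is isolated, conditioning on $\Sigma_i$, using that $h(i\mid X)$ is $\Sigma_i$-measurable so it pulls out of the conditional expectation, applying the triangle inequality together with $0 \le h(i\mid X) \le 1$, and recalling $\CCE(i) = \E|\E[\Delta(i\mid X)\mid \Sigma_i]|$.

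For the total statement, I would sum the per-entry identity over $i\in[K]$. The key observation is $\sum_{i\in[K]} \Pr_{Y\mid X}(i\mid X) = 1$ and $\sum_{i\in[K]} \E_\theta h_\theta(i\mid X)^2 = \E_\theta\|h_\theta(\cdot\mid X)\|_2^2 = 1 - \uncertainty_{h_\theta}(X)$ by \cref{def:pred-conf-acc-unc}, so the non-$\Delta$ terms collapse to $1 - (1 - \uncertainty_{h_\theta}(X)) = \uncertainty_{h_\theta}(X)$. This gives $\E_{Y\mid X}[\BVG] = \uncertainty_{h_\theta}(X) + 2\sum_{i\in[K]} h(i\mid X)\Delta(i\mid X)$; then conditioning on $\Sigma = \bigcap_i \sigma(\Sigma_i)$ and using the tower property on each summand (writing $\E[h(i\mid X)\Delta(i\mid X)\mid\Sigma] = \E[\E[h(i\mid X)\Delta(i\mid X)\mid\Sigma_i]\mid\Sigma] = \E[h(i\mid X)\E[\Delta(i\mid X)\mid\Sigma_i]\mid\Sigma]$, valid since $\Sigma\subseteq\sigma(\Sigma_i)$ and $h(i\mid X)$ is $\Sigma_i$-measurable) yields the first displayed total identity. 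For \eqref{eq:E_le_2CWCE}, I would subtract $\uncertainty_{h_\theta}(X)$, take absolute values inside, move the sum and the outer $\Sigma$-expectation out via the triangle inequality and Jensen, bound $h(i\mid X)\le 1$, and land on $2\sum_{i\in[K]}\E|\E[\Delta(i\mid X)\mid\Sigma_i]| = 2\CWCE$.

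The main obstacle I anticipate is the measure-theoretic bookkeeping around $\Sigma = \bigcap_{i\in[K]}\sigma(\Sigma_i)$: one must be careful that $\Sigma \subseteq \sigma(\Sigma_i)$ for each $i$ so the tower property applies in the stated order, and that pulling $h(i\mid X)$ out of $\E[\cdot\mid\Sigma_i]$ is legitimate (it is, by the measurability hypothesis) but pulling it out of $\E[\cdot\mid\Sigma]$ is \emph{not} in general — so the conditioning must be done in two stages, inner on $\Sigma_i$ and outer on $\Sigma$. Everything else is a routine expansion of squares and an application of the triangle inequality with the trivial bound $h(i\mid X)\in[0,1]$; no concentration or probabilistic machinery beyond iterated conditional expectations is needed.
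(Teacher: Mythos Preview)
Your proposal is correct and follows essentially the same route as the paper's proof: expand $\B(i)$ and $\Vari(i)$, subtract, take $\E_{Y\mid X}$ to expose $2h(i\mid X)\Delta(i\mid X)$, then condition on $\Sigma_i$ (pulling out the $\Sigma_i$-measurable factor $h(i\mid X)$), bound by $h(i\mid X)\le 1$, and finally sum over $i$ using $\sum_i \Pr_{Y\mid X}(i\mid X)=1$ together with the definition of $\uncertainty_{h_\theta}$. Your explicit invocation of the tower property via $\Sigma\subseteq\sigma(\Sigma_i)$ to justify the two-stage conditioning is in fact more careful than the paper, which simply writes ``summing over $i$ and taking $\E[\cdot\mid\Sigma]$'' without spelling this out.
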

We observe that, without any additional assumptions, $h(i|X)$ is automatically $\Sigma_i^\mathrm{samp}$- and $\Sigma_i^\mathrm{pre}$-measurable. If one chooses $\Sigma_i^{\mathrm{samp}}=\sigma(X)$, then  $\E[\E_{Y\mid X}[\cdot]\mid \Sigma] = \E_{Y\mid X}[\cdot]$. In this case, \cref{thm:B-equals-V} holds for every example $x\in \cX$. 

\begin{corollary}\label{cor:perfect-calibration}
If $h$ has perfect calibration with respect to $\{\Sigma_{i}\}_{i\in [K]}$, then $ \E[\Delta( i\mid X)\mid \Sigma_{i}] = 0$ and therefore we have \begin{align}
  \E\left[\E_{Y\mid X}[
\BVG(i)
 ]\mid\Sigma_i\right]\nonumber
 ={}& \E\left[\Pr_{Y\mid X}(i\mid X)- \E_\theta h_\theta(i\mid X)^2 \mid\Sigma_i\right]\\
 ={} \E\left[h(i\mid X)- \E_\theta h_\theta(i\mid X)^2 \mid\Sigma_i\right]
 ={}& \E\left[ \E_\theta  \left[h_\theta(i\mid X)(1-h_\theta(i\mid X))\right] \mid\Sigma_i\right]\,,\label{eq:h-1-h}\\
 \E\left[\E_{Y\mid X}[
\BVG
 ]\mid\Sigma\right]
    ={}&
    \E\left[\uncertainty_{h_\theta}(X) \mid\Sigma\right]
    \ge 0\,.
\end{align}
Moreover, if $h_\theta$ outputs a one-hot vector, we have $\|h_\theta(\cdot\mid X)\|_2^2 = 1$ and therefore $\E\left[\E_{Y\mid X}[
\BVG
]\mid\Sigma\right] = 0$. In other words, in expectation $\E[\E_{Y\mid X}[\cdot]\mid \Sigma]$, the squared bias equals variance.
\end{corollary}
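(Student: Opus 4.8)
The plan is to obtain the entire corollary by specializing \cref{thm:B-equals-V} to the perfectly-calibrated case, in which \cref{defn:calibration} gives $\E[\Delta(i\mid X)\mid\Sigma_i]=0$ for every $i\in[K]$. No new machinery should be needed; the work is entirely bookkeeping with conditional expectations, and I would also carry along the standing assumption (inherited from \cref{thm:B-equals-V}) that $h(i\mid X)$ is $\Sigma_i$-measurable, which, as noted after the theorem, is automatic for $\Sigma_i^{\mathrm{samp}}$ and $\Sigma_i^{\mathrm{pre}}$.

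First, for the per-class chain of identities, I would start from the first display of \cref{thm:B-equals-V}, $\E_{Y\mid X}[\BVG(i)] = 2h(i\mid X)\Delta(i\mid X) + \Pr_{Y\mid X}(i\mid X) - \E_\theta h_\theta(i\mid X)^2$, and apply $\E[\,\cdot\mid\Sigma_i]$ to both sides. Since $h(i\mid X)$ is $\Sigma_i$-measurable, it pulls out of the conditional expectation, so the cross term becomes $2h(i\mid X)\,\E[\Delta(i\mid X)\mid\Sigma_i]=0$, giving the first equality. The second equality (replacing $\Pr_{Y\mid X}(i\mid X)$ by $h(i\mid X)$ inside $\E[\,\cdot\mid\Sigma_i]$) follows from $\E[\Pr_{Y\mid X}(i\mid X)\mid\Sigma_i] = \E[h(i\mid X)-\Delta(i\mid X)\mid\Sigma_i] = \E[h(i\mid X)\mid\Sigma_i]$, again using $\E[\Delta(i\mid X)\mid\Sigma_i]=0$. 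The third equality is just the definition $h(i\mid X)=\E_\theta h_\theta(i\mid X)$ together with linearity of $\E_\theta$, rewriting $h(i\mid X)-\E_\theta h_\theta(i\mid X)^2 = \E_\theta[h_\theta(i\mid X)(1-h_\theta(i\mid X))]$.

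Second, for the total identity, I would take the total-$\BVG$ equation of \cref{thm:B-equals-V}, $\E[\E_{Y\mid X}[\BVG]\mid\Sigma] = \E[\uncertainty_{h_\theta}(X)\mid\Sigma] + 2\sum_{i\in[K]}\E[\E[\Delta(i\mid X)\mid\Sigma_i]\,h(i\mid X)\mid\Sigma]$, and note that each summand vanishes because its inner factor $\E[\Delta(i\mid X)\mid\Sigma_i]$ is identically zero, leaving $\E[\E_{Y\mid X}[\BVG]\mid\Sigma] = \E[\uncertainty_{h_\theta}(X)\mid\Sigma]$. Nonnegativity then follows from $\uncertainty_{h_\theta}(x) = 1 - \E_\theta\|h_\theta(\cdot\mid x)\|_2^2$ and the elementary fact that a probability vector $p$ satisfies $\|p\|_2^2 \le \|p\|_1 = 1$, so $\uncertainty_{h_\theta}(x)\ge 0$ pointwise and hence $\E[\uncertainty_{h_\theta}(X)\mid\Sigma]\ge 0$.

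Finally, for the one-hot case, if every $h_\theta(\cdot\mid X)$ is one-hot then $\|h_\theta(\cdot\mid X)\|_2^2 = 1$ almost surely, so $\uncertainty_{h_\theta}(X)=0$ a.s. and therefore $\E[\E_{Y\mid X}[\BVG]\mid\Sigma]=0$; since $\BVG = \B - \Vari$, this is precisely the assertion that squared bias equals variance in the averaged sense $\E[\E_{Y\mid X}[\,\cdot\,]\mid\Sigma]$. I do not anticipate a genuine obstacle: the only points needing care are (i) justifying that $h(i\mid X)$ may be extracted from $\E[\,\cdot\mid\Sigma_i]$, which is exactly where the $\Sigma_i$-measurability hypothesis of \cref{thm:B-equals-V} enters, and (ii) the inequality $\|p\|_2^2\le\|p\|_1$ for probability vectors used for nonnegativity of the uncertainty.
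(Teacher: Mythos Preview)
Your proposal is correct and follows exactly the route the paper intends: the corollary is stated as an immediate specialization of \cref{thm:B-equals-V} under perfect calibration, and the paper provides no separate proof beyond that. Your bookkeeping (pulling $h(i\mid X)$ out of $\E[\cdot\mid\Sigma_i]$ via measurability, using $\E[\Delta(i\mid X)\mid\Sigma_i]=0$ twice, rewriting $h(i\mid X)-\E_\theta h_\theta(i\mid X)^2$ via linearity of $\E_\theta$, and the $\|p\|_2^2\le\|p\|_1$ bound for nonnegativity) fills in precisely the details the paper leaves implicit.
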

\cref{cor:perfect-calibration} shows that if $h$ has perfect calibration, then in expectation $\E[\E_{Y\mid X}[\cdot]\mid \Sigma]$, the variance is upper bounded by the bias and the gap between them is $\uncertainty_{h_\theta}(X)
$. If $h_\theta(\cdot\mid X)$ is highly confident (i.e., $\max_{i\in [K]} h_\theta(\cdot\mid X) \approx 1$), then $\B\approx \Vari$. The extreme case is that $h_\theta$ outputs a one-hot vector.  

\begin{corollary}\label{cor:entrywise-bv}
If $h$ has perfect calibration with respect to $\{\Sigma_{i}\}_{i\in [K]}$ and  $h_\theta(i\mid X)\to a $ for every $\theta$ ($a$ is either $0$ or $1$), then
$
\E\left[\E_{Y\mid X}\left[\ebias(i)- \evari(i)\right]\mid \Sigma_i\right]\to 0\,.
$
\end{corollary}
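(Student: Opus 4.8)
The plan is to reduce the entry-wise alignment to two elementary vanishing facts and combine them by the triangle inequality; this route also stays close to, and slightly strengthens, the squared-quantity identity \eqref{eq:h-1-h} of \cref{cor:perfect-calibration}. Everything is understood along a sequence (or net) of ensembles with $\{\Sigma_i\}_{i\in[K]}$ held fixed, and all the quantities involved are bounded (by $1$, since $h_\theta(i\mid X)\in[0,1]$), so dominated convergence applies both unconditionally and conditionally on $\Sigma_i$.

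First I would establish that $\E[\evari(i)\mid\Sigma_i]\to 0$: since $h_\theta(i\mid X)\to a$ for every $\theta$, dominated convergence gives $h(i\mid X)=\E_\theta h_\theta(i\mid X)\to a$, hence $\evari^2(i)=\E_\theta(h_\theta(i\mid X)-h(i\mid X))^2\to 0$ and $\evari(i)\to 0$; as $\evari(i)$ is bounded and $Y$-independent, the conditional expectation also tends to $0$. Next I would establish that $\E[\E_{Y\mid X}[\ebias(i)]\mid\Sigma_i]\to 0$. Here I would use $h(i\mid X)\in[0,1]$ and $\1\{Y=i\}\in\{0,1\}$ to compute $\E_{Y\mid X}[\ebias(i)]=\P_{Y\mid X}(i\mid X)+h(i\mid X)(1-2\P_{Y\mid X}(i\mid X))$, which for $a=0$ is nonnegative and bounded above by $\P_{Y\mid X}(i\mid X)+h(i\mid X)$, and for $a=1$ lies between $0$ and $(1-\P_{Y\mid X}(i\mid X))+(1-h(i\mid X))$. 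Taking $\E[\cdot\mid\Sigma_i]$ and invoking perfect calibration, $\E[\P_{Y\mid X}(i\mid X)\mid\Sigma_i]=\E[h(i\mid X)\mid\Sigma_i]$, so the upper bound collapses to $2\E[h(i\mid X)\mid\Sigma_i]\to 2a=0$ (resp.\ $2(1-\E[h(i\mid X)\mid\Sigma_i])\to 2(1-a)=0$). This is the one place where the calibration hypothesis is genuinely used.

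Then I would combine the two facts via the triangle inequality $|\ebias(i)-\evari(i)|\le\ebias(i)+\evari(i)$ (both are nonnegative), giving
\[
\bigl|\E[\E_{Y\mid X}[\ebias(i)-\evari(i)]\mid\Sigma_i]\bigr|\le\E[\E_{Y\mid X}[\ebias(i)]\mid\Sigma_i]+\E[\evari(i)\mid\Sigma_i]\longrightarrow 0,
\]
which is the claim. A variant staying closer to \eqref{eq:h-1-h} would observe that $h_\theta(i\mid X)(1-h_\theta(i\mid X))\to a(1-a)=0$ and is bounded by $1/4$, so \eqref{eq:h-1-h} already yields $\E[\E_{Y\mid X}[\ebias^2(i)-\evari^2(i)]\mid\Sigma_i]\to 0$; to pass to the unsquared quantities one uses $|u-v|\le\sqrt{|u-v|(u+v)}=\sqrt{|u^2-v^2|}$ for $u,v\ge0$, Jensen's inequality for the concave map $t\mapsto\sqrt t$, and $|\ebias^2(i)-\evari^2(i)|\le\ebias^2(i)+\evari^2(i)$, whose $\Sigma_i$-average vanishes by slight variants of the two facts above.

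The step I expect to be the main obstacle is exactly this passage from the squared quantities, at which \cref{cor:perfect-calibration} naturally operates, to $\ebias(i)$ and $\evari(i)$ themselves: because $t\mapsto\sqrt t$ is not Lipschitz at $0$, the approximate equality $\ebias^2(i)\approx\evari^2(i)$ does not by itself give $\ebias(i)\approx\evari(i)$, and one must argue instead that each of $\ebias(i)$ and $\evari(i)$ vanishes separately in the $\Sigma_i$-average, which is why the calibration hypothesis is needed beyond the squared identity. A secondary subtlety is the reading of the hypothesis: I would treat $a$ as a fixed constant in $\{0,1\}$; should one instead want $h_\theta(i\mid X)\to a(X)\in\{0,1\}$, one needs $a(X)$ to be $\Sigma_i$-measurable — consistent with the standing assumption on $h(i\mid X)$ — so that the cross term in the bias computation satisfies $\E[a(X)\P_{Y\mid X}(i\mid X)\mid\Sigma_i]=a(X)\E[h(i\mid X)\mid\Sigma_i]\to a(X)^2=a(X)$ and cancels as required.
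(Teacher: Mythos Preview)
Your proposal is correct and takes a genuinely different route from the paper. The paper works from the squared identity \eqref{eq:h-1-h}: under perfect calibration $\E[\E_{Y\mid X}[\ebias^2(i)-\evari^2(i)]\mid\Sigma_i]=\E[\E_\theta[h_\theta(i\mid X)(1-h_\theta(i\mid X))]\mid\Sigma_i]\to 0$ by bounded convergence; writing $\delta(i)=\ebias(i)-\evari(i)$ and $\ebias^2-\evari^2=\delta(\delta+2\evari)$, it then argues that the conditional expectation of $\delta(i)^2$ is squeezed to zero and finishes via $L^2\Rightarrow L^1$. You instead show directly that $\E[\evari(i)\mid\Sigma_i]\to 0$ (no calibration needed) and $\E[\E_{Y\mid X}[\ebias(i)]\mid\Sigma_i]\to 0$ (this is exactly where calibration enters, to trade $\P_{Y\mid X}(i\mid X)$ for $h(i\mid X)$ under $\E[\cdot\mid\Sigma_i]$), and conclude by the triangle inequality. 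Your route is more elementary, makes the role of the calibration hypothesis fully explicit, and sidesteps the delicate passage from $\ebias^2-\evari^2\to 0$ to $\ebias-\evari\to 0$ that you correctly flag as the main obstacle; the paper's route, once the squeezing step is made precise, buys the slightly stronger intermediate conclusion that $\delta(i)\to 0$ in an $L^2$ rather than merely $L^1$ sense.
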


\cref{cor:entrywise-bv} shows that when $h_\theta(i\mid X)$ is highly confident ($h_\theta(i\mid X)\to a\in \{0,1\}$), then $\ebias(i)-\evari(i)\to 0$ in the mean $\E\left[\E_{Y\mid X}[\cdot]\mid\Sigma_i\right]$, which is entrywise bias-variance correlation. Moreover, since \begin{align*}
 & \E\left[\E_{Y\mid X}\left[\ebias(i)- \evari(i)\right]\mid \Sigma_i\right] \\
={}&   \E[\E_{Y\mid X}[\1\{Y=i\}(1-h(i\mid X)-\evari(i)) 
 + \1\{Y\ne i\}(h(i\mid X)-\evari(i))]\mid \Sigma_i]\,,
\end{align*} if $\Pr_{Y\mid X}(Y=i\mid \Sigma_i)\approx 1$, we have $h(i\mid X)\approx 1 -\evari(i)$; and if $\Pr_{Y\mid X}(Y=i\mid \Sigma_i)\approx 0$, we have $h(i\mid X)\approx \evari(i)$.

\begin{SCtable}
    \centering

    \resizebox{0.5\linewidth}{!}{
    \begin{tabular}{l|l|l|l}
    \toprule\toprule
\textbf{Width factor} & \textbf{LHS of \eqref{eq:E_le_2CWCE}} & \textbf{RHS of \eqref{eq:E_le_2CWCE}} &  $\operatorname{CWCE}_{\Pr}^{\{\Sigma^\mathrm{bin}_i\}}$ \\ 
\toprule
$\nicefrac{1}{2}$ & $0.64$ & $0.90$ & $0.45$\\
$\nicefrac{1}{4}$ & $0.63$ & $0.74$ & $0.37$\\
$\nicefrac{1}{8}$ & $0.66$ & $0.70$ & $0.35$\\
$\nicefrac{1}{16}$ & $0.76$ & $0.90$ & $0.45$\\
\bottomrule
\end{tabular}
}
\caption{ Empirical summary of values for $\operatorname{CWCE}_{\Pr}^{\{\Sigma^\mathrm{bin}_i\}}$ and the bias and variance, both calculated  w.r.t. $\Sigma_i^{\mathrm{bin}}$ (where we use $20$ equally spaced bins) from ResNet-8 models on CIFAR-10 across varying width. 
\label{tab:cwce_empirical}}

\end{SCtable}

\paragraph{Experiments confirm our theory.} We now present empirical results that support our theory. We empirically verify the inequality~\eqref{eq:E_le_2CWCE} across models of varying widths when using $\Sigma_i^\mathrm{bin}$ (where we use 20 equally spaced bins) for the definitions of calibration, uncertainty, bias and variance. Note that calibration requires estimating the true probability distribution, and so direct computation for $\Sigma_i^\mathrm{pred}$ or $\Sigma_i^\mathrm{samp}$ is infeasible.
In Table~\ref{tab:cwce_empirical} we empirically verify the relationship between bias, variance, and uncertainty when using $\Sigma_i^\mathrm{bin}$. 
The left-hand side of \eqref{eq:E_le_2CWCE}, which is $\left|\E\left[\E_{Y\mid X}[
\BVG
] - \uncertainty_{h_\theta}(X) \mid\Sigma\right]\right|$, is computed from the bias and variance values. The right-hand side of \eqref{eq:E_le_2CWCE} is $2\operatorname{CWCE}_{\Pr}^{\{\Sigma^\mathrm{bin}_i\}_{i\in [K]}}$. We see that across architectures, \cref{eq:E_le_2CWCE} of \cref{thm:B-equals-V} holds.

\section{Neural Collapse Implies Approximate Bias-Variance Alignment}
\label{s:neural_collapse}

Neural collapse~\citep{papyan2020prevalence}  is a phenomenon pertaining the last layer features and classifier weights of a trained deep classification model. 
This section considers a statistical modeling of the prediction of the network ensemble $\{h_\theta\}$ on an arbitrary test data $(X, Y)$ that is motivated from neural collapse, upon which we derive a bound on the ratio between $\B$ and $\Vari$. 

\paragraph{Modeling assumption motivated by neural collapse.}
We assume that each model $h_\theta$ in the ensemble can be written as $h_\theta(\cdot \mid x)= \softmax(W \psi_\tau(x))$, where $\theta = (\tau, W)$ denotes trainable parameters.
In above, we refer to $\psi_\tau(x) \in \mathbb{R}^d$ as the \emph{feature} vector, and $W$ as the classifier weight. 
The \emph{neural collapse} phenomenon states that during training, $W$ converges to a rotated and scaled version of the simplex equiangular tight frame (ETF) matrix $W^\mathrm{ETF}$. That is,
\begin{equation}\label{eq:assumption-classifier-weight}
    W \propto W^\mathrm{ETF}R^\top\,,~\text{where} ~ W^\mathrm{ETF} = \begin{pmatrix}w_1^\mathrm{ETF}, \ldots, w_K^\mathrm{ETF}\end{pmatrix}  \triangleq \sqrt{\frac{K}{K-1}} \left( I_K - \frac{1}{K} \mathbf{1}_K \mathbf{1}_K^\top \right)\,, 
\end{equation}
and $R \in \mathbb{R}^{d \times K}$ is an orthogonal matrix (i.e., $R^\top R = I_K$). 
In above, $I_K\in \mathbb{R}^{K\times K}$ is an identity matrix, and $\mathbf{1}_K\in \reals^K$ is a column vector with all entries being $1$. We summarize the properties of $W^\mathrm{ETF}$ in \cref{sub:properties-WETF}. 
Moreover, for any \emph{training} data $(x^\mathrm{train}, y^\mathrm{train})$, neural collapse predicts that the feature $\psi_\tau(x^\mathrm{train})$ is aligned with its classifier weight, i.e., $\psi_\tau(x^\mathrm{train}) = s R w_{y^\mathrm{train}}^\mathrm{ETF}    $
for some $s > 0$ independent of $(x^\mathrm{train}, y^\mathrm{train})$. 

For a test sample $(X, Y)$, neural collapse does not predict the distribution of its feature $\psi_\tau(X)$. 
However, it is reasonable to assume that it slightly deviates from the training feature of class $Y$.
This motivates us to assume that $\psi_\tau(X) = R \left(s  w_Y^\mathrm{ETF} + v\right)$, where $v$ is the noise vector, which leads to $h_\theta(\cdot \mid X) = \softmax(W^\mathrm{ETF} (s  w_Y^\mathrm{ETF} + v))$.
Hence, the prediction of the network ensemble $\{h_\theta\}$ may be modeled as follows. 

\begin{assumption}\label{assumption:neural-collapse}
    We assume 
       $ \{h_\theta(\cdot \mid X)\} = \{\softmax\left(W^\mathrm{ETF} (s  w_Y^\mathrm{ETF} + v)\right)\}$ for any test sample $(X, Y)$,
    where $v$ has i.i.d.\ entries drawn according to $-\beta \sqrt{\frac{K}{K-1}} v_i \sim \gumbel(\mu, \beta)$.
\end{assumption}
 \cref{sec:verify-nc-assumption} shows that the assumption on $v$ aligns with the observation in practical networks.

\paragraph{Bias-variance analysis.}
\cref{thm:neural-collapse} computes the entrywise bias and standard deviation under \cref{assumption:neural-collapse}, for entries corresponding to the true class. 

\begin{theorem}\label{thm:neural-collapse}
Consider the model ensemble in \cref{assumption:neural-collapse}. 
Let $K'=K-1$,   $c=\frac{e^{sK/K'}}{K'}$ and \begin{equation*}\resizebox{\linewidth}{!}{
    $\phi_{K'}(c)\triangleq{} \frac{c^{K'-1} \left(c-\frac{1}{K'}\right)^{-K'} (c K'-K' \log (c K')-1)}{c K'-1}
     + \frac{\left(c-\frac{1}{K'}\right)^{-K'-1}}{K'}\sum_{j=1}^{K'-1} \frac{(K'-j) (c-1/K')^j c^{-j+K'-1}}{j}\,.$
}
\end{equation*}
Then we have $\ebias(Y) = \left|c\phi_{K'}(c)-1\right|$ and $ \evari(Y) = c\sqrt{-\left(\frac{d\phi_{K'}(c)}{dc} + \phi_{K'}(c)^2\right)}$,
where $\ebias(\cdot)$ and $\evari(\cdot)$ are defined in \eqref{eq:def-entry-bv} of \cref{def:bv}.
\end{theorem}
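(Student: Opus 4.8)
The plan is to collapse the $\softmax$ in \cref{assumption:neural-collapse} to a ratio of independent Gamma-type random variables, compute the mean of its $Y$-th coordinate in closed form, and then read off the second moment almost for free via a differential identity in $c$. \textbf{Step 1 (Reduction to a Gamma ratio).} Using that $W^{\mathrm{ETF}}$ is symmetric with $W^{\mathrm{ETF}}\mathbf 1_K = 0$, $W^{\mathrm{ETF}}w_Y^{\mathrm{ETF}} = \frac{K}{K'}\big(e_Y - \frac1K\mathbf 1_K\big)$, and $W^{\mathrm{ETF}}v = \sqrt{K/K'}\big(v - \bar v\,\mathbf 1_K\big)$ (see \cref{sub:properties-WETF}), together with the shift-invariance of $\softmax$, for every realization of $v$ the $Y$-th coordinate of $h_\theta(\cdot\mid X)$ equals $e^{sK/K'}E_Y/\big(e^{sK/K'}E_Y + \sum_{k\ne Y}E_k\big)$ with $E_k \triangleq e^{\sqrt{K/K'}v_k}$. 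Since the assumption states $-\beta\sqrt{K/K'}\,v_k \sim \gumbel(\mu,\beta)$, a one-line CDF computation gives $E_k \sim \Exp(e^{\mu/\beta})$, and as the ratio is homogeneous of degree zero in $(E_1,\dots,E_K)$ we may assume $E_k \simiid \Exp(1)$. Writing $e^{sK/K'} = cK'$ and $G \triangleq \sum_{k\ne Y}E_k \sim \mathrm{Gamma}(K',1)$ (independent of $E_Y$), this gives
\[ h_\theta(Y\mid X) = \frac{cK'\,E_Y}{cK'\,E_Y + G}, \qquad E_Y \sim \Exp(1),\quad G \sim \mathrm{Gamma}(K',1)\ \text{independent}. \]
Because the ensemble in \cref{assumption:neural-collapse} is by definition the law of $v$, the quantities $\ebias(Y)$ and $\evari(Y)$ are precisely moments of this explicit random variable.

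\textbf{Step 2 (First moment, hence the bias).} I would integrate out $E_Y$ first: for fixed $G=g$, $\E[h_\theta(Y\mid X)\mid G=g] = 1 - \frac{g}{cK'}e^{g/(cK')}E_1\big(g/(cK')\big)$ where $E_1(z) \triangleq \int_z^\infty t^{-1}e^{-t}\,dt$. Integrating this against the $\mathrm{Gamma}(K',1)$ density and substituting $w = g/(cK')$ reduces $\E[h_\theta(Y\mid X)]$ to a multiple of $\int_0^\infty w^{K'}e^{-(cK'-1)w}E_1(w)\,dw$. That integral I would evaluate from the identity $\int_0^\infty e^{-pw}E_1(w)\,dw = p^{-1}\ln(1+p)$ by differentiating $K'$ times in $p$ (equivalently by repeated integration by parts), which produces one term proportional to $\ln(cK')/(cK'-1)^{K'+1}$ plus a finite rational sum $\sum_{j}1/\big(j(cK')^j(cK'-1)^{K'-j+1}\big)$. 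Rewriting throughout via $cK'-1 = K'(c-1/K')$ and collecting terms should match $\E[h_\theta(Y\mid X)] = c\,\phi_{K'}(c)$ with $\phi_{K'}$ as stated; since $\1\{Y=Y\}=1$, this yields $\ebias(Y) = \big|c\,\phi_{K'}(c)-1\big|$.

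\textbf{Step 3 (Second moment, and the main obstacle).} The shortcut for the variance is the pointwise identity $c\,\partial_c h_\theta(Y\mid X) = h_\theta(Y\mid X)\big(1-h_\theta(Y\mid X)\big)$, immediate from $h = cK'E_Y/(cK'E_Y+G)$. Taking expectations (legitimate by dominated convergence, since $h\in[0,1]$) gives $c\,\frac{d}{dc}\E[h_\theta(Y\mid X)] = \E[h_\theta(Y\mid X)] - \E[h_\theta(Y\mid X)^2]$, so with $\E[h_\theta(Y\mid X)] = c\phi_{K'}(c)$ we obtain $\E[h_\theta(Y\mid X)^2] = -c^2\phi_{K'}'(c)$, hence $\evari(Y)^2 = \E[h_\theta(Y\mid X)^2] - \E[h_\theta(Y\mid X)]^2 = -c^2\big(\phi_{K'}'(c) + \phi_{K'}(c)^2\big)$, i.e.\ the claimed $\evari(Y) = c\sqrt{-(\phi_{K'}'(c)+\phi_{K'}(c)^2)}$. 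The one genuinely laborious part is the closed-form evaluation in Step 2 and the bookkeeping needed to match the exact index ranges, powers of $(c-1/K')$, and constants appearing in $\phi_{K'}$; the ETF reduction, the Gumbel-to-exponential step, and the variance identity are all routine. I would also check separately the edge case $K'=1$ (binary classification, where the finite sum in $\phi_{K'}$ is empty and $G\sim\Exp(1)$) to confirm the general formula specializes correctly.
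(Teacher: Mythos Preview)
Your proposal is correct and follows essentially the same architecture as the paper's proof: reduce the logits via the ETF identities and softmax shift-invariance, turn the Gumbel noise into i.i.d.\ exponentials, express $h_\theta(Y\mid X)$ as a ratio depending on $c$, and obtain the second moment by differentiating the first moment in $c$. The only cosmetic difference is that the paper packages the ratio as $w_1 = c/(c+F)$ with $F\sim\F(2K',2)$ and integrates $\E[1/(c+F)]$ directly against the $F$-density to read off $\phi_{K'}(c)$, whereas you keep $E_Y$ and $G\sim\mathrm{Gamma}(K',1)$ separate and reach the same closed form through the exponential-integral/Laplace-transform route.
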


Due to technical difficulties, \cref{thm:neural-collapse} does not provide bias and variance for entries other than those corresponding to the true class. 
However, under the special case of binary classification, we are able to provide bias and variance for all entries as follows.

\begin{corollary}\label{cor:K=2}
If $K=2$, then for $i\in \{1,2\}$ we have $\ebias(i) = \frac{\left| \log (c) c-c+1\right| }{(c-1)^2}$, $ \evari(i) = \frac{\sqrt{c \left((c-1)^2-c \log ^2(c)\right)}}{(c-1)^2}$, where $c=e^{2s}$.
    Furthermore, we have: (1) on the linear scale, 
        $\frac{\ebias(i)}{\evari(i)}$ is a decreasing function of $s$ and %
         $   1.74> \frac{\ebias(i)}{\evari(i)}\ge \frac{2s-1}{e^s}$, or equivalently,$ 1.74^2 > \frac{\Bias^2_{h_\theta, (X, Y)}}{\Vari}\ge \frac{(2s-1)^2}{e^{2s}}$;
        (2) on the logarithmic scale, 
        $\frac{\log\Bias_{h_\theta, (X, Y)}(i)}{\log\Vari(i)}\in (0.557, 2)$. 
    
\end{corollary}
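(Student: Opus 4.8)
The plan is to specialize \cref{thm:neural-collapse} to $K=2$ (so $K'=1$) and then analyze the resulting scalar function of $s$. First I would substitute $K'=1$ into the formula for $\phi_{K'}(c)$: the second summation $\sum_{j=1}^{K'-1}$ is empty and vanishes, so $\phi_1(c) = \frac{c^{0}(c-1)^{-1}(c - \log c - 1)}{c-1} = \frac{c - \log c - 1}{(c-1)^2}$. Then $\ebias(Y) = |c\phi_1(c) - 1| = \left|\frac{c(c-\log c - 1)}{(c-1)^2} - 1\right| = \left|\frac{c^2 - c\log c - c - (c-1)^2}{(c-1)^2}\right| = \left|\frac{c - c\log c - 1}{(c-1)^2}\right| = \frac{|\log(c)c - c + 1|}{(c-1)^2}$, matching the claim; similarly I would differentiate $\phi_1$ and assemble $\evari(Y) = c\sqrt{-(\phi_1'(c) + \phi_1(c)^2)}$, simplifying to $\frac{\sqrt{c((c-1)^2 - c\log^2 c)}}{(c-1)^2}$ with $c = e^{2s}$. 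For the entry $i$ \emph{not} equal to $Y$, I would invoke the $K=2$ symmetry: since $W^{\mathrm{ETF}}$ for $K=2$ has columns $\pm 1$ (up to the scaling $\sqrt{K/(K-1)} = \sqrt 2$, and $h_\theta$ is a $2$-vector summing to $1$), the two softmax outputs are $h_\theta(1\mid X)$ and $1 - h_\theta(1\mid X)$, so $\ebias(1) = \ebias(2)$ and $\evari(1) = \evari(2)$; this is why the stated formulas hold for both $i\in\{1,2\}$.

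Next I would establish the ratio bounds. Write $r(s) \triangleq \ebias(i)/\evari(i) = \frac{|\log(c)c - c + 1|}{\sqrt{c((c-1)^2 - c\log^2 c)}}$ with $c = e^{2s}$. For $s > 0$ we have $c > 1$; I would check the sign of $\log(c)c - c + 1$ (it is positive for $c>1$ since $g(c) = c\log c - c + 1$ has $g(1)=0$, $g'(c) = \log c > 0$), so the absolute value can be dropped. The lower bound $r(s) \ge \frac{2s-1}{e^s}$ reduces, after squaring and substituting $c = e^{2s}$, to the polynomial-in-$(c,\log c)$ inequality $(c\log c - c + 1)^2 e^{2s} \ge (2s-1)^2 c((c-1)^2 - c\log^2 c)$, i.e. using $\log c = 2s$ and $e^{2s}=c$: $(2sc - c + 1)^2 c \ge (2s-1)^2 c((c-1)^2 - 4s^2 c)$; dividing by $c$ gives $(2sc - c + 1)^2 \ge (2s-1)^2((c-1)^2 - 4s^2 c)$, which I would verify as an inequality in the two variables $c\ge 1$, $s = \frac12\log c \ge 0$ — this is the kind of one-variable calculus estimate (reduce to a single variable via $s = \frac12 \log c$, or treat $s$ as free and check monotonicity/boundary) that I expect to be the main technical obstacle. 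For the upper bound, I would show $r$ is decreasing in $s$ (equivalently analyze $\frac{d}{dc}\log r(c) < 0$ for $c>1$) and compute $\lim_{s\to 0^+} r(s)$: as $c\to 1$, Taylor-expand with $c = 1+t$, getting numerator $c\log c - c + 1 \sim t^2/2$ and the radicand $c((c-1)^2 - c\log^2 c) \sim t^2 - t^2 = $ higher order; carefully expanding to order $t^4$ gives numerator $\sim t^2/2 - t^3/6 + \cdots$ so numerator${}^2 \sim t^4/4$, and denominator-radicand $\sim (1+t)(t^2 - (1+t)(t - t^2/2 + t^3/3)^2)$; expanding $(t - t^2/2 + t^3/3)^2 = t^2 - t^3 + (11/12)t^4 + \cdots$ and $(c-1)^2 = t^2$ yields radicand $\sim t^4/3 + \cdots$, so $r(0^+)^2 \to (t^4/4)/(t^4/3) = 3/4$, giving $r(0^+) = \sqrt{3}/2 \approx 0.866$ — wait, the claimed bound is $1.74 \approx 2\cdot 0.866 = \sqrt{3}$, so I expect the supremum is actually $\sqrt{3}$; I would recheck the leading coefficients (likely numerator${}^2 \sim t^4$, radicand $\sim t^4/3$, giving $r^2 \to 3$, $r\to\sqrt3 \approx 1.732 < 1.74$). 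The strictness "$1.74 > r(s)$" then follows from the limit being $\sqrt 3 < 1.74$ together with monotone decrease.

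Finally, for the logarithmic-scale claim $\frac{\log\Bias_{h_\theta,(X,Y)}(i)}{\log\Vari(i)} = \frac{\log\ebias(i)}{\log\evari^2(i)} = \frac{\log\ebias(i)}{2\log\evari(i)} \in (0.557, 2)$, I would note $\log\ebias(i) = \log\evari(i) + \log r(s)$, so the ratio equals $\frac{\log\evari(i) + \log r(s)}{2\log\evari(i)} = \frac12 + \frac{\log r(s)}{2\log\evari(i)}$. Since for $s$ in the relevant range both $\ebias$ and $\evari$ are small (so $\log\evari(i) < 0$), and $\log r(s)$ ranges over roughly $(\log\tfrac{2s-1}{e^s}, \log\sqrt3)$ which straddles $0$, I would bound $\frac{\log r(s)}{2\log\evari(i)}$ by combining the linear-scale bounds on $r$ with a lower bound on $|\log\evari(i)|$. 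The cleaner route: from $\frac{2s-1}{e^s} \le r(s) < \sqrt 3$ and $\ebias(i) = r(s)\evari(i)$, one gets $\ebias(i) < \sqrt 3\,\evari(i) \le \sqrt 3\,\evari(i)^{?}$... more directly, I would express everything through $c$, take logs, and verify the two-sided bound numerically-assisted by monotonicity, since $\phi_1$ and its derivative are explicit. The endpoint $0.557$ presumably arises as $\tfrac12 + \tfrac12\inf_s\frac{\log r(s)}{\log\evari(i)}$ evaluated in a limiting regime; pinning down exactly which limit ($s\to 0$ or $s\to\infty$) attains it, and checking the bound is not violated in between, is the remaining bookkeeping. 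I expect the single hardest step overall to be the rigorous proof of the lower bound $r(s) \ge \frac{2s-1}{e^s}$ uniformly in $s > 0$, since it is a genuinely nontrivial transcendental inequality rather than a limit computation.
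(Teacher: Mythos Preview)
Your derivation of the explicit formulas and the linear-scale upper bound is essentially the paper's approach: specialize $\phi_{K'}$ to $K'=1$, use $h_\theta(2\mid X)=1-h_\theta(1\mid X)$ for the other entry, then combine monotonicity of the ratio with the limit $r(s)\to\sqrt3$ as $c\to1^+$. (Your first Taylor coefficient for the radicand is off; the correct leading term is $t^4/12$, not $t^4/3$, which is what produces $r^2\to 3$.) For the lower bound, the paper also squares and rearranges, but the clean way to close it is to write everything in $c$ and reduce to $\log(c)\,f(c)\ge 0$ with $f(c)=-2c+c\log^3 c-2c\log^2 c+(3c-1)\log c+2$; then $f(1)=0$ and $f'(c)=-\tfrac1c+\log^3 c+\log^2 c-\log c+1\ge0$ for $c\ge1$ finishes it. You correctly flagged this as the main technical step, but the two-variable setup you wrote (treating $s$ and $c$ separately) obscures this reduction.

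The log-scale part is where your plan diverges and becomes shaky. Writing the ratio as $\tfrac12+\tfrac{\log r(s)}{2\log\evari(i)}$ and trying to bound the correction term via the linear-scale inequalities on $r$ does not obviously yield the stated interval, and your sketch trails off without a concrete argument. The paper's route is direct: treat the ratio $\log\bigl(\ebias(i)^2\bigr)\big/\log\bigl(\evari(i)^2\bigr)$ as a single function of $c$, observe (or verify) that it is monotone on $(1,\infty)$, and evaluate the two endpoint limits. At $c\to1^+$ the values $\ebias(i)^2\to\tfrac14$ and $\evari(i)^2\to\tfrac{1}{12}$ give $\tfrac{\log 4}{\log 12}\approx 0.558>0.557$, and as $c\to\infty$ the ratio tends to $2$. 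Be careful with the squaring: the paper's proof actually plugs $\ebias^2$ into the numerator (consistent with the summary table's $(1.114,4)$ range for $\log\Bias^2/\log\Vari$), so the quantity that hits $(0.557,2)$ is $\log\ebias/\log\evari$, and you should square before taking logs rather than work with $\log\ebias/(2\log\evari)$.
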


\cref{cor:K=2} illustrates the entrywise bias and standard deviation for binary classification. We prove the approximate correlation between the entrywise bias and standard deviation by providing an upper and lower bound for the ratio of the entrywise bias to the standard deviation. %

\section{Conclusion}
\label{s:conclusion}

We show that bias and variance align at a sample level for ensembles of deep learning models, suggesting a more nuanced bias-variance relation in deep learning. We study this phenomenon from two theoretical perspectives, calibration and neural collapse, and provide new insights into the bias-variance alignment. %

{\small
\bibliographystyle{plainnat}

\bibliography{biblio/references}
}

\newpage

\appendix
\addcontentsline{toc}{section}{Appendix} %
\part{Appendix} %
\parttoc %
\section{Societal Impact Statement}\label{sec:societal_impact}
This paper aims to expose a peculiar bias-variance alignment phenomenon and characterize it both theoretically and empirically. We do not foresee any negative societal consequences from this work.

\section{Limitations}

We identify the following limitations of our work:

\begin{compactitem}
    \item We only considered the squared loss and cross entropy loss functions. It would be interesting to extend our results to other loss functions, such as the 0/1 loss.
    \item Our theory is based on the binary classification assumption. We plan to extend it to multi-class classification in future work.
    \item It would be interesting to study the bias-variance alignment theoretically in an end-to-end manner, using tools such as the neural tangent kernel theory or a mean-field analysis.
    \item We conducted our experiments in the image classification domain. It would be interesting to verify our findings in other domains, such as natural language processing (NLP).
\end{compactitem}

We believe that these limitations do not detract from the overall significance of our work. Our findings provide new insights into the bias-variance alignment phenomenon, and we hope that this paper will stimulate further research on this important topic.

\section{Further related work.} 
It was shown that statistics such as accuracy \citep{miller2021accuracy} and disagreement \citep{baek2023agreementontheline} are highly correlated when contrasted across in-distribution and out-distribution data.
This points at a potential extension to our work to consider how our findings translate to the out of domain data.

\section{Practical applications.} 
We believe that our findings on the bias-variance alignment can be used to develop new methods for validating deep learning models and selecting generalizable models in practice.
One practical application is estimating the test error of a deep learning model using variance. This is possible because our finding is that bias and variance are aligned, and so we can estimate bias from variance. This means that even when the true labels of the test data are unavailable, we can still get a good estimate of the test error by measuring variance over multiple models on the test data. Compared to \cite{jiang2022assessing} which analyzed the alignment between disagreement and test error across the entire dataset, our method is a per-example approach and thus enables example-level validation of deep learning models. This is a novel way of validating deep learning models and selecting generalizable models in practice even when the true labels of the test data are unavailable. Additionally, our method is simple to implement, so it could be easily adopted by practitioners.
Moreover, inspired by our result, one could consider practical algorithms leveraging the observation of bias and variance alignment. As one example, one can consider routing between ensembles of models. Given two ensembles of models, one could dynamically route between such two ensembles, depending which one yields lower variance.
Given the above possible applications, we believe that our work has the potential to make a significant contribution to the field of deep learning.

\section{More Empirical Analysis of Bias-variance Alignment}\label{sec:empirical}
\subsection{Additional results on role of over-parameterization}

In \cref{sec:over-parameterization} we showed that the bias-variance alignment phenomenon becomes more pronounced for over-parameterized models, by plotting sample-wise bias-variance for three models of varying sizes in Figure~\ref{fig:BV_model_size}. 
Here we present results on five additional models of varying size Figure~\ref{fig:BV_model_size_all} that complement the results in Figure~\ref{fig:BV_model_size}. 
\begin{figure}[ht]
\centering  
\begin{subfigure}[b]{0.24\linewidth}
    \centering
    \includegraphics[width=\textwidth]{figures/bias_variance/log_boot_cf10_res8_wdiv16.png}
\end{subfigure}
\hfill
\begin{subfigure}[b]{0.24\linewidth}
    \centering
    \includegraphics[width=\textwidth]{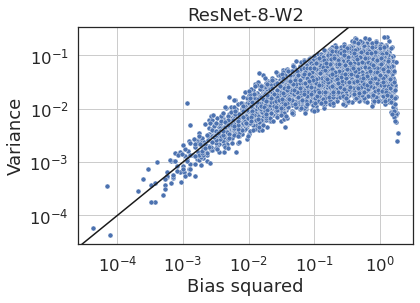}
\end{subfigure}
\hfill
\begin{subfigure}[b]{0.24\linewidth}
    \centering
    \includegraphics[width=\textwidth]{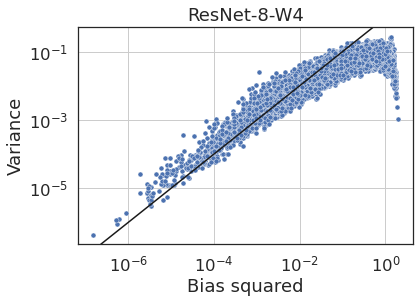}
\end{subfigure}
\hfill
\begin{subfigure}[b]{0.24\linewidth}
    \centering
    \includegraphics[width=\textwidth]{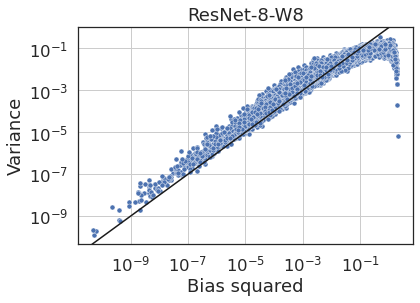}
\end{subfigure}
\\
\begin{subfigure}[b]{0.24\linewidth}
    \centering
    \includegraphics[width=\textwidth]{figures/bias_variance/log_boot_cf10_res8.png}
\end{subfigure}
\hfill
\begin{subfigure}[b]{0.24\linewidth}
    \centering
    \includegraphics[width=\textwidth]{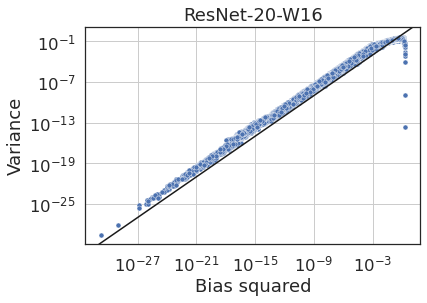}
\end{subfigure}
\hfill
\begin{subfigure}[b]{0.24\linewidth}
    \centering
    \includegraphics[width=\textwidth]{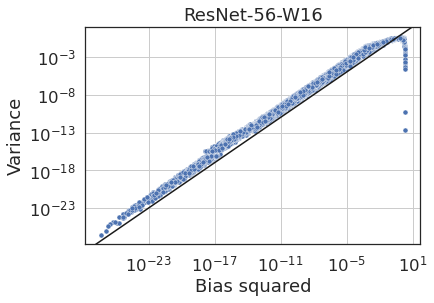}
\end{subfigure}
\hfill
\begin{subfigure}[b]{0.24\linewidth}
    \centering
    \includegraphics[width=\textwidth]{figures/bias_variance/log_boot_cf10_res110.png}
\end{subfigure}
\vspace{-0.5em}
\caption{Sample-wise bias and variance for networks of varying scale trained on CIFAR-10. }
\label{fig:BV_model_size_all}
\end{figure}

\subsection{Bias-variance decomposition of cross-entropy loss}
\label{sec:decompsing-ce-loss}

Deep neural networks for classification tasks are typically trained with the cross-entropy (CE) loss. Here, we investigate whether the bias and variance from decomposing the CE loss also exhibit the alignment phenomenon. The risk with respect to the CE loss can be decomposed as follows \citep{pfau2013generalized}:
\begin{equation}\label{eq:bias-variance-decomposition-ce}
    \underbrace{\E_\theta \langle e_Y, \log(h_\theta(\cdot \mid X))\rangle}_\mathrm{Risk} = \underbrace{D_\mathrm{KL}(e_Y \;\|\; \Bar{h}(\cdot \mid X))}_{\mathrm{Bias}^2} + \underbrace{\E_\theta D_\mathrm{KL}(\Bar{h}(\cdot\mid X)) \;\|\; h_\theta(\cdot\mid X)))}_\mathrm{Variance},
\end{equation}
where $D_\mathrm{KL}$ denotes the KL-divergence. 
In the above equation, $\Bar{h}(\cdot|X)$ is obtained by taking the expectation of the log-probabilities and then applying a softmax function. In other words, 
\begin{equation}
\Bar{h}(i \mid X)) = \frac{\exp\E_\theta \log(h_\theta(i \mid X)))}{\sum_{i'}\exp\E_\theta \log(h_\theta(i'\mid X)))}
\end{equation}

In \cref{fig:BV_CE} we present the sample-wise bias and variance from decomposing the CE loss under the same setup as that in Figure~\ref{fig:BV_ResNet50_IMNET_log}. 
In other words, the only difference between \cref{fig:BV_CE} and Figure~\ref{fig:BV_ResNet50_IMNET_log} is that the bias and variance are computed from decomposing the CE and MSE loss, respectively. 
It can be seen that the bias no longer aligns well with the variance. In \cref{s:KL_BV}, we theoretically explain this phenomenon.

\subsection{Linear vs logarithmic scale}\label{sec:linear-vs-log}

\begin{figure}[t]
\centering
\begin{minipage}[t]{0.4\linewidth}
\centering  
    \includegraphics[width=\linewidth]{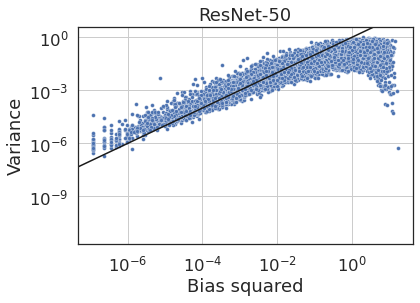}
\caption{Sample-wise bias and variance of the CE loss. }
\label{fig:BV_CE}

\end{minipage}
\qquad
\begin{minipage}[t]{0.4\linewidth}
\centering  
    \includegraphics[width=\linewidth]{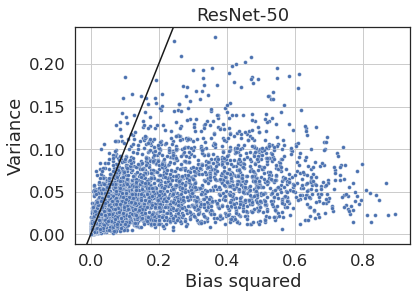}
\caption{Sample-wise bias and variance plotted in linear scale (with correctly classified samples only).}
\label{fig:BV_Linear}    
\end{minipage}
\end{figure}

In \cref{sec:intro}, the bias-variance alignment is presented first in the logarithmic scale (see Eq.~\eqref{eq:main-observation_approx_equal}) and subsequently in the linear scale (see Eq.~\ref{eq:main-observation-linear}). 
Here, we provide a rigorous analysis on their connections. 
In addition, we explain the implication of the bias-variance alignment when plotted in the linear scale, which is complemented by empirical results.

\paragraph{Connection between bias-variance alignment in linear vs logarithmic scale.}
First, we provide a formal statement on the connection between the linear and log scale of the bias-variance alignment.
\begin{proposition}\label{prop:linear-vs-log}
If $\log \Variance_{h_\theta,(x_i, y_i)} = \log \Bias^2_{h_\theta,(x_i, y_i)}  +  E_{h_\theta} + \eps_i$ where $\eps_i$ is independent of $\Bias^2_{h_\theta,(x_i, y_i)}$ and $\E_{i\sim \unif([n])}[\eps_i] = 0$, we have $$\Variance_{h_\theta,(x_i, y_i)} = {C_{h_\theta}} \Bias^2_{h_\theta,(x_i, y_i)} + {D_{h_\theta}}\Bias^2_{h_\theta,(x_i, y_i)} \eta_i\,,$$ where ${C_{h_\theta}} = e^{E_{h_\theta}} \E_{i\sim \unif([n])}[e^{\eps_i}] > 0$, ${D_{h_\theta}} = e^{E_{h_\theta}} > 0$, $\eta_i = e^{\eps_i} - \E[e^{\eps_i}]$ and $\E_{i\sim \unif([n])}[\eta_i] = 0$.
\end{proposition}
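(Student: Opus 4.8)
The plan is to prove \cref{prop:linear-vs-log} by direct exponentiation of the logarithmic-scale identity, treating the quantities as fixed numbers indexed by $i$ (the randomness/averaging is over the empirical index $i\sim\unif([n])$, so everything reduces to elementary algebra). First I would exponentiate both sides of the hypothesis $\log \Variance_{h_\theta,(x_i, y_i)} = \log \Bias^2_{h_\theta,(x_i, y_i)} + E_{h_\theta} + \eps_i$, obtaining
\begin{equation*}
\Variance_{h_\theta,(x_i, y_i)} = \Bias^2_{h_\theta,(x_i, y_i)}\, e^{E_{h_\theta}}\, e^{\eps_i}\,.
\end{equation*}
Then I would introduce the centered multiplicative noise $\eta_i \triangleq e^{\eps_i} - \E_{i\sim\unif([n])}[e^{\eps_i}]$, which satisfies $\E_{i\sim\unif([n])}[\eta_i] = 0$ by construction, and substitute $e^{\eps_i} = \E_{j\sim\unif([n])}[e^{\eps_j}] + \eta_i$ into the displayed identity. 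Distributing the $\Bias^2_{h_\theta,(x_i, y_i)} e^{E_{h_\theta}}$ factor over this sum yields exactly
\begin{equation*}
\Variance_{h_\theta,(x_i, y_i)} = \underbrace{e^{E_{h_\theta}}\E_{i\sim\unif([n])}[e^{\eps_i}]}_{C_{h_\theta}}\,\Bias^2_{h_\theta,(x_i, y_i)} + \underbrace{e^{E_{h_\theta}}}_{D_{h_\theta}}\,\Bias^2_{h_\theta,(x_i, y_i)}\,\eta_i\,,
\end{equation*}
which is the claimed decomposition, with $C_{h_\theta} > 0$ and $D_{h_\theta} > 0$ immediate since $e^{E_{h_\theta}} > 0$ and $e^{\eps_i} > 0$ for all $i$.

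A couple of minor points deserve a sentence of care rather than real work. One should note where the independence hypothesis "$\eps_i$ independent of $\Bias^2_{h_\theta,(x_i,y_i)}$" is actually used: in this purely algebraic version it is not strictly needed for the identity to hold as stated, but it is the natural condition under which the empirical average of the cross-term $\E_{i}[\Bias^2_{h_\theta,(x_i,y_i)}\eta_i]$ would factor as $\E_i[\Bias^2_{h_\theta,(x_i,y_i)}]\cdot\E_i[\eta_i] = 0$, so I would mention that this is the sense in which the residual term is "mean-zero and orthogonal to the signal," matching the informal statement in Eq.~\eqref{eq:main-observation-linear}. I would also remark that $|\eta_i|$ is bounded (since $e^{\eps_i}$ ranges over a bounded set on a finite dataset), which justifies the $\xi_i = O(\Bias^2_{h_\theta,(x_i,y_i)})\eta_i$ notation used earlier.

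There is essentially no main obstacle here — the proposition is a bookkeeping restatement of an exponential change of variables, and the only thing to get right is tracking the constants $C_{h_\theta}$, $D_{h_\theta}$ and verifying $\E_{i\sim\unif([n])}[\eta_i]=0$, which is true by the very definition of $\eta_i$. If anything, the "hard part" is purely expository: being explicit that the expectation $\E_{i\sim\unif([n])}[\cdot]$ is over the empirical distribution of sample indices (not over $\theta$), so that $E_{h_\theta}$ and $C_{h_\theta}$, $D_{h_\theta}$ are genuinely constants for a fixed ensemble, and then observing that this matches the two lines of Eq.~\eqref{eq:main-observation-linear} verbatim.
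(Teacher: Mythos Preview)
Your proof is correct and follows essentially the same route as the paper's own argument: exponentiate the log-scale identity, write $e^{\eps_i} = \E[e^{\eps_i}] + \eta_i$, and distribute. Your additional remarks on the role of the independence hypothesis and the boundedness of $\eta_i$ go slightly beyond what the paper writes, but they are accurate and harmless.
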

\begin{proof}
We exponentiate both sides of $\log \Variance_{h_\theta,(x_i, y_i)} =  \log \Bias^2_{h_\theta,(x_i, y_i)}   +  {E_{h_\theta}} + \eps_i$ where $\E_{i\sim \unif([n])}[\eps_i] = 0$ and obtain \[
\begin{split}
     \Variance_{h_\theta,(x_i, y_i)} & =  e^{E_{h_\theta}} \Bias^2_{h_\theta,(x_i, y_i)} e^{\eps_i} = e^{E_{h_\theta}} \Bias^2_{h_\theta,(x_i, y_i)} (\eta_i + \E[e^{\eps_i}])\\
     & = {C_{h_\theta}} \Bias^2_{h_\theta,(x_i, y_i)} + {D_{h_\theta}} \Bias^2_{h_\theta,(x_i, y_i)} \eta_i\,,
\end{split}
\]
where $\eta_i$ has mean 0 by definition. 
\end{proof}

\paragraph{Sample-wise bias and variance plotted in linear scale.}
Unlike in the log scale where the noise term $\epsilon_i$ (see Eq.~\eqref{eq:main-observation_approx_equal}) is independent of the bias and variance, in linear scale the noise term $\xi_i$ is multiplied by a factor that scales with the squared bias (see Eq.~\ref{eq:main-observation-linear}). 
This implies that instead of aligning along a straight line, sample-wise bias and vairance in linear scale has a cone-shaped distribution.
That is, as bias increases, an increasingly wider range of variance is covered by the samples and such a range forms a cone. 
To illustrate this, we regenerate the plot of Figure~\ref{fig:BV_ResNet50_IMNET_log} but with linear (instead of log) scale in both the x and y axis, and the result is shown in Figure \ref{fig:BV_Linear} (we also removed the incorrectly classified data points from the plot). 
Furthermore, to observe the effect of model size on bias-variance alignment in linear scale, we regenerate the Figure~\ref{fig:BV_model_size}(a-c) with x and y axis switched to linear scale and present the result in Figure~\ref{fig:BV_CIFAR100_linear}.

\begin{figure*}[t]
\centering  
\begin{subfigure}[b]{0.3\linewidth}
    \centering
    \includegraphics[width=\textwidth]{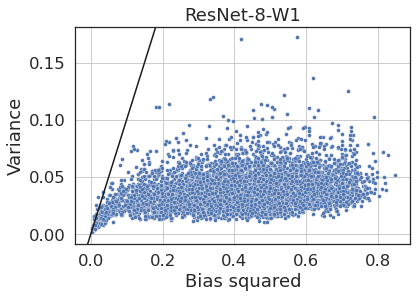}
    \caption{ResNet-8-W1}
\end{subfigure}
\hfill
\begin{subfigure}[b]{0.3\linewidth}
    \centering
    \includegraphics[width=\textwidth]{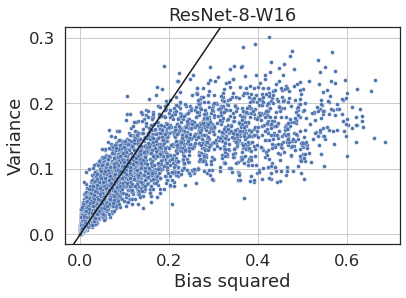}
    \caption{ResNet-8-W16}
\end{subfigure}
\hfill
\begin{subfigure}[b]{0.3\linewidth}
    \centering
    \includegraphics[width=\textwidth]{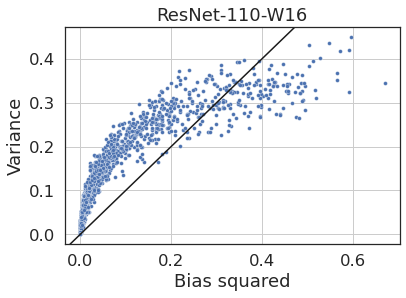}
    \caption{ResNet-110-W16}
\end{subfigure}
\vspace{-0.5em}
\caption{Same as Figure~\ref{fig:BV_model_size}(a-c) but plotted in linear scale and with correctly classified samples only. }
\label{fig:BV_CIFAR100_linear}
\end{figure*}

\subsection{Correlation to prediction uncertainty and logit norm}
\label{sec:correlation-to-correctness-uncertainty}

Figure~\ref{fig:BV_ResNet50_IMNET_log} demonstrates that the bias and variance of varying sample points exhibit the alignment phenomenon for points that are correctly classified. Here, in addition to the correctness of prediction, we also examine the relation between the alignment phenomenon with the prediction uncertainty and the logit norm.

\begin{figure}[ht]
\vspace{-1em}
\centering  
\centering
\begin{subfigure}[b]{0.32\linewidth}
    \centering
    \includegraphics[width=\textwidth]{figures/bias_variance/correctness_log_50_boot_imgnet_res50.png}
    \caption{Correctness}
    \label{fig:BV_ResNet50_IMNET_log_correctness}
\end{subfigure}
\hfill
\begin{subfigure}[b]{0.32\linewidth}
    \centering
    \includegraphics[width=\textwidth]{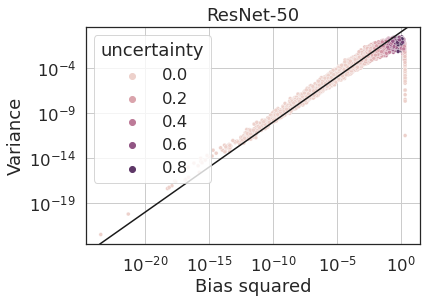}
    \caption{Uncertainty}
    \label{fig:BV_ResNet50_IMNET_log_uncertainty}
\end{subfigure}
\hfill
\begin{subfigure}[b]{0.32\linewidth}
    \centering
    \includegraphics[width=\textwidth]{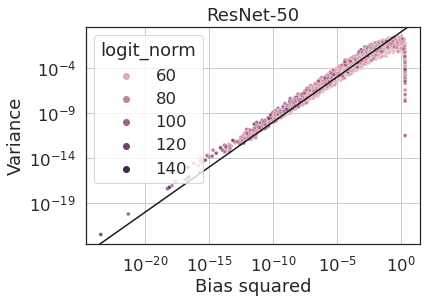}
    \caption{Norm of Logit Vector}
    \label{fig:BV_ResNet50_IMNET_log_norm}
\end{subfigure}
\vspace{-0.5em}
\caption{Same as Figure~\ref{fig:BV_ResNet50_IMNET_log}, but with each sample colored according to \emph{(a)}: Correctness of model prediction, \emph{(b)}: Uncertainty of model prediction, and \emph{(c)}: $\ell^2$ norm of the logit vector. }
\label{fig:colored_BV}
\vspace{-0.5em}
\end{figure}

Figure~\ref{fig:BV_ResNet50_IMNET_log_correctness} is the same as the one in Figure~\ref{fig:BV_ResNet50_IMNET_log} for the reader's reference. 
In Figure~\ref{fig:BV_ResNet50_IMNET_log_uncertainty}, we show how the uncertainty in model predictive distribution, i.e., $\uncertainty_h(x)$ (see Definition~\ref{def:pred-conf-acc-unc}), correlates with bias and variance.
It can be seen that samples with large variance are those with large uncertainty scores. 
We give a formal relation between bias, variance, and uncertainty in Theorem~\ref{thm:B-equals-V}.
Finally, Figure~\ref{fig:BV_ResNet50_IMNET_log_norm} shows the lack of correlation between bias/variance and the $\ell^2$ norm of the logit vector. 

\subsection{Effect on the sources of randomness}
\label{sec:effect-on-source-of-randomness}

The decomposition of the generalization into the summation of bias and variance requires one to specify a source of randomness in obtaining a collection of models. 
In classical bias-variance tradeoff, this source of randomness is usually taken to be the sampling of the training dataset.
Correspondingly, the numerical estimation of bias and variance can be achieved by sampling a given dataset via bootstrap (see e.g. \cite{neal2018modern}). 
This is the approach that we adopt in all numerical experiments in this paper, other than those in this section.
On the other hand, modern deep networks often have other sources of randomness as well, such as the initialization of the model parameters, random sampling of the batches in the training process. 

In this section, we study whether the emergence of bias-variance alignment is due purely to the randomness in sampling the training dataset, or other sources of randomness may also give rise to a similar phenomenon. 
Towards that, we conduct experiments to train multiple deep neural networks \emph{without} data bootstrapping. 
In such cases, the randomness in the collection of networks comes only from random initialization and data batching. 
The result is shown in Figure~\ref{fig:BV_model_size_no_data_randomness}. 
Comparing it with Figure~\ref{fig:BV_model_size}, where the only difference lies in the bootstrapping of training dataset, it can be seen that the source of randomness have a very small impact on the bias-variance alignment phenomenon.

\begin{figure}[ht]
\centering  
\begin{subfigure}[b]{0.24\linewidth}
    \centering
    \includegraphics[width=\textwidth,trim={0 0 0 1.68cm},clip]{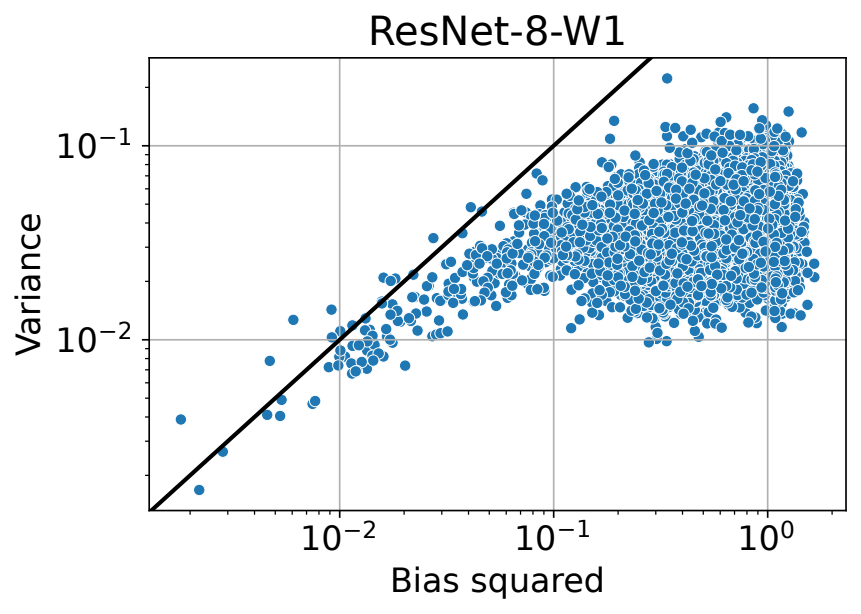}
    \caption{ResNet-8-W1}
\end{subfigure}
\hfill
\begin{subfigure}[b]{0.24\linewidth}
    \centering
    \includegraphics[width=\textwidth,trim={0 0 0 1.68cm},clip]{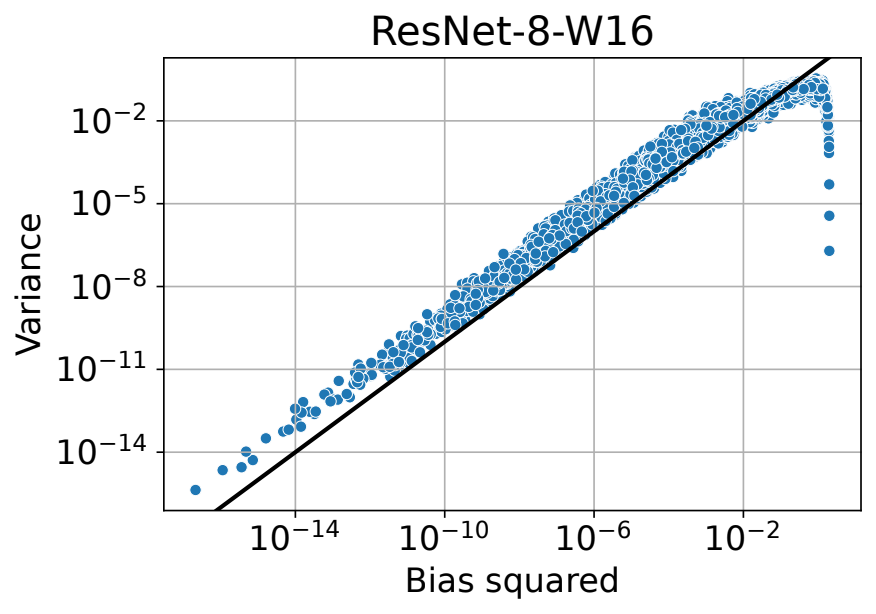}
    \caption{ResNet-8-W16}
\end{subfigure}
\hfill
\begin{subfigure}[b]{0.24\linewidth}
    \centering
    \includegraphics[width=\textwidth,trim={0 0 0 1.68cm},clip]{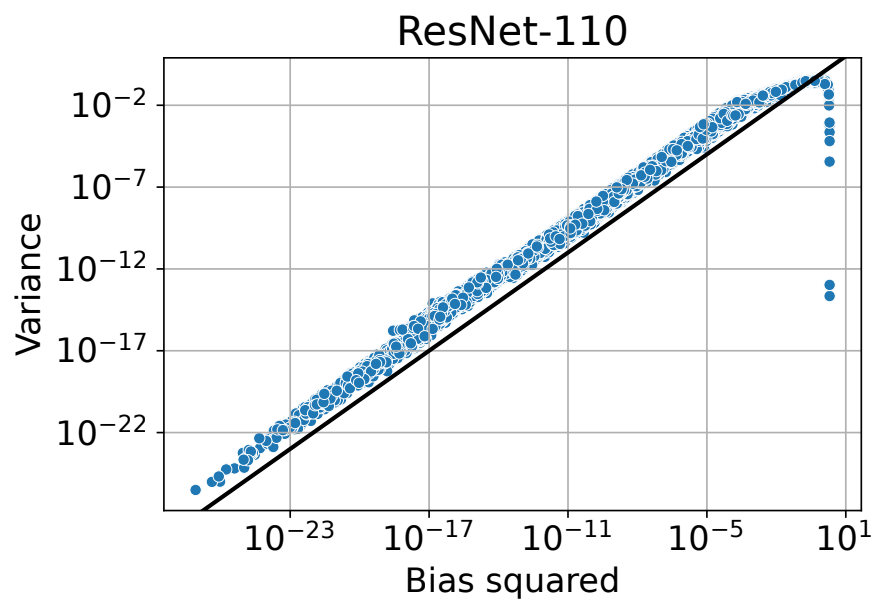}
    \caption{ResNet-110-W16}
\end{subfigure}
\hfill
\begin{subfigure}[b]{0.24\linewidth}
    \centering
    \includegraphics[width=0.9\textwidth,trim={0 0.3cm 0 0.1cm},clip]{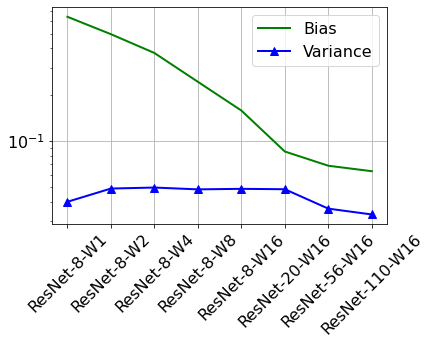}
    \caption{Varying model size}
\end{subfigure}
\vspace{-0.5em}
\caption{Same as Figure~\ref{fig:BV_model_size} but without bootstrapping of training dataset. Hence, the randomness in computing the bias and variance comes from random initialization and data batching, and there is no randomness in sampling of training dataset. }
\label{fig:BV_model_size_no_data_randomness}
\vspace{-1em}
\end{figure}

\section{Further Results on Calibration and the Bias-Variance Correlation}

\subsection{Perfect calibration does not necessarily imply perfect confidence calibration}
Perfect calibration does not necessarily imply perfect confidence calibration. To illustrate this, consider the following example: let $\cX = \cY = \{1,2\}$, and let $X$ be a uniformly random variable on $\{1,2\}$. Let the probability of $Y=i$ given $X$ be $\Pr(i\mid X) = \1\{X \ne i\}$, and let the classifier $h$ be defined as $h(i\mid X) = \1\{X = i\}$. In addition, let $\Sigma_i$ represent the trivial $\sigma$-algebra for all $i$ in the set ${1,2}$. 
In this case, we have $\E[\Delta( i\mid X)\mid \Sigma_i] = \E[\Delta( i\mid X)] = \E[2\cdot \1\{X=i\}-1] = 0$, which means that $h$ has perfect calibration with respect to $\{\Sigma_{i}\}_{i=1,2}$. However, since $\pred_h(x) = x$, we have $\E[\Delta( \pred_h(X)\mid X)\mid \Sigma_{\pred_h(X)}] = \E[\Delta( \pred_h(X)\mid X)] = \E[\Delta( X\mid X)] = 1$. Therefore, $h$ does not have perfect confidence calibration.

This is not true even for preimage perfect calibration $\Sigma_i^{pre}$. Consider the following counterexample \[
\begin{split}
h(i\mid x) ={}& \begin{array}{r|cccc}
        \hbox{\diagbox{$x$}{$i$}} & 1 & 2 & 3 & 4 \\ \hline
        1 & 0.3  & 0.25 & 0.2  & 0.25 \\ 
        2 & 0.3  & 0.5 & 0.2 & 0 \\ 
    \end{array}\,,\\
\Pr_{Y\mid X}(i\mid x) ={}& \begin{array}{r|cccc}
        \hbox{\diagbox{$x$}{$i$}} & 1 & 2 & 3 & 4 \\ \hline
        1 & 0.4  & 0.25 & 0.1  & 0.25 \\ 
        2 & 0.2  & 0.5 & 0.3 & 0 \\ 
    \end{array}\,.
\end{split}
\] We set $\Pr(X=1) = \Pr(X=2) = \nicefrac{1}{2}$. It is clear that \begin{equation}\label{eq:perfect-preimage-cal}
    \E\left[h(i\mid X) - \Pr_{Y\mid X}(i\mid X)\mid h(i\mid X)\right] = 0
\end{equation} holds for for $i=2,4$. If $i=1$, we have $h(i\mid X)=0.3$. Therefore, $\E\left[h(1\mid X) - \Pr_{Y\mid X}(1\mid X)\mid h(1\mid X) = 0.3\right] = 0.3 - \E\left[\Pr_{Y\mid X}(1\mid X)\mid h(1\mid X) = 0.3\right] = 0.3 - \frac{0.4+0.2}{2} = 0 $. Similarly, we can show that \eqref{eq:perfect-preimage-cal} holds for $i=4$. For $\Sigma_i^{(2)}$, $\Sigma_{\pred_h(X)}^{(2)} = \sigma(h(\pred_h(X)\mid X)) = \conf_h(X)$. Note that in this example, $\conf_h(X)$ can take only two values $0.3$ and $0.5$. Since \begin{align}
  &  \E[(\conf_h(X)-\acc_h(X))\mid \conf_h(X) = 0.3] \\
  ={}& \E[(\conf_h(X)-\acc_h(X))\mid X = 1] = -0.1 \ne 0\,,
\end{align} perfect confidence calibration is not satisfied.

\subsection{Pre-image expected calibration error}
\label{sec:ECE-pre}
The expected calibration error (ECE) with respect to $\{\Sigma_i^{\mathrm{pre}}\}_{i\in [K]}$ recovers the ECE from Equation (2) in \citep{guo2017calibration}.
Recall the definition of $\conf$ and $\acc$ in \cref{s:background_bv}. The ECE with respect to $\{\Sigma_i^{\mathrm{pre}}\}_{i\in [K]}$ is \begin{align}
    & \ECE \\
    ={}& \E\left\lvert\E[\Delta ( \pred_h(X)\mid X)\mid \Sigma^{\text{pre}}_{\pred_h(X)}]\right\rvert \\
    ={}& \E\left\lvert\E[\Delta ( \pred_h(X)\mid X)\mid h(\pred_h(X)\mid X)]\right\rvert \\
    ={}& \E\left\lvert\E[ ( \conf_h(X)-\acc_h(X))\mid \conf_h(X)]\right\rvert \label{eq:4}\,.
\end{align}
\cref{eq:4} follows from $\Delta ( \pred_h(X)\mid X)= \conf_h(X)-\acc_h(X) $ and $ h(\pred_h(X)\mid X)  =  \conf_h(X)  $. This recovers the definition of the ECE in \citep[Equation (2)]{guo2017calibration}.

\subsection{Bin-wise expected calibration error}
\label{sec:ECE-bin}

The expected calibration error (ECE) with respect to $\{\Sigma^{\mathrm{bin}}_i\}_{i\in [K]}$ recovers the ECE from Equation (3) in \citep{guo2017calibration}.
Let $E_j$ represent the event $\left\{\frac{j-1}{M}< \conf_h(X) \le \frac{j}{M}\right\} = \{\lceil M\conf_h(X)\rceil = j\}$.

The ECE with respect to $\{\Sigma^{\mathrm{bin}}_i\}_{i\in [K]}$ is \begin{align}
    & \ECE\\
    ={} & \sum_{j\in [M]}\Pr(E_j) \E\left[ \left\lvert \E[\Delta ( \pred_h(X)\mid X)\mid \Sigma^{\mathrm{bin}}_{\pred_h(X)}]  \right\rvert \mid E_j\right] \\
    ={} & \sum_{j\in [M]}\Pr(E_j) \E\left[ \left\lvert \E[(\conf_h(X)-\acc_h(X))\mid \lceil M \conf_h(X) \rceil ]  \right\rvert \mid E_j\right]\label{eq:2}\\
    ={}& \sum_{j\in [M]} \Pr(E_j)\left\lvert \E[(\conf_h(X)-\acc_h(X))\mid E_j ]  \right\rvert\,.\label{eq:3}
\end{align}

\cref{eq:2} follows from $\Delta ( \pred_h(X)\mid X)= \conf_h(X)-\acc_h(X) $ and $\lceil M h(\pred_h(X)\mid X) \rceil = \lceil M \conf_h(X) \rceil $. \cref{eq:3} is the ECE with respect to $\{\Sigma^{\mathrm{bin}}_i\}_{i\in [K]}$ on the population $\Pr(X,Y)$. If one wants to estimate the ECE from an empirical distribution formed by sampling $n$ i.i.d.\ samples $\{(x_i,y_i)\}_{i\in [n]}$ from $\Pr(X,Y)$, then $\Pr(E_j)$ is $\frac{|B_j|}{n}$, where $B_j$ denotes the set of the indices of the samples whose confidence falls into the bin $\left(\frac{j-1}{M},\frac{j}{M}\right]$. Under the empirical distribution, we have \begin{align}
    \E[\acc_h(X)\mid E_j ] ={}& \frac{1}{|B_j|} \sum_{i\in B_j}\1\{\pred_h(x_i) = y_i\}\,. \\
    \E[\conf_h(X)\mid E_j ] ={}& \frac{1}{|B_j|} \sum_{i\in B_j} \conf_h(x_i)\,.
\end{align} We recover the definition of the ECE in \citep{naeini2015obtaining} and \citep[Equation (3)]{guo2017calibration}.

\subsection{Proof of \cref{cor:entrywise-bv}}
\begin{proof}[Proof of \cref{cor:entrywise-bv}]
By \eqref{eq:h-1-h} and the bounded convergence theorem, we have \begin{align}
 & \E\left[\E_{Y\mid X}[\beta(i)^2-\sigma(i)^2]\mid\Sigma_i\right]\\
 ={}& \E\left[ \E_\theta  \left[h_\theta(i\mid X)(1-h_\theta(i\mid X))\right] \mid\Sigma_i\right]\to 0\,.
\end{align}
Let us write $\delta(i) = \beta(i)-\sigma(i)$. It follows that \begin{align}
& \E\left[\E_{Y\mid X}[\beta(i)^2-\sigma(i)^2]\mid\Sigma_i\right]\\
={}& \E\left[\E_{Y\mid X}[(\sigma(i)+\delta(i))^2-\sigma(i)^2]\mid\Sigma_i\right] \\
={}& \E\left[\E_{Y\mid X}[\delta(i)(\delta(i)+2\sigma(i))]\mid\Sigma_i\right] \\
\ge{}& \E\left[\E_{Y\mid X}[2\delta(i)^2]\mid\Sigma_i\right] \ge 0\,.
\end{align}
As a result, we obtain $\E\left[\E_{Y\mid X}[\delta(i)^2]\mid\Sigma_i\right]\to 0$, which implies $\E\left[\E_{Y\mid X}[\delta(i)]\mid\Sigma_i\right]\to 0$ since $L^2$ convergence of random variables implies $L^1$ convergence.
\end{proof}

\subsection{Proof of \cref{thm:B-equals-V}}
\begin{proof}[Proof of \cref{thm:B-equals-V}]We have \begin{align}
\B(i) ={}& h(i\mid X)^2 + \1\{Y = i\} - 2\cdot \1\{Y=i\}h(i\mid X)\,,\\
\Vari(i) ={}& \E_\theta h_\theta(i\mid X)^2 - h(i\mid X)^2\,.
\end{align} Therefore we get \begin{align}
\B(i) - \Vari(i) ={}& 2 \left(h(i\mid X)^2  -  \1\{Y=i\}h(i\mid X) \right)\\
& + \1\{Y = i\}- \E_\theta h_\theta(i\mid X)^2 \,.
\end{align}Taking the expectation over the conditional distribution of  $Y\mid X$ yields\begin{equation}
\begin{split}
& \E_{Y\mid X}\left[ \B(i)-\Vari(i) \right]\\
={}& 2 h(i\mid X)\Delta(i\mid X)   + \Pr_{Y\mid X}(i\mid X)- \E_\theta h_\theta(i\mid X)^2 \,.
\end{split}
\end{equation}Then we further take the conditional expectation $\E[\cdot \mid \Sigma_i]$ and re-arrange the terms, and obtain\begin{equation}\label{eq:entry-BV}\begin{split}
& \E\left[ \E_{Y\mid X}[\B(i) -\Vari(i)] -  \Pr_{Y\mid X}(i\mid X) + \E_\theta h_\theta(i\mid X)^2 \mid \Sigma_i \right] \\
={}& 2h(i\mid X) \E\left[ \Delta(i\mid X)\mid \Sigma_i \right]\,.
\end{split}
\end{equation}
    Taking the absolute value and then the outer expectation gives\begin{equation}\label{eq:entry-E}
    \begin{split}
  &  \E \left|\E\left[ \E_{Y\mid X}[\B(i) -\Vari(i)] -  \Pr_{Y\mid X}(i\mid X) + \E_\theta h_\theta(i\mid X)^2 \mid \Sigma_i \right]\right|\\
  ={}& 2\E\left[ h(i\mid X) \left|\E\left[ \Delta(i\mid X)\mid \Sigma_i \right]\right|\right]\\
  \le{}& 2\E\left[  \left|\E\left[ \Delta(i\mid X)\mid \Sigma_i \right]\right|\right]
    \end{split} 
    \end{equation} 
    Summing \eqref{eq:entry-BV} over $i\in [K]$ and taking the outer conditional expectation $\E[\cdot\mid \Sigma]$ gives
\begin{align}
    & \E\left[\E_{Y\mid X}[\B-\Vari]\mid\Sigma\right]\\
    ={}& 1 - \E\left[\E_{Y\mid X}\E_\theta \|h_\theta(\cdot\mid X)\|_2^2 \mid\Sigma\right] 
     + 2\sum_{i\in [K]}\E\left[ \E[\Delta(i\mid X)\mid \Sigma_i] h(i\mid X) \mid \Sigma \right] \,.
\end{align}
Re-arranging the terms and taking the absolute value yields
\begin{align}
   & \E\left|\E\left[\E_{Y\mid X}[\B-\Vari] - 1 + \E_{Y\mid X}\E_\theta \|h_\theta(\cdot\mid X)\|_2^2 \mid\Sigma\right]\right|\\
   \le{}& 2 \sum_{i\in [K]}\E\left[\left|\E[\Delta(i\mid X)\mid \Sigma_i]\right| h(i\mid X)  \right]
   \le{} 2\CWCE\,.
\end{align}
\end{proof}

\subsection{No bias-variance correlation in Kullback-Leibler convergence.}
\label{s:KL_BV}
The expected Kullback-Leibler (KL) divergence $\E_\theta\kl{e_Y}{h_\theta(\cdot\mid X)}$ can also be decomposed \citep{heskes1998bias,zhou2021rethinking,yang2020rethinking} into the bias $\B$ and the variance $\Vari$ \begin{align*}
    \B ={}& \kl{e_Y}{h(\cdot\mid X)}\,,\\
    \quad \Vari ={}& \E_\theta\kl{e_Y}{h_\theta(\cdot\mid X)} - \B\,,
\end{align*}
where the mean function $h$ and the partition function $Z$ thereof are defined by \begin{align}
    h(i\mid X) ={}& \frac{1}{Z} \exp(\E_\theta \log h_\theta(i\mid X))\,,\\
    Z ={}&  \sum_{i\in [K]} \exp(\E_\theta \log h_\theta(i\mid X)).
\end{align} We can see $\Vari = -\log Z$. 

The following \cref{prop:kl-decomp} demonstrates that there is no correlation between bias and variance in KL divergence, unlike in mean squared error. Specifically, we prove that the ratio of expected bias to expected variance in the decomposition of KL divergence can take any value in the range of $(0, \infty)$.

\begin{proposition}\label{prop:kl-decomp}
There exists a data distribution $\Pr(X,Y)$ such that for any value $r\in (0,\infty)$, there is an ensemble $\{h_\theta\}_\theta$ such that its mean function $\E_\theta h_\theta$ has samplewise perfect calibration, and the ratio of expected bias to expected variance under the KL divergence $\frac{\E_{Y\mid X}\B}{\E_{Y\mid X} \Vari} = r $ . 
\end{proposition}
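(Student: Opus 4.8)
The plan is to construct an explicit example with a single input (so that the conditioning on $\Sigma$ and on $Y\mid X$ reduces to an average over the label distribution at that point) and then build a one-parameter family of ensembles whose mean function reproduces the true conditional probability (hence is samplewise perfectly calibrated) while the variance can be tuned to any desired size. Concretely, I would take $\cX = \{x_0\}$ a singleton, $K = 2$, and let $\Pr_{Y\mid X}(\cdot\mid x_0) = (p, 1-p)$ for some fixed $p\in(0,1)$. Then the bias term is $\B = \kl{e_Y}{h(\cdot\mid x_0)}$, whose expectation over $Y\mid X$ is the fixed quantity $-p\log h(1\mid x_0) - (1-p)\log h(2\mid x_0)$, and I will arrange that $h(\cdot\mid x_0) = (p,1-p)$ so this expected bias equals the (strictly positive, finite) entropy $\mathcal H(p)$.

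Next I would design the ensemble $\{h_\theta\}$. Since $\Vari = -\log Z$ where $Z = \sum_i \exp(\E_\theta \log h_\theta(i\mid x_0))$, and since $h(i\mid x_0) = \exp(\E_\theta\log h_\theta(i\mid x_0))/Z$, the key observation is that the geometric-mean-normalization forces a relationship: writing $g_i \triangleq \exp(\E_\theta \log h_\theta(i\mid x_0))$, we have $h(i\mid x_0) = g_i/Z$ and $\Vari = -\log Z = -\log(g_1 + g_2)$. So I need to choose an ensemble of probability vectors $(h_\theta(1\mid x_0), h_\theta(2\mid x_0))$ whose log-coordinates average to prescribed values $\log g_1, \log g_2$ with $g_1/g_2 = p/(1-p)$ and with $g_1 + g_2$ taking any value in $(0,1]$ (the variance is then $-\log(g_1+g_2) \in [0,\infty)$, and one checks $g_1 + g_2 < 1$ strictly unless the ensemble is a.s. constant, giving the open range $(0,\infty)$). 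A clean way: take $\theta$ to be a single Bernoulli-type parameter and let $h_\theta$ interpolate on the log scale between two extreme vectors; e.g. with $\theta \in \{0,1\}$ equiprobable, set $\log h_0(i\mid x_0) = \log g_i + t_i$ and $\log h_1(i\mid x_0) = \log g_i - t_i$ (each subsequently renormalized — here the bookkeeping requires care so that both $h_0$ and $h_1$ are genuine probability vectors and the geometric means come out exactly right), and tune the spread to hit any target $Z$. The expected bias is unchanged ($=\mathcal H(p)$, a constant independent of the spread), so the ratio $\frac{\E_{Y\mid X}\B}{\E_{Y\mid X}\Vari} = \frac{\mathcal H(p)}{-\log Z}$ sweeps over all of $(0,\infty)$ as $Z$ ranges over $(0,1)$.

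The main obstacle I anticipate is the simultaneous bookkeeping: the perturbations added to $\log h_\theta(i\mid x_0)$ must (i) keep each $h_\theta(\cdot\mid x_0)$ a valid probability vector, (ii) have the correct expectation over $\theta$ after the renormalization (the per-$\theta$ softmax normalization $\sum_i h_\theta(i\mid x_0)=1$ interacts nonlinearly with the expectation $\E_\theta\log h_\theta$), and (iii) yield the prescribed $g_1+g_2$. Rather than fight the renormalization, the cleanest route is to parametrize directly: fix the target mean-function $h(\cdot\mid x_0)=(p,1-p)$ and target variance $V>0$, set $Z = e^{-V}\in(0,1)$, put $g_i = Z\,h(i\mid x_0)$, and then simply exhibit two probability vectors $q^{+}, q^{-}$ on $\{1,2\}$ with $\sqrt{q^{+}_i q^{-}_i} = g_i$ for $i=1,2$ — this is two equations in the two free coordinates of $q^{+},q^{-}$ (one free coordinate each, since each is a 2-vector summing to $1$), solvable for all $V$ in a suitable range, and one verifies that as $V\to\infty$ the vectors become degenerate, covering the whole ray. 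With $\theta\sim\mathrm{Unif}\{+,-\}$ and $h_\theta = q^\theta$ we get $\E_\theta\log h_\theta(i\mid x_0) = \log g_i$ exactly, hence samplewise perfect calibration of the mean and $\Vari = V$, completing the construction. I would then note that any desired $r$ is achieved by choosing $V = \mathcal H(p)/r$ (valid whenever this lies in the attainable variance range, which can be made all of $(0,\infty)$ by also allowing $p$ to vary, or by a slightly more elaborate but still elementary two-point ensemble).
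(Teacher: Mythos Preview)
Your strategy is essentially the paper's: a two-point ensemble on a binary label whose spread can be tuned so that the KL-bias stays fixed while the KL-variance $-\log Z$ sweeps $(0,\infty)$. The paper simply takes the cleanest instance directly: $p=\tfrac12$ with the symmetric pair $h_1=(\eps,1-\eps)$, $h_2=(1-\eps,\eps)$ and $\theta\sim\unif\{1,2\}$, then computes $\E_{Y\mid X}\B=\log 2$ and $\Vari=-\log 2-\tfrac12\log\bigl(\eps(1-\eps)\bigr)$, which ranges over $(0,\infty)$ as $\eps$ varies in $(0,\tfrac12)$. No solving for $q^\pm$, no bookkeeping with $g_i$ and $Z$ is needed.

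One genuine gap in your write-up: the proposition asks that the \emph{arithmetic} mean $\E_\theta h_\theta$ be samplewise perfectly calibrated, whereas your construction only pins down the \emph{KL} mean function $h(i\mid x_0)=\exp(\E_\theta\log h_\theta(i\mid x_0))/Z$ to equal $(p,1-p)$. These are different objects. For your two-point ensemble with $\sqrt{q^+_iq^-_i}=g_i=Zp_i$, solving the system gives $q^+_1+q^-_1=1+Z^2(2p-1)$, so $\E_\theta h_\theta(1\mid x_0)=(q^+_1+q^-_1)/2$ equals $p$ only when $p=\tfrac12$ or $Z=1$. Hence for general $p$ your ensemble is \emph{not} calibrated in the required sense. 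Specializing to $p=\tfrac12$ with the symmetric pair makes both means coincide at $(\tfrac12,\tfrac12)$ and already yields the full range $(0,\infty)$ for the variance --- so there is no need to vary $p$ or to invoke a ``more elaborate'' ensemble. That specialization is exactly the paper's proof.
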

\begin{proof}
Suppose that there are $K=2$ classes and for every $x$, $\Pr(i\mid x) = \nicefrac{1}{2}$ ($i=1,2$). Moreover, define $h_1(1\mid x) = h_2(2\mid x) = \eps$ and $h_1(2\mid x) = h_2(1\mid x) = 1-\eps$, and set $\theta$ to a uniformly random variable on $\{1,2\}$. Then the mean function $h$ satisfies $h(1\mid x) = h(2\mid x) = \nicefrac{1}{2}$, which does not depend on $\eps$. The expected bias $\E_{Y\mid X}\B$ is $\log 2$. The partition function $Z$ equals $2\exp((\log \eps+\log(1-\eps))/2)$, from which we obtain the variance $\E_{Y\mid X}\Vari  = \Vari = -\log 2-\frac{1}{2}\log \eps(1-\eps)$. As $\eps\to 0^+$, the variance $\Vari$ tends to $\infty$. As $\eps \to \nicefrac{1}{2}$, the variance $\Vari$ vanishes. Therefore the ratio $\frac{\E_{Y\mid X}\B}{\E_{Y\mid X} \Vari}$ can be any value in the range of $(0,\infty)$. 
\end{proof}

\subsection{\cref{thm:B-equals-V} implies generalization disagreement equality (GDE)}

In this section, we show that \cref{thm:B-equals-V} implies the generalization disagreement equality (GDE), which is the main result of \citep{jiang2022assessing,kirsch2022note}. We first recap the GDE using the notation of this paper. We begin with defining the test error, disagreement, and class aggregated calibration error (CACE) originally defined in \citep{jiang2022assessing,kirsch2022note}.

\begin{definition}[Test error, disagreement, and class aggregated calibration error~\citep{jiang2022assessing,kirsch2022note}]
Let $\{h_\theta:\cX \to \cM([K])\}$ be an ensemble of trained models, each of which has a deterministic prediction, i.e., $h_\theta(\cdot\mid x)$ is a one-hot vector for $\forall x\in \cX$. Let $h(\cdot \mid x) \triangleq \E_\theta h_\theta(\cdot \mid x)$ be the mean function of $\{h_\theta\}$. Then, the test error, disagreement, and class aggregated calibration error (CACE) of $h$ are defined as follows:
\begin{align*}
    \terr_{\Pr}(h_\theta) & = \E_{(X,Y)\sim \Pr}[\1\{h_\theta(\cdot\mid X)\ne e_Y\}]\,,\\
    \dis_{\Pr}(h_{\theta},h_{\theta'}) & = \E_{(X,Y)\sim \Pr}[\1\{h_{\theta}(X)\ne h_{\theta'}(X)\}]\,,\\
    \CACE_{\Pr,h} & = \int_0^1 \left|\sum_{i\in [K]} \Pr(Y=i,h(i\mid X)=q)-q\sum_{i\in [K]} \Pr(h(i\mid X)=q)\right|dq\,.
\end{align*}
\end{definition}

Note that while the test error $\terr_{\Pr}(h_\theta)$ and disagreement $\dis_{\Pr}(h_{\theta},h_{\theta'})$ are expected values over $\Pr$, they still have randomness due to $\theta$. 

Moreover, note that \citet{jiang2022assessing} use an integer $i\in [K]$ to denote the prediction of $h_\theta$. However, we use a one-hot vector $e_i\in\mathbb{R}^K$.  We will see the mathematical convenience of representing the prediction with a one-hot vector in our proof of  \cref{thm:jiang}. 
In particular, our proof of Theorem \ref{thm:jiang} shows that in expectation, the disagreement is equal to the variance (defined in Equation \ref{eq:vari-def}) and the test error equals half the risk (defined in Equation \ref{eq:bias-variance-decomposition}).

\begin{theorem}[Theorem 4.2 of \citep{jiang2022assessing}]\label{thm:jiang}
If $h_\theta$ outputs an one-hot vector (as assumed in \citep{jiang2022assessing}) and $\theta,\theta'$ are i.i.d., The following inequality holds: \begin{equation*}
    \left|\E_{\theta,\theta'}[\dis_{\Pr}(h_{\theta},h_{\theta'})]-\E_\theta [\terr_{\Pr}(h_\theta)]\right| \le \CACE_{\Pr,h}\,. 
\end{equation*}
If the ensemble $\{h_\theta\}$ satisfies the pre-image perfect calibration ($\E[\Delta(i\mid X)\mid \Sigma_i^{\mathrm{pre}}] = 0$, for $\forall i\in [K]$), the following generalization disagreement equality (GDE) holds: \[
\E_{\theta,\theta'}[\dis_{\Pr}(h_{\theta},h_{\theta'})]=\E_\theta [\terr_{\Pr}(h_\theta)]\,.
\]
\end{theorem}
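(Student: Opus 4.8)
\textbf{Proof proposal for \cref{thm:jiang}.}

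The plan is to reduce the statement about disagreement and test error to the bias-variance identities already established in \cref{thm:B-equals-V}, using the one-hot assumption to turn set-indicator quantities into squared-norm quantities. First I would observe that when $h_\theta(\cdot\mid x)$ is a one-hot vector, $\1\{h_\theta(\cdot\mid X)\ne e_Y\} = \frac12\|h_\theta(\cdot\mid X)-e_Y\|_2^2$, since the two one-hot vectors either coincide (norm $0$) or differ in exactly two coordinates (norm$^2$ equal to $2$). Taking $\E_\theta$ and $\E_{(X,Y)}$, this gives $\E_\theta[\terr_\Pr(h_\theta)] = \frac12\, \E_{(X,Y)}\ris_{h_\theta,(X,Y)} = \frac12\,\E_{(X,Y)}[\B + \Vari]$, using the MSE decomposition in \eqref{eq:bias-variance-decomposition}. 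Similarly, for i.i.d.\ $\theta,\theta'$ with one-hot outputs, $\1\{h_\theta(X)\ne h_{\theta'}(X)\} = \frac12\|h_\theta(\cdot\mid X)-h_{\theta'}(\cdot\mid X)\|_2^2$; taking $\E_{\theta,\theta'}$ and expanding around the mean $h(\cdot\mid X)$, the cross term vanishes by independence and we get $\E_{\theta,\theta'}[\dis_\Pr(h_\theta,h_{\theta'})] = \E_{(X,Y)}\Vari$. So the whole theorem becomes a statement comparing $\E_{(X,Y)}\Vari$ with $\frac12\E_{(X,Y)}[\B+\Vari]$, i.e.\ comparing $\E\Vari$ with $\E\B$, which is exactly what the bias-variance gap $\BVG$ controls.

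Next I would apply \cref{thm:B-equals-V} with $\Sigma_i = \Sigma_i^{\mathrm{pre}} = \sigma(h(i\mid X))$, noting that $h(i\mid X)$ is automatically $\Sigma_i^{\mathrm{pre}}$-measurable as remarked after that theorem, and that the one-hot assumption gives $\|h_\theta(\cdot\mid X)\|_2^2 = 1$ so $\uncertainty_{h_\theta}(X) = 1 - \E_\theta\|h_\theta(\cdot\mid X)\|_2^2 = 0$. Then \eqref{eq:E_le_2CWCE} reads $\left|\E[\E_{Y\mid X}[\BVG]\mid \Sigma]\right| \le 2\,\CWCE$ with the uncertainty term dropped, and taking the outer expectation gives $\left|\E_{(X,Y)}[\B - \Vari]\right| \le 2\,\CWCE$ (using that $\CWCE$ with these $\Sigma_i$ is the relevant calibration error). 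Combining with $2\E_\theta[\terr] - 2\E_{\theta,\theta'}[\dis] = \E[\B+\Vari] - 2\E[\Vari] = \E[\B-\Vari]$ yields $\left|\E_\theta[\terr_\Pr(h_\theta)] - \E_{\theta,\theta'}[\dis_\Pr(h_\theta,h_{\theta'})]\right| \le \CWCE$. The remaining step is to identify this $\CWCE$ (w.r.t.\ $\{\Sigma_i^{\mathrm{pre}}\}$) with $\CACE_{\Pr,h}$: writing out $\CWCE = \sum_i \E\left|\E[h(i\mid X) - \Pr_{Y\mid X}(i\mid X)\mid h(i\mid X)]\right|$ and unfolding the conditional expectation as an integral over the values $q$ taken by $h(i\mid X)$, one matches it term-by-term with the integral defining $\CACE_{\Pr,h}$ after pushing the sum over $i$ through; the perfect-calibration hypothesis $\E[\Delta(i\mid X)\mid \Sigma_i^{\mathrm{pre}}]=0$ makes $\CWCE = 0$, giving the GDE equality $\E_{\theta,\theta'}[\dis] = \E_\theta[\terr]$.

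I expect the main obstacle to be the bookkeeping in the last step — showing the $\CWCE$ bound from \cref{thm:B-equals-V} is no weaker than the $\CACE_{\Pr,h}$ bound in the cited theorem, since $\CACE$ aggregates classes \emph{inside} one absolute value while $\CWCE$ sums absolute values over classes, so $\CACE_{\Pr,h} \le \CWCE$ by the triangle inequality and one should check whether the cited statement really wants $\CACE$ or whether the argument genuinely delivers $\CACE$ directly. In the exact (perfect calibration) case this is a non-issue since both sides are zero, so the GDE equality falls out cleanly; for the inequality one may simply state it with $\CWCE$ (which dominates $\CACE$) or redo the conditional-expectation expansion keeping the class sum inside the absolute value, which requires choosing a single $\sigma$-algebra $\Sigma = \bigcap_i \sigma(\Sigma_i^{\mathrm{pre}})$ in the total-bias branch of \cref{thm:B-equals-V} rather than the per-class branch. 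A secondary, purely mechanical point to get right is the independence argument eliminating the cross term $\E_{\theta,\theta'}\langle h_\theta - h, h_{\theta'} - h\rangle = \|\E_\theta(h_\theta - h)\|_2^2 = 0$ in the disagreement computation, which is routine.
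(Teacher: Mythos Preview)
Your proposal is correct and follows essentially the same route as the paper: rewrite $\terr$ and $\dis$ as half-squared-norms via the one-hot assumption, identify their difference with $\tfrac12\E[\BVG]$, then invoke \cref{thm:B-equals-V} with $\Sigma_i=\Sigma_i^{\mathrm{pre}}$ and $\uncertainty_{h_\theta}(X)=0$. The one point you flag as an obstacle is exactly where the paper's proof differs from your first pass: rather than using the inequality \eqref{eq:E_le_2CWCE} (which yields $\CWCE\ge\CACE$ on the right), the paper uses the \emph{equality} line of \cref{thm:B-equals-V} to write $\E\!\left[\tfrac{\BVG}{2}\right]=\sum_i\E\big[\E[\Delta(i\mid X)\mid\Sigma_i^{\mathrm{pre}}]\,h(i\mid X)\big]$, then expands this as a single $dq$-integral with the class sum kept inside and the factor $h(i\mid X)=q\le 1$ retained, which after one absolute value and one trivial bound gives $\CACE_{\Pr,h}$ directly --- precisely the refinement you anticipated.
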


\begin{proof}
We first show that the disagreement is equal to variance in expectation:
\[
\begin{split}
    \E_{\theta,\theta'}[\dis_{\Pr}(h_{\theta},h_{\theta'})]
& =\E_{\theta,\theta'}\E_{(X,Y)\sim \Pr}\left[\frac{\left\Vert h_\theta(X)-h_\theta'(X)\right\Vert _{2}^{2}}{2}\right]\\
& =\E_{(X,Y)\sim \Pr}\left[\E_\theta[\left\Vert h_\theta(X)\right\Vert _{2}^{2}]- \left\Vert \E_\theta[h_\theta(X)]\right\Vert_{2}^{2}\right] =\E_{(X,Y)\sim \Pr}\left[\Vari\right]\,.
\end{split}
\]
Next, we show that the test error is equal to half the risk in expectation:
\begin{align*}
\E_\theta\left[\terr_{\Pr}(h_\theta)\right] ={}& \E_\theta \E_{(X,Y)\sim \Pr}\left[\frac{\left\Vert h_\theta(\cdot \mid X)-e_Y\right\Vert _{2}^{2}}{2}\right]\\
={}& \E_{(X,Y)\sim \Pr}\left[\frac{\ris_{h_\theta, (X, Y)}}{2}\right] = \E_{(X,Y)\sim \Pr}\left[\frac{\B}{2} + \frac{\Vari}{2}\right]\,.
\end{align*}
 We can obtain the following equation by subtracting the above two equations:
 \[
 \begin{split}
 & \E_{\theta,\theta'}[\dis_{\Pr}(h_{\theta},h_{\theta'})]-\E_\theta [\terr_{\Pr}(h_\theta)]\\
 & = \E_{(X,Y)\sim \Pr}\left[\frac{\B}{2} + \frac{\Vari}{2} - \Vari\right] \\
 & = \E_{(X,Y)\sim \Pr}\left[\frac{\BVG}{2}\right]
 \end{split}\,.
 \]
Apply \cref{thm:B-equals-V} with $\Sigma_i = \Sigma_i^{\mathrm{pre}} = \sigma( h(i\mid X))$ and using $\uncertainty_{h_\theta}(X) = 0$ (because $h_\theta(\cdot \mid x)$ is a one-hot vector for $\forall x\in\cX$), we obtain \begin{equation}\label{eq:bvg-2}
    \E_{(X,Y)\sim \Pr}\left[\frac{\BVG}{2}\right] = \sum_{i\in [K]}\E\left[ \E[\Delta(i\mid X)\mid \Sigma_i^{\mathrm{pre}}] h(i\mid X)  \right]\,.
\end{equation}
We see immediately that if $\E[\Delta(i\mid X)\mid \Sigma_i^{\mathrm{pre}}] = 0$ for $\forall i\in [K]$, the GDE is satisfied.

The right-hand side of \cref{eq:bvg-2} equals \[
\begin{split}
    & \sum_{i\in [K]}\E\left[ \E[\Delta(i\mid X)\mid \Sigma_i^{\mathrm{pre}}] h(i\mid X)  \right] \\
    ={}&   \int_0^1 \sum_{i\in [K]}\left( q - \Pr(Y=i\mid h(i\mid X) = q ) \right) q\Pr(h(i\mid X)=q)dq\\
    ={}& \int_0^1 \left( q\sum_{i\in [K]} \Pr(h(i\mid X)=q) - \sum_{i\in [K]}\Pr(Y=i, h(i\mid X) = q ) \right) qdq\,.
\end{split}
\]
In the first equality, we expand the expectations. To compute the outer expectation, we condition on the prediction $h(i|X)$ returned by the model for class index $i$ and integrate with respect to the conditional probability distribution $h(i|X)$. The inner expectation is taken over all $X$ such that the model for class index $i$ returns $q$, the value that we condition on. In the last equality, we apply the conditional probability rule $\Pr(Y=i, h(i\mid X) = q ) = \Pr(Y=i\mid h(i\mid X) = q ) \Pr(h(i\mid X)=q)$.

Taking the absolute value gives \[
\begin{split}
    & \left|\E_{\theta,\theta'}[\dis_{\Pr}(h_{\theta},h_{\theta'})]-\E_\theta [\terr_{\Pr}(h_\theta)]\right| \\
    \le{}& \int_0^1 \left| \left( q\sum_{i\in [K]} \Pr(h(i\mid X)=q) - \sum_{i\in [K]}\Pr(Y=i, h(i\mid X) = q ) \right)\right| qdq\\
    \le{}& \int_0^1 \left| \left( q\sum_{i\in [K]} \Pr(h(i\mid X)=q) - \sum_{i\in [K]}\Pr(Y=i, h(i\mid X) = q ) \right)\right| dq\\
    ={}& \CACE_{\Pr,h}\,,
\end{split}
\]
where the last inequality uses $q\in [0,1]$.
\end{proof}

\section{Further Results on Neural Collapse and the Bias-Variance Correlation}

\subsection{Properties of simplex equiangular tight frame (ETF)}\label{sub:properties-WETF}
$W^\mathrm{ETF}$ has the following properties: First, it is symmetric.
Second, the inner product between any two distinct columns is equal to $-\frac{1}{K-1}$.
Third, this pairwise distance is maximized, i.e., there does not exist any matrix where the inner product between any two distinct pairs of columns are smaller than $-\frac{1}{K-1}$.

\subsection{Verifying Assumption~\ref{assumption:neural-collapse}}
\label{sec:verify-nc-assumption}

As stated in Assumption~\ref{assumption:neural-collapse}, we assume that the logits of a deep network for any test sample $(X, Y)$ are drawn from the following distribution:
\begin{equation}\label{eq:neural-collapse-assumption-rewrite}
    W^\mathrm{ETF} (s  w_Y^\mathrm{ETF} + v) = \frac{sK}{K-1}e_Y + \sqrt{\frac{K}{K-1}}v,
\end{equation}
where the equality follows from the definition of $W^\mathrm{ETF}$.
In particular, $v$ above is a random vector with i.i.d. entries drawn according to $-\beta \sqrt{\frac{K}{K-1}} v_i \sim \gumbel(\mu, \beta)$.
In this section, we verify this assumption from two perspectives. 
First, we will plot the distributions of logits in practical neural networks, and show that they align with \eqref{eq:neural-collapse-assumption-rewrite}.
Second, we will show through simulation that, if the logits are generated according to \eqref{eq:neural-collapse-assumption-rewrite}, then we observe bias-variance alignment. 

\paragraph{Distribution of logits. }
From \eqref{eq:neural-collapse-assumption-rewrite}, the logits corresponding to the correct class (i.e., $Y$) and any incorrect class $Y' \ne Y$ are given by
\begin{equation}\label{eq:neural-collapse-logit-assumption}
    \frac{sK}{K-1} + \sqrt{\frac{K}{K-1}}v_Y, ~~\text{and}~~ \sqrt{\frac{K}{K-1}}v_{Y'},
\end{equation}
respectively. 
In particular, since Gumbel distribution has a unimodal shaped probability density function, the distribution of both the positive and all the negative logits have unimodal shape according to \eqref{eq:neural-collapse-logit-assumption}. 
To verify that this aligns with the practical observations, we calculate the
distributions of logit values on various datasets and model architectures,
for both positive classes and negative classes.
The results are presented in Figure~\ref{fig:histogram_logits}.
We observe unimodal logit distributions for both positive and negative classes in all cases.
On the other hand, one may notice that while \eqref{eq:neural-collapse-logit-assumption} predicts the positive and negative logits to have different biases but the same variance, in many cases from Figure~\ref{fig:histogram_logits}, the positive and negative logits have notable different variances. 
Hence, Assumption~\ref{assumption:neural-collapse} is used as a simplified model that makes our theoretical analysis tractable, but is not meant to perfectly model the distribution of logits in practice.
We will show next that such a simplified model is sufficient for producing the bias-variance alignment phenomenon that we observe in practice. 

\begin{figure*}[ht]
    \centering
    \resizebox{0.99\linewidth}{!}{%
        \subcaptionbox{CIFAR 10 - ResNet 8}{
          \includegraphics[scale=0.32]{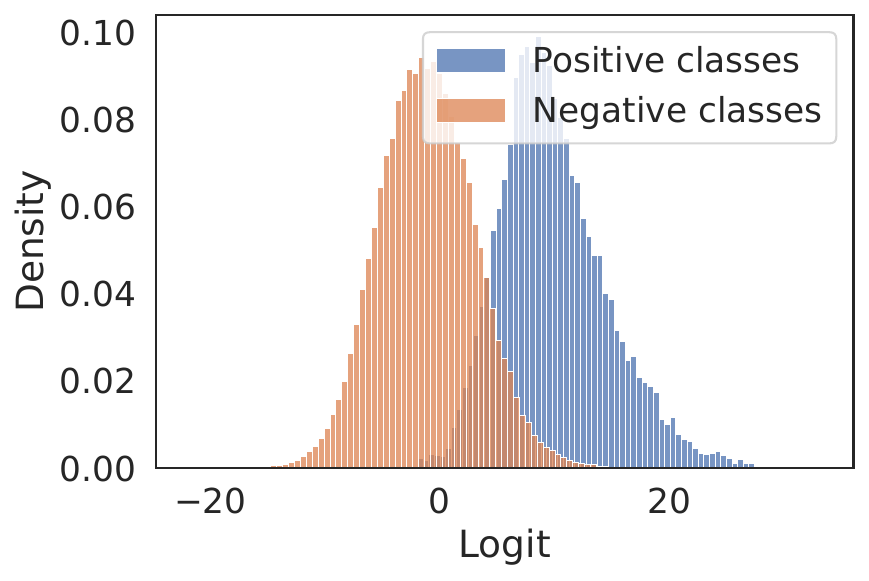}
        }
        \subcaptionbox{CIFAR 10 - ResNet 56}{
         \includegraphics[scale=0.32]{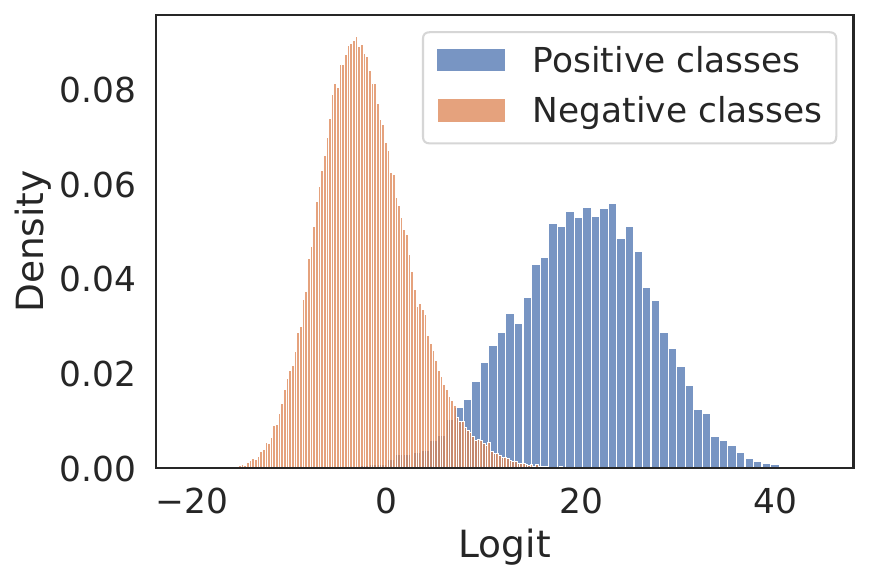}
        }
        \subcaptionbox{CIFAR 10 - ResNet 110}{
         \includegraphics[scale=0.32]{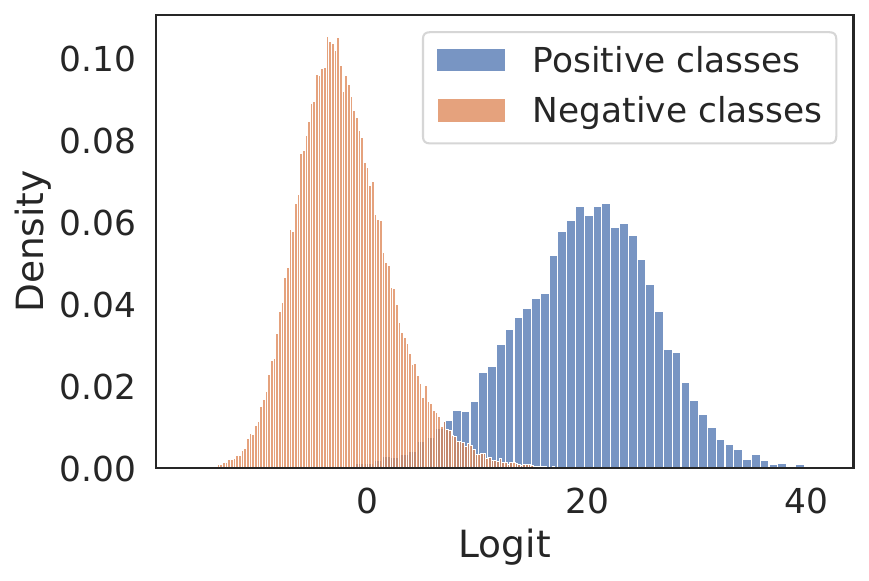}
        }
    }
    \resizebox{0.99\linewidth}{!}{%
        \subcaptionbox{CIFAR 100 - ResNet 8}{
         \includegraphics[scale=0.32]{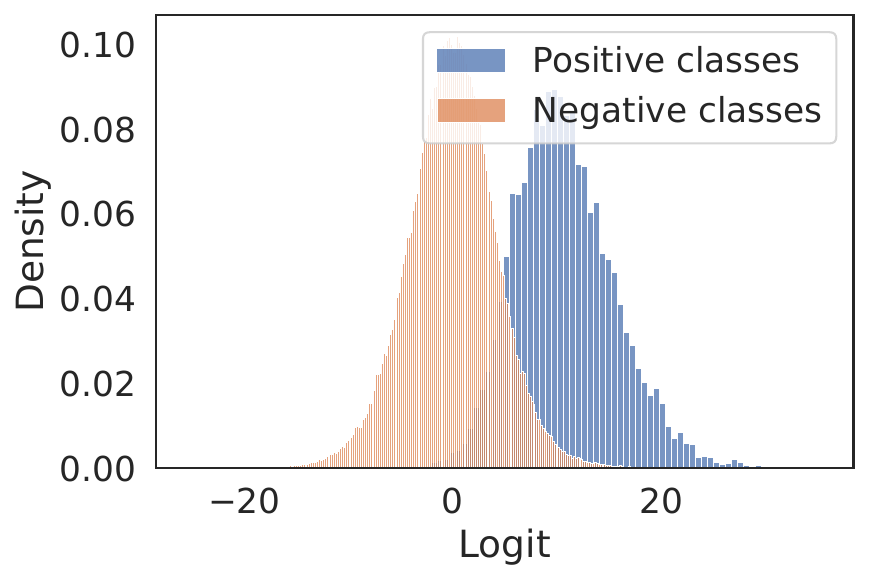}
        }
        \subcaptionbox{CIFAR 100 - ResNet 56}{
         \includegraphics[scale=0.32]{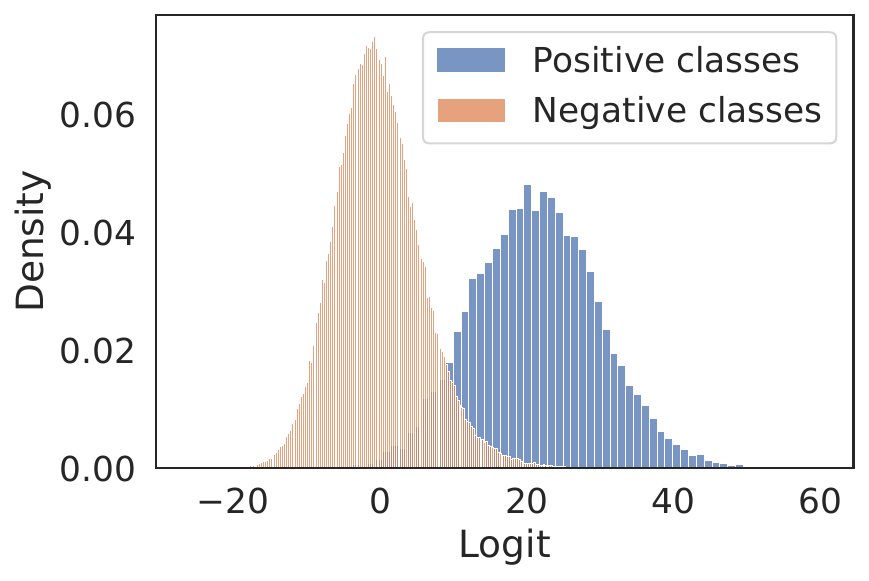}
        }
        \subcaptionbox{CIFAR 100 - ResNet 110}{
         \includegraphics[scale=0.32]{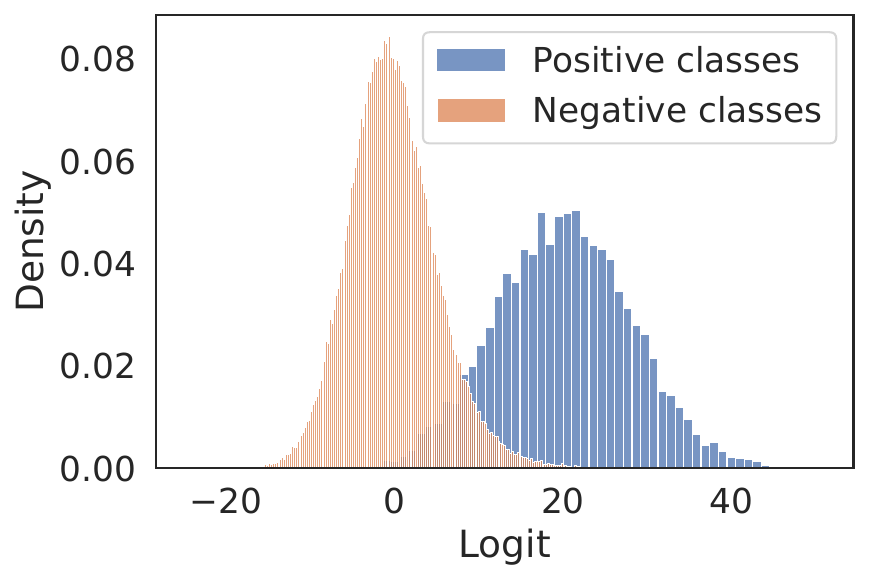}
        }
    }
        \resizebox{0.99\linewidth}{!}{%
        \subcaptionbox{ImageNet - MobileNet V2}{
         \includegraphics[scale=0.32]{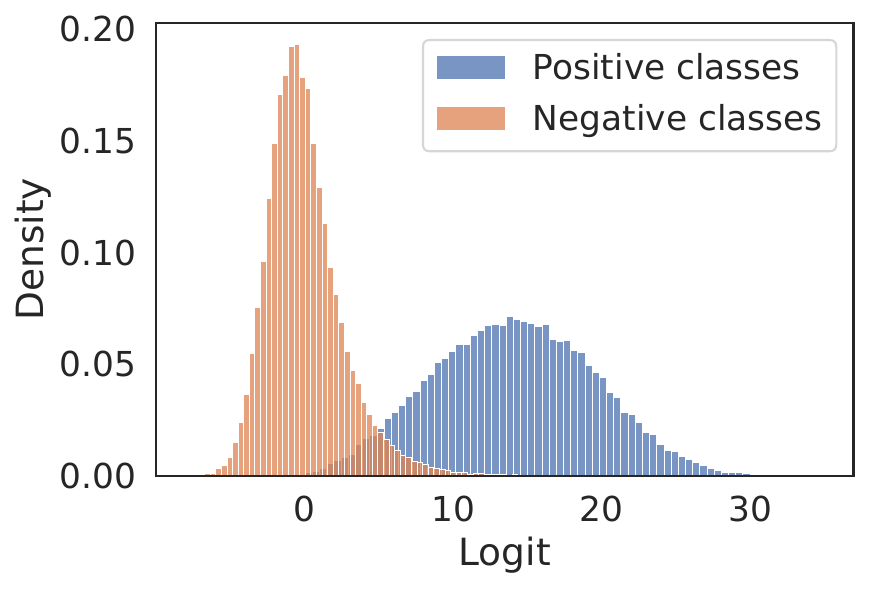}
        }
        \subcaptionbox{ImageNet - EfficientNet B0}{
         \includegraphics[scale=0.32]{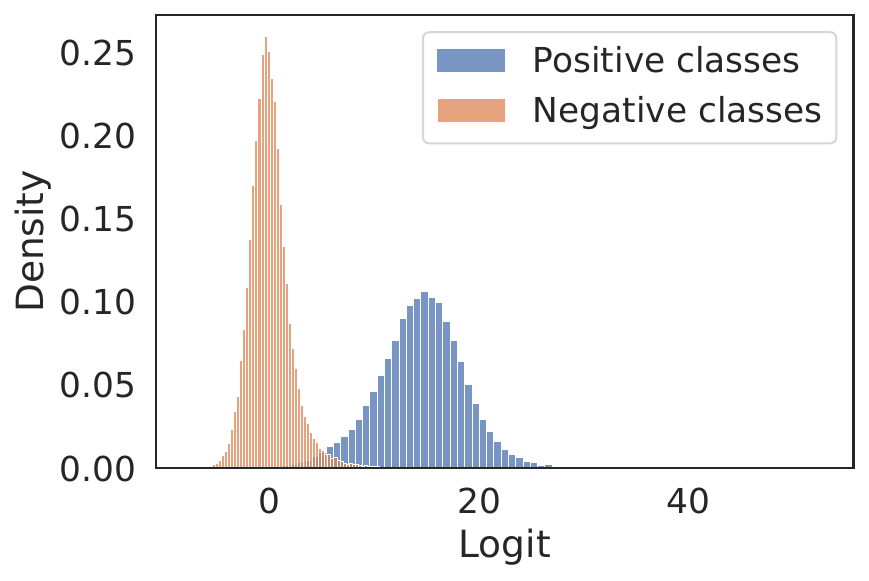}
        }
        \subcaptionbox{ImageNet - ResNet 50}{
         \includegraphics[scale=0.32]{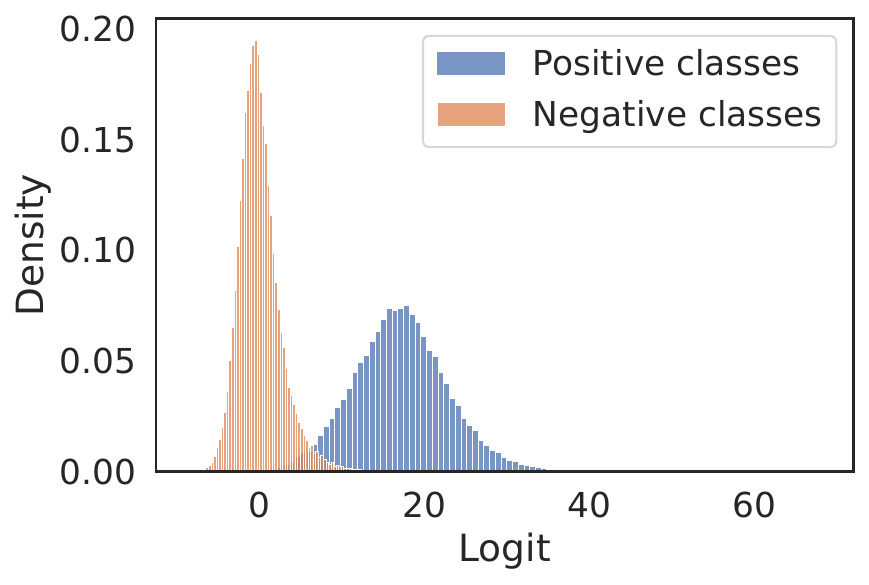}
        }
    }
    \caption{Distribution of logits for positive and negative classes.}
    \label{fig:histogram_logits}
\end{figure*}

\paragraph{Synthesizing bias-variance alignment. }
To justify Assumption~\ref{assumption:neural-collapse}, we synthetically generate a collection of logit vectors according to \eqref{eq:neural-collapse-logit-assumption}, and plot the sample-wise bias and variance obtained from the logit vectors. 
Specifically, given any number $n$ as the number of samples, and $K$ as the number of samples, we first generate a collection of $n$ random labels where each label is drawn uniformly at random from $[K]$. 
For each label, we sample $T$ logit vectors independently according to \eqref{eq:neural-collapse-logit-assumption} (for the Gumbel distribution, we take $\mu = 0$ and $\beta = 1$). 
Here, $T$ is interpreted as the number of independently trained models for estimating bias and variance. 

The results with $n = 200, K = 2$, and $T = 10$, under varying choices of $s \in \{5, 10, 20, 100\}$ are reported in \cref{fig:synthesized_BV}.
In all cases, we observe a clear bias-variance alignment. 

\begin{figure}[ht]
\vspace{-1em}
\centering  
    \includegraphics[width=\textwidth]{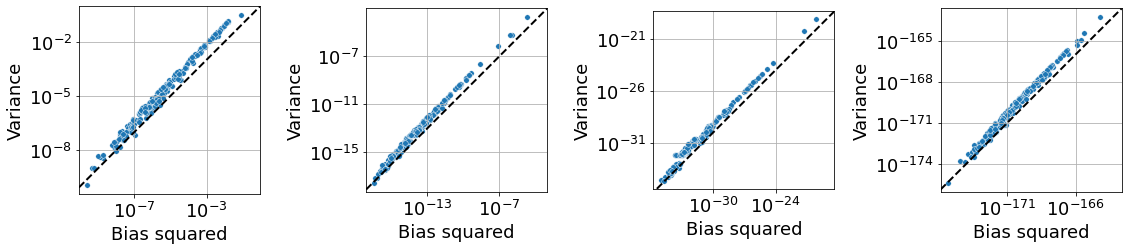}
    \caption{Sample-wise bias and variance for synthetic data generated according to \eqref{eq:neural-collapse-logit-assumption}. From left to right, $s$ is varied in the set $\{5, 10, 20, 100\}$. }
    \label{fig:synthesized_BV}
\vspace{-0.5em}
\end{figure}

\subsection{Verifying Corollary~\ref{cor:K=2}: Binary classification}
\label{sec:cifar2}

\begin{figure*}[ht]
\centering
    \resizebox{0.8\linewidth}{!}{%
        \subcaptionbox{ResNet 8}{
          \includegraphics[scale=0.32]{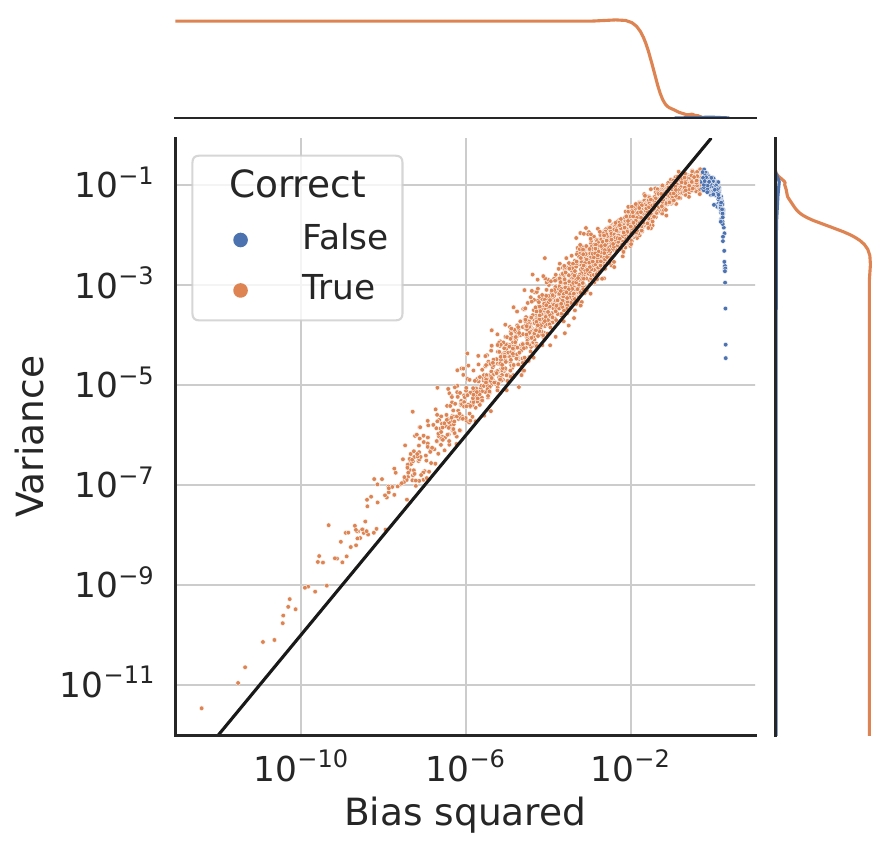}
        }
        \subcaptionbox{ResNet 14}{
         \includegraphics[scale=0.32]{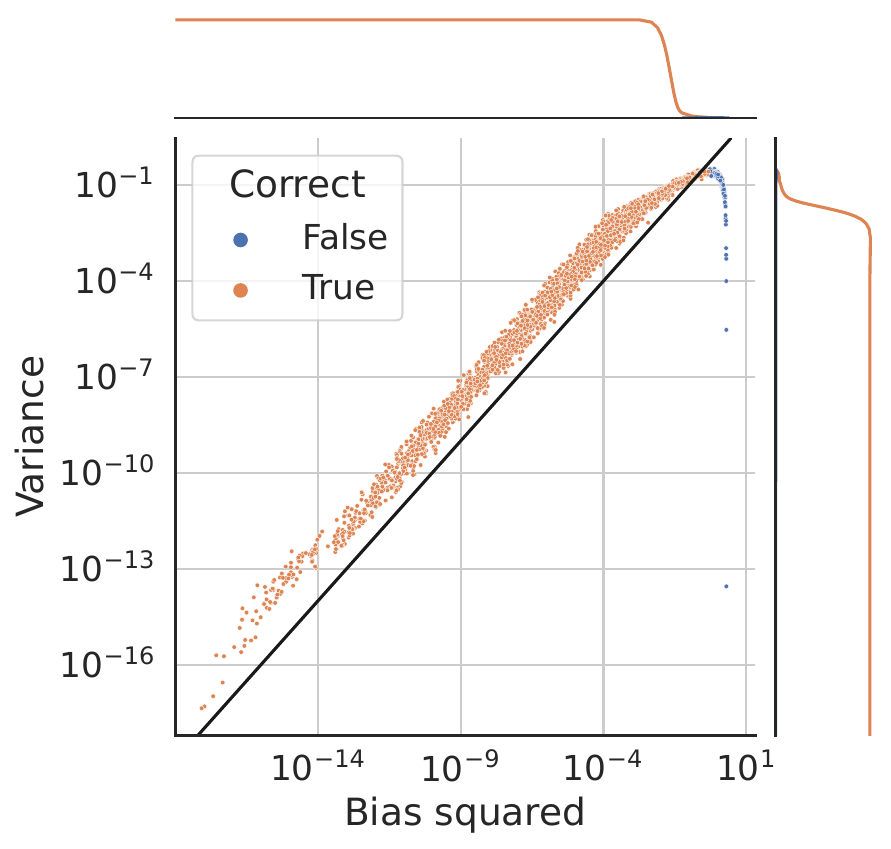}
        }
    }
    \resizebox{0.8\linewidth}{!}{%
        \subcaptionbox{ResNet 56}{
         \includegraphics[scale=0.32]{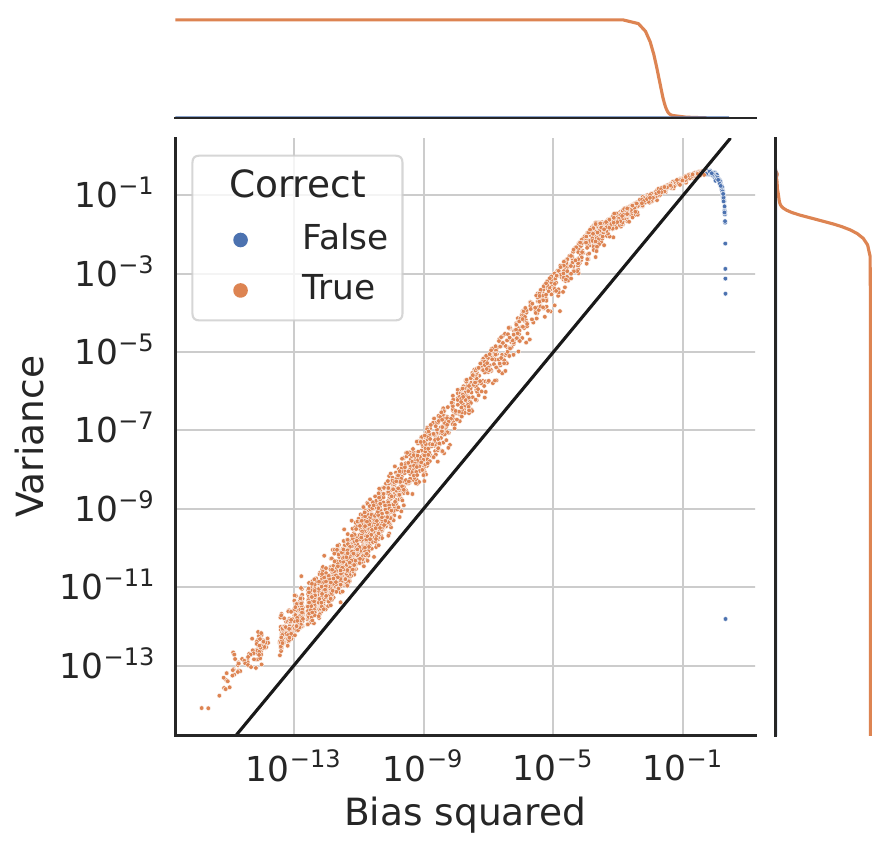}
        }
        \subcaptionbox{ResNet 110}{
         \includegraphics[scale=0.32]{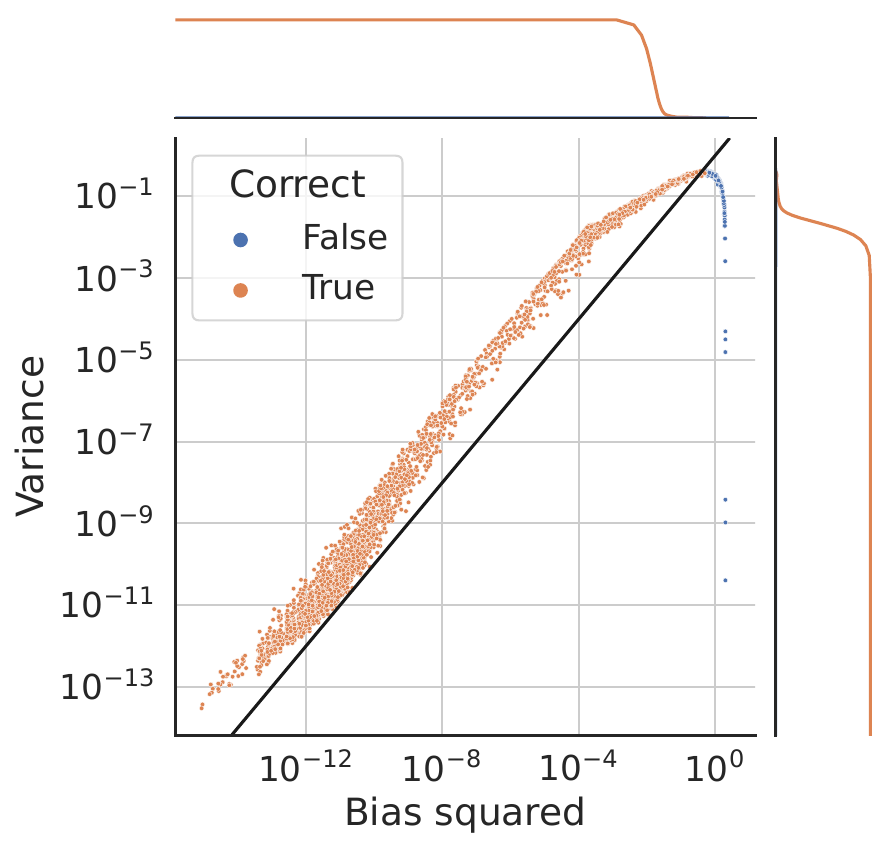}
        }
    }
    \caption{Squared bias and variance computed based on various model sizes on the CIFAR-2 problem. See \Cref{sec:cifar2} for details.}
    \label{fig:cf2_aignment}
\end{figure*}

We note that the Neural collapse theory relies on the binary classification assumption.
To ensure that the bias-variance alignment results hold for such a setup empirically, %
we construct a binary classification problem based on the CIFAR-10 dataset: each example in the first five classes is assigned label 0, and each eample in the last five classes is assigned label 1.
We call the resulting dataset CIFAR-2.
The results are shown in \Cref{fig:cf2_aignment}.

\subsection{Relationship between Gumbel and exponential distribution}
\begin{lemma}
\label{lem:gumbel-exponential}
Let $X \sim \gumbel(\mu, \beta)$. Then, $e^{-X/\beta} \sim \Exp(e^{\mu/\beta})$.
\end{lemma}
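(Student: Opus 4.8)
The plan is to verify the distributional identity directly at the level of cumulative distribution functions, which is the cleanest route since both distributions involved are one-dimensional with closed-form CDFs. Recall that $X \sim \gumbel(\mu,\beta)$ means $\Pr(X \le x) = \exp\!\big(-\exp(-(x-\mu)/\beta)\big)$ for all $x \in \reals$, and that $\Exp(\lambda)$, in the rate parametrization consistent with its use in \cref{assumption:neural-collapse}, has CDF $y \mapsto 1 - e^{-\lambda y}$ on $[0,\infty)$.

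First I would set $Y \triangleq e^{-X/\beta}$ and observe that $Y > 0$ almost surely since $X$ is real-valued, so it suffices to compute $\Pr(Y \le y)$ for $y > 0$. Because $x \mapsto e^{-x/\beta}$ is strictly decreasing, the event $\{Y \le y\}$ coincides with $\{X \ge -\beta \log y\}$, and using continuity of the Gumbel CDF $F$ (so that $\Pr(X < t) = F(t)$) we get
\[
\Pr(Y \le y) = 1 - F(-\beta\log y) = 1 - \exp\!\Big(-\exp\big(-(-\beta\log y - \mu)/\beta\big)\Big) = 1 - \exp\!\big(-y\, e^{\mu/\beta}\big),
\]
where the final step uses $\exp(\log y + \mu/\beta) = y\, e^{\mu/\beta}$. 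The right-hand side is precisely the CDF of $\Exp(e^{\mu/\beta})$; since a distribution on $\reals$ is determined by its CDF, this yields $Y \sim \Exp(e^{\mu/\beta})$, as claimed.

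There is essentially no real obstacle here — the argument is a one-step change of variables — so the only points requiring care are bookkeeping: (i) tracking the sign correctly when inverting the decreasing map $x \mapsto e^{-x/\beta}$, and (ii) stating the exponential parametrization (rate versus scale) consistently with the rest of the paper, so that the resulting rate is $e^{\mu/\beta}$ (equivalently, mean $e^{-\mu/\beta}$). As an alternative one could instead match probability density functions via $f_Y(y) = f_X(-\beta\log y)\cdot \beta/y$, but the CDF computation is shorter and avoids carrying the Jacobian factor.
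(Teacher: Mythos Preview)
Your proposal is correct and follows essentially the same approach as the paper: both compute the CDF of $e^{-X/\beta}$ by inverting the monotone map and substituting into the Gumbel CDF to recover the exponential CDF with rate $e^{\mu/\beta}$. Your write-up is slightly more explicit about the decreasing nature of $x\mapsto e^{-x/\beta}$ and the parametrization, but the argument is the same one-step change of variables.
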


\begin{proof}
Recall that the cumulative distribution function (CDF) of the Gumbel distribution is given by

\begin{equation}
\Pr(X \leq x) = e^{-e^{-(x-\mu)/\beta}}.
\end{equation}

Thus we get \begin{equation}
    \Pr(e^{-X/\beta}\le e^{-x/\beta}) =1- e^{-e^{-(x-\mu)/\beta}}\,.
\end{equation}

Substituting $t = e^{-x/\beta}$, we get

\begin{equation}
    \Pr(e^{-X/\beta} \le t) = 1-e^{-te^{\mu/\beta}}\,.
\end{equation}

This is the CDF of $\Exp(e^{\mu/\beta})$.
\end{proof}

\subsection{Proof of \cref{thm:neural-collapse}}

\begin{proof}[Proof of \cref{thm:neural-collapse}]
 As the first step, we compute the output of function $h_\theta$ 
\begin{equation}
\begin{split}
 h_\theta(X) ={}& \softmax\left(W \psi_\tau(X)\right) \\
={}& \softmax\left(W^{\mathrm{ETF}} R^\top \left( R\left(s w_Y^\mathrm{ETF} + v\right)\right)\right)\\
={}& \softmax\left(\sqrt{\frac{K}{K-1}}\left( s\sqrt{\frac{K}{K-1}} \left(e_Y - \frac{1}{K}\mathbf{1}_K\right) + v \right)\right)\\
={}& \softmax\left(\frac{sK}{K-1}e_Y + \sqrt{\frac{K}{K-1}}v\right)             
\end{split}
\end{equation}
Let us denote $w \triangleq h_{\theta}(\cdot\mid X) $. Without loss of generality and for the ease of presentation, we label the $Y$-th entry as the first entry ($Y=1$). Moreover, we introduce the shorthand notation $u\triangleq \sqrt{\frac{K}{K-1}}v$. Then we have \begin{equation}
\begin{split}
    s ={}& \softmax\left(\frac{sK}{K'}e_1 + u\right)\\
    ={}& \left( \frac{ae^{u_1}}{ae^{u_1} + e^{u_2} + \cdots + e^{u_K}} , \frac{e^{u_2}}{ae^{u_1} + e^{u_2} + \cdots + e^{u_K}}, \dots, \frac{e^{u_K}}{ae^{u_1} + e^{u_2} + \cdots + e^{u_K}} \right)^\top
\end{split}
\end{equation}
where $a = e^{sK/K'}$. In light of \cref{lem:gumbel-exponential}, since $-\beta u_i \sim \gumbel(\mu, \beta)$ are i.i.d., then $v_i \triangleq e^{u_i} \sim \Exp(\lambda)$ where $\lambda \triangleq e^{\mu/\beta}$.

Let us look at the first entry $w_1$ of $s = \softmax(re_1 + u)$. It equals

\begin{equation}
\begin{aligned}
w_1 = \frac{ae^{u_1}}{ae^{u_1} + e^{u_2} + \cdots + e^{u_K}} = \frac{a}{a + \frac{e^{u_2} + \cdots + e^{u_K}}{e^{u_1}}} = \frac{a}{a + (K-1)F} = \frac{c}{c+F} \,,
\end{aligned}
\end{equation}

where $c = \frac{a}{K-1} = \frac{e^{sK/K'}}{K'} $ and $F = \frac{(e^{u_2} + \cdots + e^{u_K}) / (2(K - 1))}{e^{u_1} / 2} = \frac{(e^{u_2} + \cdots + e^{u_K}) / (K - 1)}{e^{u_1}} \sim \F(2(K - 1), 2)$ follows the $\F$ distribution.

The expectation of $w_1/c$ is given by

\begin{align}
    \E\left[\frac{w_1}{c}\right]={}&\E_{F\sim \F(2K',2)}\left[ \frac{1}{c+F} \right]\\
    ={}& \frac{1}{K'\Beta(K',1)}\int_0^\infty \frac{x^{K'-1}}{(c+x)(x+1/K')^{K'+1}} dx\\
    ={}& \frac{c^{K'-1} \left(c-\frac{1}{K'}\right)^{-K'} (c K'-K' \log (c K')-1)}{c K'-1} \\
    & + \frac{\left(c-\frac{1}{K'}\right)^{-K'-1}}{K'}\sum_{j=1}^{K'-1} \frac{(K'-j) (c-\frac{1}{K'})^j c^{-j+K'-1}}{j}\\
    ={}& \phi_{K'}(c)\,.
\end{align}

As a result, the squared bias of the first entry $w_1$ is \begin{equation}
    \ebias(1) = \left|\E w_1 - 1\right| = \left|c\phi_{K'}(c)-1\right|\,. 
\end{equation}

To get the variance of the first entry $w_1$, we compute its second moment as the first step: \begin{equation}
    \E [w_1^2] = c^2 \E\left[\frac{1}{(c+F)^2}\right] = -c^2 \frac{d}{dc}\E\left[\frac{1}{c+F}\right] = -c^2 \frac{d\phi_{K'}(c)}{dc}\,.
\end{equation} Therefore, it follows that \begin{equation}
    \Vari(1) = \E [w_1^2] - \E[w_1]^2 = -c^2 \frac{d\phi_{K'}(c)}{dc} - c^2\phi_{K'}(c)^2\,,
\end{equation} which yields \begin{equation}
    \evari(1) = \sqrt{\Vari(1)} = c\sqrt{-\left(\frac{d\phi_{K'}(c)}{dc} + \phi_{K'}(c)^2\right)}\,.
\end{equation}
\end{proof}

\subsection{Proof of \cref{cor:K=2}}
\begin{proof}[Proof of \cref{cor:K=2}]
If $K = 2$, we get $\E_{F\sim \F(2,2)}[\frac{1}{c+F}] = \frac{c-\log (c)-1}{(c-1)^2}$. As in the Proof of \cref{thm:neural-collapse},  without loss of generality and for the ease of presentation, we label the $Y$-th entry as the first entry ($Y=1$). We then have the following expectations:
 \begin{equation*}
 \begin{split}
    & \E_u[w_1] = \frac{c (c-\log (c)-1)}{(c-1)^2},\quad \E_u[w_2] = \frac{-c+c \log (c)+1}{(c-1)^2} \\
    & \E_{F\sim \F(2,2)}\left[ \frac{1}{(c+F)^2} \right] = -\frac{\partial}{\partial c}\E_{F\sim \F(2,2)}[\frac{1}{c+F}] = \frac{c^2-2 c \log (c)-1}{(c-1)^3 c}
 \end{split}
\end{equation*}

To obtain the variance of $w_1,w_2$, we first calculate the second moment:
 \begin{align}
    \E_u[w_1^2] ={}& \frac{c^2(c^2-2 c \log (c)-1)}{(c-1)^3 c}\\
    \E_u[w_2^2] ={}& \E_u[(1-w_1)^2] = \frac{c^2-2 c \log (c)-1}{(c-1)^3}
\end{align}

As a result, we have \begin{align}
    \Var_u[w_1] = \Var_u[w_2] ={} \frac{c \left((c-1)^2-c \log ^2(c)\right)}{(c-1)^4}
\end{align}
Therefore, we obtain \[
\begin{split}
    \ebias(1) & = \left|\E_u[w_1]-1\right| = \frac{\left| \log (c) c-c+1\right| }{(c-1)^2} \,,\\
    \ebias(2) & = \left|\E_u[w_2]-0\right| = \ebias(1)\\
    \evari(1) & = \evari(2) = \sqrt{\Var_u[w_1] } =  \frac{\sqrt{c \left((c-1)^2-c \log ^2(c)\right)}}{(c-1)^2}\,.
\end{split}
\]

The ratio $\frac{\ebias(i)}{\evari(i)} = \frac{|-c+c \log (c)+1|}{\sqrt{c \left((c-1)^2-c \log ^2(c)\right)}}$ is a decreasing function of $c\in (1,\infty)$ and $\lim_{c\to 1^+} \frac{\ebias(i)}{\evari(i)} = \sqrt{3}$. Therefore, the ratio $\frac{\ebias(i)}{\evari(i)} \le \sqrt{3} < 1.74$ for $c>1$. 
To show $\frac{\ebias(i)}{\evari(i)} = \frac{|-c+c \log (c)+1|}{\sqrt{c \left((c-1)^2-c \log ^2(c)\right)}} \ge \frac{\log c-1}{\sqrt{c}} \equiv \frac{2s-1}{e^s}$, it suffices to prove \[
(-c+c \log (c)+1)^2\ge \left((c-1)^2-c \log ^2(c)\right) (\log (c)-1)^2\,,
\]
which is equivalent to \[
\log (c) \left(-2 c+c \log ^3(c)-2 c \log ^2(c)+(3 c-1) \log (c)+2\right) \triangleq \log(c)f(c) \ge 0\,.
\]
Since $f'(c)=-\frac{1}{c}+\log ^3(c)+\log ^2(c)-\log (c)+1\ge 0$ and $f(1)=0$, we complete the proof for $\frac{\ebias(i)}{\evari(i)} \ge \frac{\log c-1}{\sqrt{c}}$. On the log scale, $\frac{\log\Bias_{h_\theta, (X, Y)}(i)}{\log\Vari(i)} = \frac{\log \left(\frac{(-c+c \log (c)+1)^2}{(c-1)^4}\right)}{\log \left(\frac{c \left((c-1)^2-c \log ^2(c)\right)}{(c-1)^4}\right)}$ is a monotone function for $\forall c > 1$. As $c$ approaches 1 from the right, the limit of the function is $\frac{\log (4)}{\log (12)} > 0.557$. As $c$ approaches infinity, the limit of the function is 2.
\end{proof}

\end{document}